\pdfoutput=1
\documentclass[letterpaper]{article}
\usepackage[totalwidth=480pt, totalheight=680pt]{geometry}
\usepackage{macro}

\usepackage{authblk}

\title{On the Number of Conditional Independence Tests\\in
Constraint-based Causal Discovery}

\author[1,2,$\dagger$,\footnote{
        M. Franquesa Mon\'{e}s contributed to this work while
        visiting the Institute for Data, Systems, and Society at MIT.
}]{Marc Franquesa Mon\'{e}s}
\author[1,3,$\dagger$]{Jiaqi Zhang}
\author[1,3]{Caroline Uhler}
\affil[1]{Eric and Wendy Schmidt Center, Broad Institute of MIT and Harvard}
\affil[2]{Centre de Formaci\'{o} Interdisciplin\`{a}ria Superior,
Universitat Polit\`{e}cnica de Catalunya}
\affil[3]{Laboratory for Information and Decision Systems,
Massachusetts Institute of Technology}
\affil[$\dagger$]{Equal Contribution}

\date{}

\begin{document}

\maketitle

\begin{abstract}
    Learning causal relations from observational data is a
fundamental problem with wide-ranging applications across many fields.
Constraint-based methods infer the underlying causal
structure by performing conditional independence tests. 
However, existing algorithms such as the prominent \texttt{PC} algorithm need to perform
a large number of independence tests, which in the worst case is
exponential in the maximum degree of the causal graph.
Despite extensive research,
it remains unclear if there exist algorithms with better
complexity without additional assumptions.
Here, we establish an algorithm that achieves a better complexity of $p^{\mathcal{O}(s)}$ tests, where $p$ is the number of nodes in the graph and $s$ denotes the maximum undirected clique size of the underlying essential graph.
Complementing this result, we prove that any constraint-based algorithm
must perform at least $2^{\Omega(s)}$ conditional
independence tests, establishing that our proposed algorithm
achieves exponent-optimality up to a logarithmic factor in terms of the number of conditional
independence tests needed.
Finally, we validate our theoretical findings through
simulations, on semi-synthetic gene-expression data, and real-world
data, demonstrating the efficiency of our algorithm compared to
existing methods in terms of number of conditional independence tests needed.
 \end{abstract}

\section{Introduction}\label{sec:intro}

Inferring causal relationships between variables is a fundamental
challenge across many scientific domains
\citep{wright1921correlation,haavelmo1943statistical,duncan1975introduction}.
Directed acyclic graphs (DAGs) are a popular framework for
representing causal structures, where nodes correspond to
random variables and directed edges denote direct causal effects.
The objective of causal structure discovery is to identify these edges and
their orientations from data on the nodes.
With observational data and no additional assumptions, it is known
that the underlying causal DAG is only identifiable up to its Markov
equivalence class~\citep{verma1990}, which can be represented by
a partially directed graph known as the \emph{essential graph}.

Constraint-based causal structure discovery methods use conditional
independence (CI) tests to infer the underlying essential graph, the
most prominent example being the \texttt{PC} algorithm \citep{spirtes2001}: (1)
it begins with a fully connected undirected graph and iteratively
removes edges between variables if they are found to be conditionally
independent given some conditioning set; (2) edge orientations are
inferred based on cases where variables that are conditionally
independent become dependent when conditioned on a common
child—corresponding to v-structures in the underlying DAG; (3)
possibly additional orientations are inferred using a set of logical
rules known as the Meek rules~\citep{meek1995}.
Building on this approach, many constraint-based algorithms
have been introduced to address more general and diverse settings,
e.g., including in the presence of latent
variables~\citep{verma1990,spirtes1989causality,spirtes2001,colombo2012learning,colombo2014order,sondhi2019reduced}.

\ifbool{aistats}{
    {\let\thefootnote\relax\footnotetext{$^*$Equal contribution. M.F.M. contributed to this work while
        visiting the Institute for Data, Systems, and Society at MIT.}}
}{}

Despite significant progress in constraint-based algorithms, it has
been observed that their efficiency, both in terms of computational
speed and correctness, tend to degrade when scaling to larger or
denser graphs~\citep{spirtes2001,kalisch2007estimating}.
The main bottleneck of the \texttt{PC} algorithm (and other
algorithms of similar nature) arises from the adjacency search
(step (1) described above), which requires a large number of CI
tests, exponential in the maximum degree of the underlying DAG in the
worst case~\citep{spirtes2001,claassen2013learning}.
Performing a large number of CI tests not only exacerbates the
computational burden but may also impact accuracy, since in the
finite sample regime guaranteeing the correctness of each CI test
requires strong
assumptions~\citep{uhler2013geometry,teh2024general,mazaheri2025faithfulness}.
This motivates the development of causal structure discovery
algorithms that minimize the number of CI tests that need to be performed.

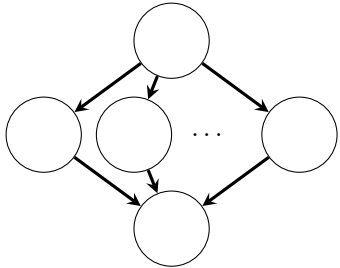
\begin{figure}[htpb]
    \centering
    \scalebox{\ifbool{aistats}{0.5}{1}}{\begin{tikzpicture}
    \node[base] (0) at (0,0) {};

    \node[base] (1) at (0,-2.5) {};

    \node[base] (2) at (-1.7,-1.25) {};
    \node[base] (3) at (-0.5,-1.25) {};
    \node[] (dots) at (0.5,-1.25) {\dots};
    \node[base] (n) at (1.7,-1.25) {};

    \draw[dedge] (0) -- (2);
    \draw[dedge] (2) -- (1);

    \draw[dedge] (0) -- (3);
    \draw[dedge] (3) -- (1);

    \draw[dedge] (0) -- (n);
    \draw[dedge] (n) -- (1);
\end{tikzpicture}
     }
    \caption{
        A graph with $p$ nodes, where $d=p-2$, and $s=2$.
    }
    \label{fig:sep-graph}
\end{figure}

In this work, we present an algorithm, called \emph{\algofullname}
(\texttt{\algoname}), that is exponent-optimal up to a logarithmic
factor in terms
of the number of CI tests performed.
\texttt{\algoname} requires at most $p^{\mathcal{O}(s)}$ CI tests,\footnote{Here, we
    use the asymptotic notation $\mathcal{O}$
    (c.f.,~\citep{arora2009computational}) in the following way: a
    function $f:\mathcal{G}\mapsto \mathbb{R}$ is $p^{\mathcal{O}(s)}$ if
    and only if there exists a fixed constant $c\in\mathbb{R}$, such that
    for any DAG $\mathcal{G}$, it holds that $f(\mathcal{G})\leq
    p^{c\cdot s}$, where $p$ and $s$ are the number of nodes and the
    maximum undirected clique size of $\mathcal{G}$, respectively. We
define $2^{\Omega(s)}$ and $p^{\mathcal{O}(d)}$ in a similar way.}
where $p$ is the number of nodes in the DAG and $s$
denotes the size of the maximum undirected clique in the underlying
essential graph. The best upper bound for previous constraint-based
algorithms~\citep{richardson1996fci,spirtes2001,claassen2013learning,colombo2014order}
was $p^{\mathcal{O}{(d)}}$, where $d$ is the graph's maximum degree. Considering the degree of a node in the
maximum undirected clique of the essential graph, note that $s\leq
d+1$. In fact, $s$ is often much smaller than $d$; see, for example,
the DAG in Figure~\ref{fig:sep-graph}; thus, $p^{\mathcal{O}(s)}$ is
often much smaller than $p^{\mathcal{O}{(d)}}$. Complementing this
upper bound, we prove that any constraint-based
algorithm must perform $2^{\Omega(s)}$ CI tests utilizing a
technique introduced in \citet{zhang2024membership}, thereby
establishing that \texttt{GAS} achieves exponent-optimality up to a
logarithmic factor in terms of the number of CI tests needed. Since
$d$ can be of the size $\Omega(p\cdot s)$ (e.g., as in the graph in
Figure~\ref{fig:sep-graph}), the previous constraint-based algorithms
do not achieve exponent-optimality up to a logarithmic factor.
Finally, we empirically benchmark our algorithm against established
methods on both synthetic and real-world data, assessing both
computational efficiency and accuracy.

To provide some insight on how the reduction in number of CI tests is
achieved, consider the \texttt{PC} algorithm.
The adjacency search in step (1) of the \texttt{PC} algorithm is where the CI
tests are performed.
To reduce the number of CI tests compared to the \texttt{PC} algorithm, \texttt{GAS}
integrates steps (1) and (2) of the \texttt{PC} algorithm; namely, \texttt{GAS} focuses
on using CI tests to learn \emph{ancestral relationships}, which can
then be used to perform CI tests to uncover \emph{adjacencies} in a
more targeted way.
In particular, note that if all ancestral relationships were known,
then a single CI test would be sufficient to determine the
presence/absence of an edge.\footnote{All ancestral relationships
    being known is equivalent to being given a permutation
    $\bm{\pi}=(\pi_1,\dots,\pi_p)$ of the nodes that is \emph{consistent}
    with the DAG (i.e., if there is a directed edge $\pi_i\to\pi_j\in E$,
    then it has to hold that  $i<j$). In this case a single CI test is
    sufficient to determine the presence/absence of an edge, namely
    $\pi_i \to\pi_j$ for $i<j$ if and only if $X_{\pi_i}$ is not
    independent of $X_{\pi_j}$ given $X_S$ where $S=\{\pi_1, \pi_2, \dots
, \pi_{j-1}\}\setminus\{\pi_i\}$.}
As a consequence, the main difficulty of causal structure discovery
is to learn the ancestral relations, which motivates our approach of
concentrating on CI tests that provide ancestral information.
Recent work~\citep{shiragur2024causal} has shown that specific
ancestral relationships can be learned using a polynomial number of CI tests.
We show that, with appropriate modifications, this approach can be
extended to learn all the adjacencies with substantially fewer CI tests
than other methods.

\subsection{Related works}\label{sec:related-work}

Learning causal structures from observational data has been
extensively studied, with current methods primarily falling into
three categories \citep{kitson2023survey}: constraint-based
approaches \citep{verma1990, spirtes2001,kalisch2007estimating},
which utilize CI tests as constraints to learn the DAG; score-based approaches
\citep{chickering2002optimal,geiger2002parameter,brenner2013sparsityboost},
which search for DAGs that maximize a score function evaluating how
well the DAG represents the data; and hybrid approaches
\citep{schulte2010imap,alonso2013scaling,nandy2018high,solus2021consistency},
which combine both.

For constraint-based algorithms, the focus of this work, a major
challenge is their CI test complexity.
Algorithms such as \texttt{PC} have a CI test complexity that is exponential
in the graph's maximum degree~\citep{spirtes2001,claassen2013learning}.
While improvements exist that depend on other parameters like the
maximum in-degree \citep{mokhtarian2025recursive} or the maximum
separating set size \citep{sondhi2019reduced}, they face similar scalability
issues or require additional assumptions.

To address this complexity, several research directions have emerged.
Some methods leverage minimal conditional (in)dependencies to obtain
ancestral structures \citep{claassen2013learning, magliacane2016ancestral}.
Other recent work shifts the focus from learning the entire graph to
characterizing which parts of it can be learned under a constrained
number of CI tests \citep{shiragur2024causal} or with small
conditioning sets \citep{kocaoglu2023characterization}. Alternative
formulations, such as testing a hypothesized DAG, have also been
explored~\citep{zhang2024membership}.
While valuable, these approaches either change the problem's scope or
do not aim to recover the complete essential graph.
In this work, by contrast, we recover the full essential graph with
fewer number of CI tests.

While discrete-variable Bayesian network learning is known to be
NP-hard \citep{chickering2004large}, causal discovery with CI
oracles constitutes a distinct problem class that is not NP-hard
under bounded degree assumptions \citep{claassen2013learning}.
Under the causal sufficiency assumption, our derived bounds guarantee that the essential graph can be
recovered with a polynomial number of tests provided the size of
the undirected cliques is bounded.
This condition covers the case of bounded degree.
Furthermore, for the general case of unbounded clique sizes, our
lower bound suggests that constraint-based discovery with CI
oracles is not in NP, since verifying a specific Markov
equivalence class requires an exponential number of queries, but
remains within EXP as the number of all possible DAGs is bounded
by $2^{poly(p)}$.

Beyond computational efficiency, the correctness of any causal
discovery algorithm depends on statistical assumptions that connect
the observed data to the underlying DAG.
For classic algorithms like \texttt{PC},
correctness in the
infinite-sample limit is guaranteed by the Markov and faithfulness
assumptions~\citep{lauritzen1996,spirtes2001}.
Recent work provides a framework for characterizing the precise
correctness conditions for any given constraint-based algorithm,
enabling a formal comparison of the required
assumptions~\citep{sadeghi2017faithfulness,teh2024general}.
A related line of work explores how additional or redundant CI tests
can be used to detect and correct errors, thereby increasing
robustness~\citep{faller2025different}.
These approaches highlight a fundamental goal: to precisely
characterize the necessary and sufficient CI information required to
recover the correct causal structure.
In this work, we contribute to both fronts: we determine the
number of CI tests required to recover the essential graph, and we
show that our method requires strictly weaker correctness assumptions
than the standard combination of Markov and faithfulness.
Our empirical results further show that our algorithm achieves higher
accuracy than existing methods in denser graphs as well as on synthetic gene
expression data.

\subsection{Organization}

We begin by reviewing essential background and notation in
Section~\ref{sec:preliminaries}.
Our main theoretical results are summarized in Section~\ref{sec:main-results}.
The proposed algorithm, along with its correctness guarantees and
complexity, is presented in Section~\ref{sec:upper-bound}. In
Section~\ref{sec:lower-bound}, we establish a lower bound in terms of
number of CI tests, proving the optimality of our algorithm.
Empirical validation  is provided in Section~\ref{sec:experiments}.
We conclude with a brief discussion in Section~\ref{sec:discussion}.
 \section{Preliminaries}\label{sec:preliminaries}

\subsection{Graph definitions}

Let $ \mathcal{G}$ be a graph consisting of a finite set of $p$
nodes $ V $ and a set of edges $ E $.
For any two nodes $ u, v \in V $, we write $ u \sim v $ if they
are adjacent and $ u \not\sim v $ otherwise.
Edges can be undirected $ u - v $ or directed $ u \rightarrow v $, $
u \leftarrow v $.
For any subset of nodes $W \subseteq V$, we denote its complement
by ${\bar{W} = V \setminus W}$ and the subgraph it induces by $G[W]$.
A \emph{path} in $ \mathcal{G} $ is a sequence of nodes such that
consecutive nodes are adjacent.
A path that starts and ends at the same node is a \emph{cycle}.
A cycle is \emph{directed} if all its edges are directed and follow a
consistent orientation.
A \emph{clique} is a subset of nodes where every two nodes
are adjacent.
In an \emph{undirected clique} the nodes are connected by
undirected edges.
A graph containing directed and/or undirected edges is \emph{partially directed}.
A partially directed graph is a \emph{chain graph} if it has no
directed cycle.
The connected components of a chain graph formed by removing
all directed edges are known as \emph{chain components}.
An undirected graph is \textit{chordal} if every cycle of length four
or more has a chord, that is, an edge connecting two non-consecutive
nodes in the cycle.
A fully directed chain graph is called a \emph{directed acyclic graph} (DAG).
The \emph{skeleton} of a graph, denoted by $
\text{sk}(\mathcal{G}) $, is the undirected graph that results when
all directed edges are replaced by undirected edges.

In directed graphs, for any node $ v \in V $ we denote its sets of
\emph{parents}, \emph{ancestors}, \emph{children}, and
\emph{descendants} in $ \mathcal{G} $ by $ \Pa(v) $, $ \Anc(v) $, $
\Ch(v) $ and $ \Des(v) $, respectively.
We use square brackets for inclusive sets: e.g., $ \Anc[v] = \Anc(v) \cup
\{v\} $.
Following \citet{shiragur2024causal}, a subset $ S \subseteq V $ is
called a \emph{prefix node set} if for all $ v \in S $, $ \Anc[v]
\subseteq S $; in other words, a prefix node set is a subset of
nodes that is closed under ancestral relations.
For any subset $ W \subseteq V $, we define the set of \emph{source
nodes} within $ W $ as $ {\src(W) = \{v \in W \mid \Anc(v) \cap W =
\varnothing\} }$.
When necessary to distinguish the underlying graph, we employ
subscript notation, for example, $ \Pa_{\mathcal{G}} $, $ \src_{\mathcal{G}} $.
A node $ v $ is called a \emph{collider} on a path if its adjacent
nodes $ u $ and $ w $ on the path satisfies $ u \rightarrow v \leftarrow w $.
If these parent nodes $ u $ and $ w $ are not adjacent, this configuration is called a \emph{v-structure}.

\subsection{Markov equivalence classes}\label{sec:2.2}

DAGs are commonly used in causality
(\citealp{lauritzen1982,pearl1985,pearl2009causality}) where nodes
represent random variables and edges represent direct causal relationships.
A distribution $ \mathbb{P} $ is \emph{Markov}
relative to a DAG $ \mathcal{G} $, if it
factorizes as $ {\mathbb{P}(v_1,\dots,v_p)} =
{\prod_{i = 1}^{p} \mathbb{P}(v_i \mid \Pa(v_i)) }$.
This factorization implies a set of conditional independence
relations, which can be determined from $ \mathcal{G} $ using
\emph{d-separation} (\citealp{pearl1988}).
For disjoint node sets $ {A, B, C \subset V }$, $ A $
and $ B $ are \emph{d-separated} by $ C $ if all paths connecting $ A
$ and $ B $ are \emph{inactive} given $ C $.
A path is \emph{inactive} (or \emph{blocked}) given $ C $
if it contains at least one node $v$ satisfying one of the following conditions: (i) $
v $ is a non-collider on the path and $ v \in C $, or (ii) $ v $ is a
collider on the path and $ \Des[v] \cap C = \varnothing $.
Otherwise, the path is \emph{active} given $ C $.
We denote conditional independence in the distribution $ \mathbb{P} $
by $ A \CI B \mid C $, and d-separation in the DAG $ \mathcal{G} $ by
$ A \CI_{\mathcal{G}} B \mid C $.
For singleton sets, for example $ A = \{a\}$, we omit the braces for
simplicity, $ {a \CI B \mid C} $.
Additionally, we write $ A \CI B \mid C $ for potentially
overlapping sets to denote $ A \CI B \mid C \setminus (A \cup B) $.

Multiple DAGs can represent the same conditional independence
information (\citealp{verma1990}).
The set of all DAGs encoding the same independencies forms a
\emph{Markov equivalence class} (MEC), denoted $ [\mathcal{G}] $.
Markov equivalent DAGs are characterized by having the same skeleton
and the same v-structures, which enables a unique representation of an MEC by its
\emph{essential graph}, $ \mathcal{E}(\mathcal{G}) $ (\citealp{andersson1997}).
The essential graph $ \mathcal{E}(\mathcal{G}) $ is a chain graph that
shares the skeleton of the DAGs in $ [\mathcal{G}] $ and
contains a directed edge if and only if the orientation of the edge is the same in
every DAG belonging to the MEC.
These invariant edge orientations stem from v-structures and
orientations given by the \emph{Meek rules} \citep{meek1995,andersson1997}.

If a distribution $ \mathbb{P} $ is Markov with respect to a DAG $
\mathcal{G}$, then d-separation implies conditional independence; i.e., $ A
\CI_{\mathcal{G}} B \mid C \implies A \CI B \mid C $.
Under the \textit{faithfulness} assumption, the reverse implication holds.
When $ \mathbb{P} $ is both Markov and faithful relative to $
\mathcal{G} $, we say that $ \mathbb{P} $ \emph{respects} $ \mathcal{G} $,
signifying: $ A \CI B \mid C \iff A \CI_{\mathcal{G}} B \mid C $ .

 \section{Main results}\label{sec:main-results}

Under the Markov and faithfulness assumptions,\footnote{We remark
    here that we do not need
    to assume causal sufficiency for Theorem~\ref{thm:algorithm} to hold.
    Causal sufficiency needs to be assumed when one requires that
    $\mathcal{G}$ captures the complete causal explanation
\citep{meek1997graphical}.} we provide an algorithm, called
\emph{\algofullname} (\texttt{\algoname}), that achieves
optimality in terms of the number of CI tests. Our algorithm is given
in Section~\ref{sec:algorithm}, with the following guarantees.

\begin{restatable}{theorem}{thmAlgorithm}\label{thm:algorithm}
    Given infinite samples from a distribution respecting a DAG $
    \mathcal{G} $, \texttt{\algoname} (i.e.,
    Algorithm~\ref{alg:learning}) outputs $ \mathcal{E}(\mathcal{G})
    $ using $p^{\mathcal{O}(s)}$ CI tests.
    Here, $ s $ is the size of the maximum undirected clique in $
    \mathcal{E}(\mathcal{G}) $.
\end{restatable}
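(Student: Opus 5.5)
The plan is to build the essential graph by growing a prefix node set $S$ one layer at a time, following the prefix-set idea of \citet{shiragur2024causal}. Start with $S=\varnothing$ and repeat until $S=V$. At each step, use CI tests to identify a nonempty set $W\subseteq \bar S$ such that $S\cup W$ is again a prefix node set. Then learn the adjacencies from $W$ into $S\cup W$ using conditioning sets built from $S$, and add $W$ to $S$. There are at most $p$ iterations, so the bound $p^{\mathcal{O}(s)}$ follows if each iteration uses $p^{\mathcal{O}(s)}$ tests. Correctness of the final output will follow if we can show three things: the recovered skeleton is correct, the v-structures are exactly those detected, and applying the Meek rules at the end gives $\mathcal{E}(\mathcal{G})$ \citep{meek1995,andersson1997}.

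Step 1 finds the adjacencies of the current layer given a prefix set $S$. For $u\in\bar S$ and $v\in S$, the set $\Pa(u)\cap S$ d-separates $u$ from every non-adjacent $v\in S$. The reason is that any active path would have to leave through descendants of $u$, which lie outside $S$, and conditioning only on ancestral nodes cannot open colliders outside $S$. We do not know $\Pa(u)$, so instead we search over small conditioning sets: we test $u\CI v\mid T$ for $T\subseteq S$ with $|T|=\mathcal{O}(s)$. This takes $p^{\mathcal{O}(s)}$ tests. The point to prove is a separator-size claim: if $u\not\sim v$ with $v\in S$, then some separator inside $S$ has size $\mathcal{O}(s)$. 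I would argue this via the essential graph. The parents of $u$ in $S$ that are not already forced by v-structure orientations lie in an undirected clique of $\mathcal{E}(\mathcal{G})$, so there are at most $s$ of them. The remaining parents can be handled by replacing them with a separator of size at most $s$, obtained from the clique structure of the chordal chain components. I also expect to need an iterative version of this step. Source nodes of $\bar S$ are detected as the nodes $u$ for which, after removing the found non-adjacencies, no CI test with a conditioning set in $S$ indicates ancestry from another node of $\bar S$. Following \citet{shiragur2024causal}, a node $w\in\bar S$ is a non-source if there exist $a\in S$ and $b\in\bar S$ such that $a\not\CI w\mid S'$ but $a\CI w\mid S'\cup\{b\}$ for a small $S'$. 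This again needs only polynomially many choices of $(a,b)$ times $p^{\mathcal{O}(s)}$ choices of $S'$.

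Step 2 adds $W=\src(\bar S)$, or more precisely all nodes certified as sources, to $S$. We then orient edges: $v\to u$ for $v\in S$ and $u\in W$. We identify v-structures among the new parents using the stored separating sets, as in \texttt{PC}. Since every DAG in the MEC shares the skeleton and v-structures, the final step applies the Meek rules to the skeleton together with the v-structures, and the output is $\mathcal{E}(\mathcal{G})$. This is the standard \texttt{PC} correctness argument once the skeleton is correct.

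The main obstacle is the separator-size claim: bounding by $\mathcal{O}(s)$ rather than $d$ the conditioning sets needed both to separate non-adjacent pairs and to certify non-source status. Nodes like the middle layer in Figure~\ref{fig:sep-graph} show that parents can be numerous while $s$ stays small. My first attempt would be to prove that, for non-adjacent $u,v$ with $v$ earlier, the set $\Pa(u)\cap\Anc[v]$ restricted to one chain component (a clique of size at most $s$), together with directed parents already determined by S, suffices. When $S$ is known the full set $S\setminus\{v\}$ is already a valid separator, but it makes the adjacent pairs unidentifiable. The approach is therefore to condition on all of $S$ minus the candidate, plus at most $s$ nodes from $u$'s undirected neighbourhood in its chain component. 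This reduces the enumeration to $\binom{p}{\le s}=p^{\mathcal{O}(s)}$ subsets per pair.
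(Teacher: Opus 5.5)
The separator-size claim at the heart of Step~1 is false, so your skeleton step fails as stated. In the graph of Figure~\ref{fig:sep-graph} we have $s=2$, yet every set d-separating the top and bottom nodes must contain all $p-2$ middle nodes. Hence no search over $T\subseteq S$ with $|T|=\mathcal{O}(s)$ ever removes the spurious top--bottom edge, even when the bottom node is the only node left outside $S$. Your supporting claim, that $\Pa(u)\cap S$ separates $u$ from every non-adjacent $v\in S$, also holds only for $u\in\src(\bar S)$; otherwise a path $v\to x\to u$ with $x\in\bar S$ stays open.

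The way out is that the test count depends on how many conditioning sets you enumerate, not on their size. The paper always conditions on the whole current prefix set $S$ (one choice, of arbitrary size) and enumerates only the extra part $W\subseteq\bar S$ with $|W|=\ell$. Your last paragraph drifts toward this, but it contradicts Step~1. The reason you give for avoiding $S\setminus\{v\}$ is also wrong: under faithfulness, adjacent pairs are dependent given every set, and for $u\in\src(\bar S)$ the single test $u\CI v\mid S\setminus\{v\}$ decides adjacency exactly.

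Even with that fix, the two ideas that actually give $p^{\mathcal{O}(s)}$ are missing.

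First, you need an argument for why $|W|\le s$ extra nodes suffice. The paper's argument has three steps:
\begin{enumerate}
\item No v-structure may have its collider in the current layer. The paper enforces this by removing colliders and their descendants via the sets $\vstructset^m$. Your certificate needs a witness $a\in S$, so at $S=\varnothing$ nothing is excluded and the collider of any $x\to w\leftarrow y$ enters the first layer.
\item The relevant moral graph is then chordal, so minimal separators are cliques of $\mathcal{G}$ by Dirac's theorem (Lemma~\ref{lem:min-sep-set-no-vstruct}).
\item Such a clique inside the layer must yield an undirected clique of $\mathcal{E}(\mathcal{G})$ of at least the same size. This is the Meek-rule analysis of Lemma~\ref{lem:undirected-clique} and Corollary~\ref{cor:undirected-clique}. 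It relies on having already removed, via $\meekoneset^1,\dots,\meekoneset^{m-1}$, every node $w$ with $z\not\CI w\mid S$ but $z\CI w\mid S\cup W$ for some $z\in S$ and $|W|<m$.
\end{enumerate}

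Second, the algorithm does not know $s$, so ``test all sets of size $\mathcal{O}(s)$'' is not an implementable rule. The paper increases $\ell$ only while the working graph on $V'$ still contains a clique of size at least $\ell$. Lemma~\ref{lem:algo-clique} shows this forces an undirected clique of size at least $\ell$ in $\mathcal{E}(\mathcal{G})$, which is what caps $\ell$ by $s$.

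Also, $\src(\bar S)$ is not identifiable from CI tests: for a lone undirected edge $x-y$, either endpoint can be the source. So you must add a superset and prove it is prefix, as in Theorem~\ref{thm:prefix-set}. That in turn forces you to learn adjacencies inside each layer, which your Step~1 (pairs with $v\in S$ only) never does.

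Your endgame of applying Meek's rules to a correct skeleton plus v-structures is a legitimate alternative to the paper's partition-based orientation (Lemmas~\ref{lem:vstruct-sets} and~\ref{lem:directed-edges}), but it only helps once the skeleton is recovered. Finally, the theorem is about Algorithm~\ref{alg:learning} as written, so the argument must be carried out for its $\vstructset^\ell$/$\meekoneset^\ell$ loop rather than for a new procedure.
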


This result shows that the number of CI tests required by GAS is
upper bounded exponentially in the maximum undirected clique size of the
essential graph. This is in contrast to current algorithms, which,
without additional
assumptions, require in the worst case the number of CI tests to be at least an
exponential function of the maximum
degree~\citep{spirtes2001,claassen2013learning}.
Complementing this result, we show that any algorithm requires at least
$2^{\Omega(s)}$ CI tests to distinguish the ground-truth
$[\mathcal{G}]$ versus other MECs.

\begin{restatable}{theorem}{thmLower}\label{thm:lower}
    Given infinite samples from a distribution respecting a DAG $
    \mathcal{G} $, any algorithm requires at least $
    2^{\Omega(s)}$ CI tests to verify $ \mathcal{G} \in
    [\mathcal{G}] $. Specifically, for any collection of fewer than
    $2^s-s-1$ CI tests, there exists an MEC
    $[\mathcal{H}]\neq[\mathcal{G}]$ such that both classes are
    indistinguishable based on these CI relations.
\end{restatable}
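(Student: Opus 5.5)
We prove Theorem~\ref{thm:lower} with a counting argument in the style of \citet{zhang2024membership}. The idea is to pick a family of alternative MECs, each differing from $[\mathcal{G}]$ in a way that only one specific CI test can detect. We then show the family has at least $2^s-s-1$ members, so fewer tests than that leave some alternative untouched.

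Let $K$ be a maximum undirected clique of $\mathcal{E}(\mathcal{G})$, with $|K|=s$.
\begin{itemize}
\item Since $\mathcal{E}(\mathcal{G})$ is a chain graph whose chain components are chordal, there is a DAG $\mathcal{G}'\in[\mathcal{G}]$ in which $K$ is ordered consistently with some topological order.
\item Moreover, the orientation of $K$ can be chosen freely while keeping $\mathcal{G}'$ in $[\mathcal{G}]$. This is the standard fact that any perfect elimination order of a chain component extends to a member of the MEC.
\item So we may assume $\mathcal{G}$ itself has $K=\{k_1,\dots,k_s\}$ with $k_i\to k_j$ for $i<j$.
\end{itemize}

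For every subset $T\subseteq K$ with $|T|\ge 2$ (there are $2^s-s-1$ of them), we build a DAG $\mathcal{H}_T$ as follows.
\begin{itemize}
\item Choose two nodes $a,b\in T$ that are extremal in the order, for instance the two smallest.
\item Delete the edge $a\to b$.
\item Add or adjust edges so that $a$ and $b$ become d-separated exactly by conditioning sets that encode $T$, meaning sets $C$ with $C\cap K = T\setminus\{a,b\}$ together with suitable nodes outside $K$.
\end{itemize}
This construction must satisfy three requirements.
\begin{enumerate}
\item $[\mathcal{H}_T]\neq[\mathcal{G}]$. This holds because the skeletons differ.
\item Every CI statement $u\CI v\mid C$ has the same truth value in $\mathcal{G}$ and in $\mathcal{H}_T$, unless the statement is of a specific form $(a_T,b_T,C_T)$ indexed by $T$.
\item The map $T\mapsto (a_T,b_T,C_T\cap K)$ is injective.
\end{enumerate}
Given these, a collection of fewer than $2^s-s-1$ tests misses the distinguishing test of some $T$, and then $\mathcal{H}_T$ and $\mathcal{G}$ are indistinguishable on that collection, which is the claim.

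A cleaner way to achieve (2) and (3) is to remove a single edge $k_i\to k_j$ and control which conditioning sets then d-separate them. With the clique fully connected otherwise, $k_i$ and $k_j$ are d-connected given $C$ unless $C$ blocks every path through the other clique nodes. Such paths are blocked when $C$ contains all the common-parent nodes and avoids the common children and their descendants. The subset $T$ should therefore index precisely the pattern of which clique members are conditioned on.

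To get $2^s$ distinct patterns rather than $s^2$, we let $\mathcal{H}_T$ perturb the orientation within $K$ as well, not only delete one edge. Concretely, we choose $\mathcal{H}_T$ so that the nodes of $T$ become the clique members lying in the "middle" of the removed pair, and the rest become descendants. The unique d-separating set inside $K$ is then exactly $T\setminus\{a,b\}$, while the closure rules (Meek rules, no new v-structures outside) keep all other CI relations intact.

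The main obstacle is verifying requirement (2) for each $\mathcal{H}_T$. We must show that the modification changes exactly one CI relation up to the choice of nodes outside $K$, and that it creates no new v-structures elsewhere. For this we would exploit that $K$ is an undirected clique, so all nodes of $K$ have identical parents outside $K$ in $\mathcal{G}$. Hence d-connection to the rest of the graph is symmetric across $K$, and only within-$K$ paths are affected. We would check the remaining cases, paths leaving and re-entering $K$, by a direct d-separation case analysis.
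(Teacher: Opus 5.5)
There is a genuine gap, and it sits exactly at the step you defer as the main obstacle.

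\textbf{Requirement (2) is false.} You need that deleting $a_T\to b_T$ changes only CI statements between $a_T$ and $b_T$, and this fails in general. Let $K=\{k_1,\dots,k_s\}$ be an entire chain component and add $k_1\to y\leftarrow z$, with $z$ otherwise isolated; $K$ stays an undirected clique of $\mathcal{E}(\mathcal{G})$. Order $K$ as $k_1,k_2,\dots,k_s$ and delete $k_1\to k_2$. Every path from $y$ to $k_2$ then has the form $y\leftarrow k_1\to\cdots\leftarrow k_2$, so it contains an unconditioned collider. Hence $y\CI k_2\mid\varnothing$ holds after the deletion, whereas $y\not\CI k_2\mid\varnothing$ held before via $y\leftarrow k_1\to k_2$. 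A test not involving the pair $\{k_1,k_2\}$ therefore distinguishes the two classes.

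The heuristics you planned to use also fail. Nodes of $K$ share their parents from outside the \emph{chain component} containing $K$, not from outside $K$: in $x\to k_1\to k_2$ with $x\not\sim k_2$, only $k_1$ has parent $x$. And, as the example shows, paths leaving $K$ are affected. What actually holds is a statement over \emph{all} $A,B$, which the paper imports as Lemma~7 of \citet{zhang2024membership}: a d-separation statement can change only if $\Pa(b)\cap\Ch(a)\cap K\subseteq C\cap K\subseteq(\Pa(b)\setminus\{a\})\cap K$, with parents and children taken in the chosen DAG. Your proposal neither states nor proves such a lemma.

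\textbf{The indexing breaks under the correct lemma.} Once you use it, which tests distinguish $\mathcal{H}_T$ depends only on $C\cap K$. So the pigeonhole must run over conditioning patterns, and your requirement (3) has to become injectivity of $T\mapsto T\setminus\{a_T,b_T\}$. That map is not injective: all $\binom{s}{2}$ two-element sets $T$ go to $\varnothing$. Under the ``two smallest'' rule its image is only the $2^{s-2}$ subsets of $\{k_3,\dots,k_s\}$, so at best you would get a bound of $2^{s-2}$, not $2^s-s-1$.

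\textbf{The repair.} Index by the pattern itself, namely the $2^s-s-1$ sets $W\subseteq K$ with $|W|\le s-2$.
\begin{enumerate}
\item Each test has a single value of $C\cap K$, so fewer than $2^s-s-1$ tests leave some $W$ unused.
\item Pick $a,b\in K\setminus W$ and take $\mathcal{F}\in[\mathcal{G}]$ ordering $K$ as $a$, then $W$, then $b$, then the rest.
\item Delete $a\to b$.
\end{enumerate}
Because $a$ comes first in $K$, both ends of the sandwich equal $W$. So only tests with $C\cap K=W$ could separate the two classes, and none was performed.

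This is the paper's proof. Your variant (b) essentially contains this construction, but it attaches it to the wrong index set and lacks the invariance lemma that makes it work.
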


Since any constraint-based causal discovery algorithm must
distinguish the true MEC from all others, it follows that, without
additional assumptions, it must perform at least $2^{\Omega(s)}$ CI
tests. Together, Theorems~\ref{thm:algorithm} and~\ref{thm:lower}
show that the complexity of constraint-based causal discovery, in
terms of the number of CI tests, is exponential in $s$, and that our
algorithm achieves this bound.

Additionally, we show that our algorithm can output the correct MEC
when faithfulness is violated. As a consequence, its correctness
condition is weaker than
Markov and faithfulness, and hence it can output the correct
essential graph in more settings. There exist
other algorithms that also provably require weaker assumptions than Markov and
faithfulness
\citep{ramsey2006adjacency,raskutti2018learning,solus2021consistency},
their complexity in terms of CI testing could be prohibitive.
The proof of the following result can be found in
Appendix~\ref{app:consistency}.

\begin{proposition}\label{prop:algo-assumptions}
    There exists a joint distribution that is Markov but not faithful
    to a DAG $\mathcal{G}$, for which Algorithm~\ref{alg:learning},
    when given observational data sampled from it, correctly outputs
    the ground-truth $\mathcal{E}(\mathcal{G})$.
\end{proposition}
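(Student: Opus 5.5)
The argument is an explicit construction. I will pick a small DAG $\mathcal{G}$ and a distribution $\mathbb{P}$ that is Markov to $\mathcal{G}$ but has one extra, "unfaithful" conditional independence. The extra independence must be one that Algorithm~\ref{alg:learning} never queries, or whose effect is overridden by the tests it does perform.

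The natural candidate is the classical path-cancellation example. Take the linear Gaussian SEM on $\mathcal{G}: a\to b\to c,\ a\to c$ with coefficients tuned so that the direct effect cancels the indirect one. This gives $a\CI c$ marginally, although $a\not\CI_{\mathcal{G}} c$. The distribution is Markov to $\mathcal{G}$ because it is generated by the SEM. It is not faithful because of this single cancellation. All other independences stay generic: every other d-connection statement corresponds to a polynomial in the coefficients that does not vanish on the cancellation surface at a generic point.

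The essential graph $\mathcal{E}(\mathcal{G})$ is then the complete undirected graph on $\{a,b,c\}$.

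I will trace Algorithm~\ref{alg:learning} on this oracle. The plan is to list the CI queries it issues, in its ancestral/prefix-finding phase and then in its targeted adjacency phase. For each query I check whether its answer agrees with the d-separation answer in $\mathcal{G}$.

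The key point to verify is that the only disagreeing statement, $a\CI c\mid\varnothing$, is either never queried or is queried in a way that does not alter the output. For example, the adjacency decision for $a,c$ may be made with conditioning set $\{b\}$, by the footnote's rule of conditioning on the predecessors in a consistent order, and $a\not\CI c\mid b$ holds here.

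If $a\CI c$ is queried during the source-node search, this step fails. The pair $a,c$ would appear marginally independent, and $b$ might be misread as a collider.

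If that happens, I will modify the construction. One option is to place the cancellation on a statement the algorithm cannot reach, such as an independence given a set that is never a prefix set. For instance, use a four-node graph where the cancellation is $a\CI d\mid\{b\}$ but the algorithm only conditions on prefix sets containing $\{b,c\}$.

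The main obstacle is exactly this bookkeeping: knowing precisely which tests \texttt{GAS} performs. I will handle it by choosing the unfaithful independence so that it involves a conditioning set that is not a union of a prefix set and the nodes being tested. Then, by the algorithm's structure, that set is never queried.

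Finally, I will conclude that every queried answer coincides with the faithful one. By Theorem~\ref{thm:algorithm}'s correctness argument, which only uses the answers to queried tests, the output is therefore $\mathcal{E}(\mathcal{G})$.
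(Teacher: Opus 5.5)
Your framework is sound, and it is how the paper argues: if the violated statement is never among the queries \texttt{GAS} issues in the faithful run, the run is identical to the faithful one, and Theorem~\ref{thm:algorithm} gives $\mathcal{E}(\mathcal{G})$. The gap is that you never produce a witness for which this holds, and for an existence statement the witness is the proof.

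Your primary candidate fails at the very first step. \texttt{GAS} does not use the footnote's ``condition on the predecessors'' rule. It starts with $S=\varnothing$, $\ell=0$, and the edge-removal step (Line~\ref{algoline:cis}) queries every marginal independence. So $a\CI c$ is queried and $a-c$ is deleted. Then $a\not\CI c\mid b$ puts $b\in\vstructset^0$, and $V'=\{a,c\}$ has no $2$-clique. The output is therefore $a\to b\leftarrow c$ with $a\not\sim c$, not the undirected triangle. More generally, on a complete three-node DAG every pairwise CI statement is queried in the faithful run: all marginals at $\ell=0$, then, with $V'=V$, all single-node conditionings at $\ell=1$. So no cancellation on the triangle alone can be hidden.

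Your fallback rests on the claim that a conditioning set ``not a union of a prefix set and the nodes being tested'' is never queried. That is false. Line~\ref{algoline:cis} conditions on $S\cup W$ for \emph{every} $W\subseteq V'$ with $|W|=\ell$, for each still-adjacent pair with an endpoint in $V'$. In the first round $S=\varnothing$, so, e.g., $a\CI d\mid b$ is queried at $\ell=1$ as long as $a-d$ survives $\ell=0$ and $b$ (with one of $a,d$) is still in $V'$.

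What is missing is a mechanism that removes a node of the violated conditioning set from $V'$ \emph{before} sets of that size are tried. That node must also stay out of $S$ while the tested pair is still examined. The paper gets this from a faithful v-structure. It adds to your triangle ($X_0\to X_1\to X_3$, $X_0\to X_3$) an exogenous parent $X_2\to X_3$, and puts the cancellation $a_{03}a_{13}=a_{01}$ on $X_0\CI X_1\mid\{X_2,X_3\}$ rather than on a marginal. The faithful answers $X_1\CI X_2$ and $X_1\not\CI X_2\mid X_3$ place $X_3\in\vstructset^0$ at $\ell=0$. Hence $X_3$ is excluded from $V'$ for the rest of that round and never appears in a conditioning set for $(X_0,X_1)$. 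It is also the last node added to $S$, after which the pair $(X_0,X_1)$ is no longer tested. Without such a construction and a trace confirming it, the proposal does not yet prove the proposition.
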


In the remainder of this section, we provide a concise overview of
the methods used to establish our main theorems.

\subsection{Outline of methods}

\textbf{Upper bound for \texttt{\algoname}.}
Compared to other constraint-based algorithm (e.g., \texttt{PC}), the
reason our algorithm performs fewer CI tests is its use of
ancestral information during the adjacency search, which is learned
by keeping track
of both conditional independence \emph{and} dependence statements.
This ancestral information is inferred through two types of CI tests
(given in Definitions~\ref{def:vstructset} and~\ref{def:meekset} below), which
primarily serve to identify edge orientations arising from
v-structures and Meek rule 1. As we show in
Section~\ref{sec:algorithm}, these edge orientations are sufficient to recover
all ancestral information in the underlying essential graph.
\texttt{\algoname} tracks
this ancestral information through an iterative procedure that expands a
prefix node set $S\subset V$.
This process, which builds on~\citet{shiragur2024causal}, serves two key
roles: it enables learning the remaining ancestral
relations in the
essential graph, and it simplifies CI testing for adjacencies by
restricting the search space of the conditioning sets.

Next, to show that the number of CI tests is bounded by $s$, we prove that
the iterative expansion procedure terminates with a
partition of the nodes $V$ into disjoint chordal components $S_1,
S_2, \dots, S_m$, where each partial union $ S_1 \cup S_2 \cup \dots
\cup S_i $ for $i=1,\dots,m$ forms a prefix node set.
It follows from \citet{dirac1961} that any minimal
separator within a chordal component $S_i$ forms a clique. This
property implies that the conditioning sets used for CI tests within
$S_i$ need not exceed the size of the largest clique in the skeleton,
which can serve as a stopping criterion when searching for
adjacencies within $S_i$. Further details are provided in
Section~\ref{sec:upper-bound}.

\textbf{Lower bound on any constraint-based algorithm.}
The argument is based on the largest undirected clique of size $s$ in
the essential graph.
This clique contains $ 2^s - s - 1 $ distinct subsets of size at most
$ s - 2 $, each representing a potential conditioning set.
If an algorithm performs fewer CI tests, there must exist at least
one subset that has not been used as the conditioning set.
For any such unused set, we show how to construct a distinct MEC that
is consistent with all performed CI tests.
A formal proof detailing this construction is provided in
Section~\ref{sec:lower-bound}.

 \section{Upper bound}\label{sec:upper-bound}

In this section, we present our main algorithm and show that it
outputs the correct essential graph using at most
$p^{\mathcal{O}(s)}$ CI tests.
Proofs omitted from this section are
provided in Appendix \ref{app:upper-bound}.

\subsection{Learning a prefix node set}\label{sec:prefix-vertex-set}

From the orientation rules of \texttt{PC} (steps (2) and (3)
described in Section~\ref{sec:intro}), we see that the directional
information  in the essential graph are derived from v-structures and
Meek rules.
While four Meek rules exist for orienting edges, only rule 1
introduces ancestral information not already captured by transitivity
(see Figure~\ref{fig:meek-set-rule}).
Therefore, the ancestral relationships in an essential graph can be
completely determined by its v-structures and the application of Meek rule 1.
Leveraging this observation, we developed two sets of CI tests to
specifically identify the ancestral relations established by
v-structures (Definition~\ref{def:vstructset}) and Meek rule 1
(Definition~\ref{def:meekset}).

\ifbool{aistats}{}{
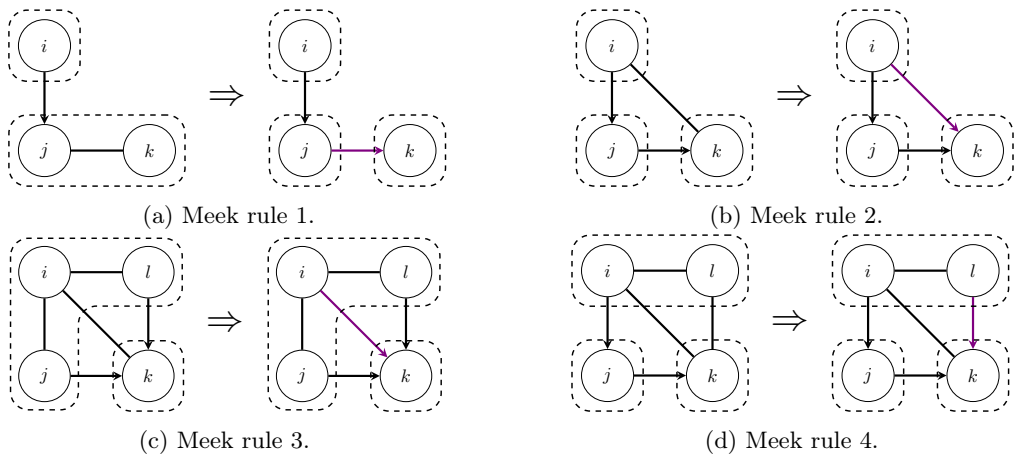
\begin{figure}[h!]
    \def\size{0.35}
    \def\sep{4em}
    \centering
    \begin{subfigure}[b]{\size\textwidth}
        \centering
        \resizebox{\linewidth}{!}{\begin{tikzpicture}
    \begin{scope}
        \node[base] (i) at (0, 0) {$i$};
        \node[base] (j) [below = of i] {$j$};
        \node[base] (k) [right =  of j] {$k$};

        \draw[dedge] (i) to (j);
        \draw[uedge] (j) to (k);

        \node[fitbox, fit=(i)] (0) {};
        \node[fitbox, fit=(j) (k)] (0) {};
    \end{scope}

    \node at (3.5, -1) {\huge$\Rightarrow$};

    \begin{scope}[xshift=5cm]
        \node[base] (i) at (0, 0) {$i$};
        \node[base] (j) [below = of i] {$j$};
        \node[base] (k) [right =  of j] {$k$};

        \draw[dedge] (i) to (j);
        \draw[dedge,violet] (j) to (k);

        \node[fitbox, fit=(i)] (0) {};
        \node[fitbox, fit=(j)] (0) {};
        \node[fitbox, fit=(k)] (0) {};
    \end{scope}
\end{tikzpicture}

         }
        \caption{Meek rule 1.}
    \end{subfigure}
    \hspace{\sep}
    \begin{subfigure}[b]{\size\textwidth}
        \centering
        \resizebox{\linewidth}{!}{\begin{tikzpicture}
    \begin{scope}
        \node[base] (i) at (0, 0) {$i$};
        \node[base] (j) [below = of i] {$j$};
        \node[base] (k) [right =  of j] {$k$};

        \draw[dedge] (i) to (j);
        \draw[dedge] (j) to (k);
        \draw[uedge] (i) to (k);

        \node[fitbox, fit=(i)] (0) {};
        \node[fitbox, fit=(j)] (0) {};
        \node[fitbox, fit=(k)] (0) {};
    \end{scope}

    \node at (3.5, -1) {\huge$\Rightarrow$};

    \begin{scope}[xshift=5cm]
        \node[base] (i) at (0, 0) {$i$};
        \node[base] (j) [below = of i] {$j$};
        \node[base] (k) [right =  of j] {$k$};

        \draw[dedge] (i) to (j);
        \draw[dedge] (j) to (k);
        \draw[dedge,violet] (i) to (k);

        \node[fitbox, fit=(i)] (0) {};
        \node[fitbox, fit=(j)] (0) {};
        \node[fitbox, fit=(k)] (0) {};
    \end{scope}
\end{tikzpicture}
         }
        \caption{Meek rule 2.}
    \end{subfigure}
    \hfill
    \begin{subfigure}[b]{\size\textwidth}
        \centering
        \resizebox{\linewidth}{!}{\def\padding{0.3cm}
\begin{tikzpicture}
    \begin{scope}
        \node[base] (i) at (0, 0) {$i$};
        \node[base] (j) [below = of i] {$j$};
        \node[base] (k) [right =  of j] {$k$};
        \node[base] (l) [above =  of k] {$l$};

        \draw[uedge] (i) to (j);
        \draw[uedge] (i) to (l);
        \draw[uedge] (i) to (k);
        \draw[dedge] (j) to (k);
        \draw[dedge] (l) to (k);

        \draw[fitbox]
        ($ (i.north west) + (-\padding, \padding) $) --
        ($ (j.south west) + (-\padding, -\padding) $) --
        ($ (j.south east) + (\padding, -\padding) $) --
        ($ (i.south east) + (\padding, -\padding) $) --
        ($ (l.south east) + (\padding, -\padding) $) --
        ($ (l.north east) + (\padding, \padding) $) --
        cycle;
        \node[fitbox, fit=(k)] (0) {};
    \end{scope}

    \node at (3.5, -1) {\huge$\Rightarrow$};

    \begin{scope}[xshift=5cm]
        \node[base] (i) at (0, 0) {$i$};
        \node[base] (j) [below = of i] {$j$};
        \node[base] (k) [right =  of j] {$k$};
        \node[base] (l) [above =  of k] {$l$};

        \draw[uedge] (i) to (j);
        \draw[uedge] (i) to (l);
        \draw[dedge,violet] (i) to (k);
        \draw[dedge] (j) to (k);
        \draw[dedge] (l) to (k);

        \draw[fitbox]
        ($ (i.north west) + (-\padding, \padding) $) --
        ($ (j.south west) + (-\padding, -\padding) $) --
        ($ (j.south east) + (\padding, -\padding) $) --
        ($ (i.south east) + (\padding, -\padding) $) --
        ($ (l.south east) + (\padding, -\padding) $) --
        ($ (l.north east) + (\padding, \padding) $) --
        cycle;
        \node[fitbox, fit=(k)] (0) {};

    \end{scope}
\end{tikzpicture}
         }
        \caption{Meek rule 3.}
    \end{subfigure}
    \hspace{\sep}
    \begin{subfigure}[b]{\size\textwidth}
        \centering
        \resizebox{\linewidth}{!}{\begin{tikzpicture}
    \begin{scope}
        \node[base] (i) at (0, 0) {$i$};
        \node[base] (j) [below = of i] {$j$};
        \node[base] (k) [right =  of j] {$k$};
        \node[base] (l) [above =  of k] {$l$};

        \draw[dedge] (i) to (j);
        \draw[uedge] (i) to (l);
        \draw[uedge] (i) to (k);
        \draw[dedge] (j) to (k);
        \draw[uedge] (l) to (k);

        \node[fitbox, fit=(i) (l)] (0) {};
        \node[fitbox, fit=(j)] (0) {};
        \node[fitbox, fit=(k)] (0) {};
    \end{scope}

    \node at (3.5, -1) {\huge$\Rightarrow$};

    \begin{scope}[xshift=5cm]
        \node[base] (i) at (0, 0) {$i$};
        \node[base] (j) [below = of i] {$j$};
        \node[base] (k) [right =  of j] {$k$};
        \node[base] (l) [above =  of k] {$l$};

        \draw[dedge] (i) to (j);
        \draw[uedge] (i) to (l);
        \draw[uedge] (i) to (k);
        \draw[dedge] (j) to (k);
        \draw[dedge,violet] (l) to (k);

        \node[fitbox, fit=(i) (l)] (0) {};
        \node[fitbox, fit=(j)] (0) {};
        \node[fitbox, fit=(k)] (0) {};
    \end{scope}
\end{tikzpicture}
         }
        \caption{Meek rule 4.}
    \end{subfigure}
    \hfill
    \caption{
        Meek rules \citep{meek1995}.
        The dashed lines partition nodes into sets based on
        ancestral relationships.
        Only Meek rule 1 establishes an ancestor for a node that
        previously had none, forcing a component to split.
    }
    \label{fig:meek-set-rule}
\end{figure}

 }

\begin{definition}[Set $\vstructset^m$]\label{def:vstructset}
    For all $w \in \bar{S}$, let $w \in \vstructset^m$ if and only if $ {u \CI
    v \mid S \cup W} $ and $ {u \not\CI v\mid S \cup W\cup\{w\}}$ for
    some $u \in V $, $ v \in \bar{S} $ and $ W \subset \bar{S}$
    with $ |W| = m $.
\end{definition}

These CI tests identify a downstream node $w$ by its properties as a
collider, or a descendant of a collider.
Specifically, conditioning on $w$ creates a dependency between
two previously independent nodes, $u$ and $v$.
Definition~\ref{def:vstructset} generalizes the concept of
\emph{minimal} conditional dependence
\citep{claassen2011characterization,magliacane2016ancestral},
by incorporating a prefix node set.

Next, we define tests to identify nodes whose incident edges are
oriented based on
Meek rules.

\begin{definition}[Set $\meekoneset^m$]\label{def:meekset}
    For all $w \in \bar{S}$, let $w \in \meekoneset^m$ if and only if
    $u\not\CI w\mid S$ and $u\CI w\mid S \cup W$ for some $u\in S$
    and $ W \subset \bar{S} $ with $ |W| = m$.
\end{definition}

This definition checks for a node $w$ that is initially dependent
on a node $u$ in the prefix node set $S$, but becomes independent once
we condition on $W \subseteq \bar{S}$.
Graphically, this means that all active paths between $ u $ and $
w $ pass through $ W $.
This implies $ w $ has an ancestor in $\bar{S}$, indicating that
it can be excluded when expanding the prefix node set $S$.

With these definitions, we can expand a prefix node set $ S \subset V
$ (which may be empty) into a new, larger set $ S' \subseteq V $.

\begin{restatable}{theorem}{prefix-set}\label{thm:prefix-set}
    Let $ S \subset V $ be a prefix node set.
    For any integer $\ell$, the set $S'=V\setminus (\vstructset^0
        \cup \vstructset^1 \cup \meekoneset^1 \cup \dots \cup
    \vstructset^\ell \cup \meekoneset^\ell)$ can be obtained with $
    \mathcal{O}(p^{\ell+3}) $ number of CI tests.
    In addition, $S'$ is a prefix node set satisfying $(S \cup
    \src(\bar{S})) \subseteq S' $, that is, it contains all the
    remaining source nodes in $\bar{S}$.
\end{restatable}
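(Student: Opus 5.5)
The statement has three parts: the test count, the inclusion $S\cup\src(\bar S)\subseteq S'$, and the prefix property of $S'$. I would prove them separately, in that order. The first two are short. The third is where I expect the real difficulty.

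\emph{Counting.} For a fixed $m\le\ell$, membership in $\vstructset^m$ is decided by running, for every triple $(u,v,w)$ and every $W\subset\bar S$ with $|W|=m$, the two tests $u\CI v\mid S\cup W$ and $u\CI v\mid S\cup W\cup\{w\}$. That is at most $2p^3\binom{p}{m}=\mathcal{O}(p^{m+3})$ tests. Membership in $\meekoneset^m$ needs, for every pair $(u,w)$ and every such $W$, the two tests $u\CI w\mid S$ and $u\CI w\mid S\cup W$, which is $\mathcal{O}(p^{m+2})$ tests. Summing over $m=0,\dots,\ell$ is a geometric sum dominated by its last term, so the total is $\mathcal{O}(p^{\ell+3})$.

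\emph{Inclusion.} Both $\vstructset^m$ and $\meekoneset^m$ are defined as subsets of $\bar S$, so $S\subseteq S'$ is immediate. Now fix $w\in\src(\bar S)$. Since $w$ has no ancestor in $\bar S$, we get $\Anc(w)\subseteq S$, and in particular $\Pa(w)\subseteq S$. Because $S$ is a prefix set not containing $w$, it contains no descendant of $w$.
\begin{itemize}
\item[(a)] Suppose $w\in\meekoneset^m$ with witness $u\in S$. The local Markov property gives $w\CI S\setminus\Pa(w)\mid\Pa(w)$, and weak union gives $w\CI u\mid S$ whenever $u\notin\Pa(w)$. So the requirement $u\not\CI w\mid S$ forces $u\in\Pa(w)$. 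But then $u\sim w$, and by faithfulness $u\not\CI w\mid S\cup W$ for every $W$. This contradicts the second condition in the definition.
\item[(b)] Suppose $w\in\vstructset^m$. Write $Z=S\cup W$ and take a path between $u$ and $v$ that is d-connecting given $Z\cup\{w\}$ but not given $Z$. The standard argument shows that some collider $c$ on this path satisfies $c\in\Anc[w]$ and $\Des[c]\cap Z=\varnothing$. Since $\Anc(w)\subseteq S\subseteq Z$, the only option is $c=w$, so the path contains a segment $a\to w\leftarrow b$. Both $a$ and $b$ lie in $\Pa(w)\subseteq S$, and each is a non-collider on the path because the edge leaves it. Neither can be the endpoint $v$, since $v\in\bar S$. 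At most one of them can be the endpoint $u$. Hence at least one of $a,b$ is an interior non-collider lying in $Z$, and the path is blocked, a contradiction. Choosing a shortest such path should avoid any issue with repeated nodes.
\end{itemize}

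\emph{Prefix property (main obstacle).} I need to show that no $w\in S'$ has an ancestor in $\vstructset^{\le\ell}\cup\meekoneset^{\le\ell}$. My plan is a minimal-counterexample argument. Take $w\in S'\setminus S$ that has an ancestor outside $S'$, and among all such nodes choose one minimal in the topological order. By minimality, $w$ then has a parent $y\in\bar S\setminus S'$. The goal is to show $w$ itself lies in $\vstructset^{m}\cup\meekoneset^{m}$ for some $m\le\ell$.
\begin{itemize}
\item If $y\in\vstructset^{m}$ through a collider configuration $u\to c\leftarrow v$ with $y\in\Des[c]$, then $w$ is also a descendant of $c$. I would try to reuse the same triple $(u,v,W)$, which places $w$ in $\vstructset^{m}$. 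This requires showing $W$ contains no descendant of $c$; I would get that from the independence $u\CI v\mid S\cup W$.
\item If $y\in\meekoneset^m$ via $u,W$, I would examine whether $u\not\CI w\mid S$. If it holds, I would try to show that the same $W$, or $W$ with one node exchanged for $y$ (so the size stays at $m$), separates $u$ from $w$, giving $w\in\meekoneset^m$. If it fails, $w$ should fall under the $\vstructset$ case instead.
\end{itemize}
Keeping the conditioning set size at most $\ell$ in these transfers is the delicate point, and I expect the bulk of the work to sit here.
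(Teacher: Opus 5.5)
Your test count and your proof of $S\cup\src(\bar S)\subseteq S'$ are fine. For $\meekoneset^m\cap\src(\bar S)=\varnothing$, your local-Markov/weak-union argument is more direct than the paper's route through minimal separating sets.

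The gap is the prefix property. It is the substance of the theorem, you leave it as a plan, and both branches of the plan aim at the wrong mechanism. The paper obtains it from two closure statements: $\Des[y]\subseteq\vstructset^m$ whenever $y\in\vstructset^m$, and $\Des[y]\subseteq\meekoneset^m\cup\vstructset^m$ whenever $y\in\meekoneset^m$ with $m$ minimal.

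\emph{The $\vstructset$ branch.} Reusing the same $(u,v,W)$ does work, but the condition you single out is not the issue. The collider $c$ activated by $y$ satisfies $c\notin\Anc[S\cup W]$, so $W\cap\Des[c]=\varnothing$ comes for free. (Also, $u,v$ are the endpoints of a possibly long active path, not the parents of $c$.) Let $P$ be the path that is active given $S\cup W\cup\{y\}$. Every collider of $P$ lies in $\Anc[S\cup W\cup\{y\}]\subseteq\Anc[S\cup W\cup\{w\}]$. So what must be shown is that replacing $y$ by $w$ does not block $P$, i.e.\ that $w$ is not a non-collider on $P$. You must either exclude this or reroute. The paper reroutes: it takes the outermost colliders $k,k'$ of $P$ outside $\Anc[S\cup W]$, which lie in $\Anc[y]\subseteq\Anc[w]$. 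It then splices in directed paths from $k$ and/or $k'$ down to $w$, so that $w$ becomes a collider of a path that is active given $S\cup W\cup\{w\}$.

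\emph{The $\meekoneset$ branch.} The missing idea is to take $m$ minimal for $y$. Suppose $y\in\meekoneset^m\setminus(\meekoneset^1\cup\dots\cup\meekoneset^{m-1})$ with witness $(u,W)$.
\begin{itemize}
\item Then $u\not\CI y\mid S\cup W'$ for every proper subset $W'$ of $W$, so $W$ is a minimal separator and $W\subseteq\Anc(u)\cup\Anc(y)$. Since $\Anc(u)\subseteq S$, you get $W\subseteq\Anc(y)$, so neither $S$ nor $W$ meets $\Des(y)$.
\item Consequently $u\not\CI w\mid S$ always holds: extend an active $u$--$y$ path by the directed path $y\to\cdots\to w$, which avoids $S$. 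Your ``if it fails'' branch is therefore empty.
\item If $u\CI w\mid S\cup W$, then $w\in\meekoneset^m$ with the same $W$.
\item Otherwise, splice an active $u$--$w$ path given $S\cup W$ with the directed path from $y$ to $w$. The resulting $u$--$y$ path can only be blocked at $w$ as a collider. Since $u\CI y\mid S\cup W$, $w$ must be such a collider, so $u\not\CI y\mid S\cup W\cup\{w\}$ and $w\in\vstructset^m$, again with the same $W$.
\end{itemize}
So $\vstructset$-membership arises in the opposite branch from the one you guessed, and no exchange of a node for $y$ is involved. Such an exchange fails in general. Take $u\to a\to y\to w\leftarrow c\leftarrow u$ with $S=\{u\}$ and $W=\{a\}$. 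None of $\{a\},\{y\},\{c\}$ separates $u$ from $w$, so $w\notin\meekoneset^1$, and among the level-$1$ sets $w$ lies only in $\vstructset^1$. Without minimality of $m$, the witness $W$ may contain descendants of $y$, and then the splice breaks. For example, in the chain $u\to a\to y\to w$ one can take $W=\{a,w\}$.
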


The proof of Theorem~\ref{thm:prefix-set} relies on the following lemmas.
These lemmas serve two purposes: first, they establish the
properties of the sets previously defined, and second, they
guarantee that $S'$ strictly increases the size of the prefix node set $S$.

\begin{restatable}{lemma}{vstructsetproperty}\label{lem:vstruct-set-property}
    Let $ S $ be a prefix node set.
    If $w \in \vstructset^m$, then $\Des[w] \subseteq \vstructset^m$.
    Furthermore, we have ${\vstructset^m \cap \src(\bar{S}) = \varnothing}$.
\end{restatable}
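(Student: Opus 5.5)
We work in the graph $\mathcal{G}$ throughout, reading each CI statement as a d-separation statement since $\mathbb{P}$ respects $\mathcal{G}$. Suppose $w\in\vstructset^m$, witnessed by $u\in V$, $v\in\bar S$ and $W\subset\bar S$ with $|W|=m$. Then $u\CI v\mid S\cup W$ but $u\not\CI v\mid S\cup W\cup\{w\}$.

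\textbf{Structure of the witness.} The d-connection given $S\cup W\cup\{w\}$ yields an active path $P$ between $u$ and $v$. This path must fail to be active given $S\cup W$. Adding a node to the conditioning set can only help activate a path through colliders. Hence $P$ contains a collider $c$ with $w\in\Des[c]$ and $\Des[c]\cap(S\cup W)=\varnothing$. We also record a second fact about $P$: every non-collider on $P$ lies outside $S\cup W\cup\{w\}$.

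\textbf{First claim: $\Des[w]\subseteq\vstructset^m$.} Fix $w'\in\Des(w)$. Since $S$ is a prefix set and $w\notin S$, we have $w'\notin S$, so $w'\in\bar S$. We keep the same $u,v,W$. If $w'\in W$, then $c$ would have a descendant in $W$, contradicting the previous step; so $w'\notin W$.

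The path $P$ remains active given $S\cup W\cup\{w'\}$, for two reasons. First, $w'\in\Des[c]$, so the problematic collider $c$ is still activated, and every other collider was already active. Second, no non-collider on $P$ equals $w'$. To justify the second point, note that a non-collider $x$ on $P$ equal to $w'$ would have to be handled separately. In that case I would take the standard route: among active paths, pick one minimizing length, or rerouting through descendants. Concretely, a non-collider equal to $w'$ means $w'$ is on $P$ while $w'\in\Des(c)$. I would then splice the directed path $c\to\dots\to w'$ with the rest of $P$ to obtain an active path given $S\cup W\cup\{w'\}$. This is the standard argument from Lauritzen's shortcut lemma. If the splicing is too fiddly, an alternative is to argue the claim abstractly. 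The d-separation $u\CI v\mid S\cup W$ combined with dependence given $S\cup W\cup\{w'\}$ follows from known facts: conditioning on a descendant of a node that activates a path preserves d-connection, by the ``descendant of collider'' clause. This gives $w'\in\vstructset^m$.

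\textbf{Second claim: $\vstructset^m\cap\src(\bar S)=\varnothing$.} Take $w\in\vstructset^m$ with collider $c$ as above. If $c\neq w$, then $c\in\Anc(w)$, and we need $c\in\bar S$. This holds because $\Des[c]\cap S=\varnothing$. Hence $w\notin\src(\bar S)$.

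If instead $c=w$, then $w$ has two parents $a,b$ on $P$. At least one of them must lie in $\bar S$. Otherwise both are in $S$, and being non-colliders on $P$ (or endpoints) they would block $P$ or force it to be trivial. In more detail, if $a\in S$ is a non-collider it blocks $P$. If $a$ is an endpoint, then $a=u$, since $v\in\bar S$. So we may assume that, say, $b\ne u$ and $b$ is a non-collider; then $b\notin S$, so $b\in\Anc(w)\cap\bar S$. The edge case where $P$ is $u\to w\leftarrow v$ also works, since $v\in\bar S$ is then a parent of $w$.

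\textbf{Main obstacle.} I expect the main obstacle to be the rigor of the path-activity bookkeeping in the first claim, namely when $w'$ itself lies on $P$ as a non-collider. I would handle it by rerouting along the directed path from $c$ to $w'$, and by choosing $P$ minimal so that this case can be excluded or resolved.
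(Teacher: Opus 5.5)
Your second claim is correct and is, in substance, the paper's argument. The paper routes it through the auxiliary set $A_{S\cup W}$, but the content is the same: the collider $c$ opened only by $w$ lies in $\Anc[w]\cap\bar S$, and if $c=w$, one of its parents on $P$ is $v$ or a non-collider, hence outside $S$.

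The first claim has a genuine gap exactly where you flag it. You never establish d-connection given $S\cup W\cup\{w'\}$ when $w'$ lies on $P$ as a non-collider. Your ``abstract alternative'' is circular. The descendant-of-collider clause only guarantees that colliders stay open; it says nothing about $w'$ sitting on the path as a non-collider, which is the whole difficulty. The concrete splice is not justified as stated either. Suppose you replace a segment of $P$ by a directed path from an arbitrary unopened collider $c$ into $w'$. Nothing you have used guarantees that the edge of $P$ you keep at $w'$ points into $w'$. If it points out of $w'$, then $w'$ is a non-collider on the spliced path and, being in the conditioning set, blocks it. Choosing $P$ of minimal length does not help. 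What rules this out is the hypothesis $u\CI v\mid S\cup W$, which your proof of the first claim never uses.

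The missing step combines that independence with an extremal choice of collider. This is how the paper argues in Lemma~\ref{lem:as-property}, applied to the non-prefix set $S\cup W$.
\begin{itemize}
\item Let $k$ and $k'$ be the leftmost and rightmost colliders on $P$ with no descendant in $S\cup W$. All such colliders lie in $\Anc[w]\subseteq\Anc(w')$.
\item If $w'$ lies between $u$ and $k$, replace the segment of $P$ from $w'$ to $k'$ by the directed path from $k'$ to $w'$. Argue symmetrically if $w'$ lies between $k'$ and $v$.
\item If $w'$ lies between $k$ and $k'$, replace both segments by directed paths from $k$ and from $k'$ into $w'$. This makes $w'$ a collider outright.
\end{itemize}
On the resulting path $Q$, every non-collider lies outside $S\cup W$, since the new ones are descendants of $k$ or $k'$, which are not in $\Anc[S\cup W]$. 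By the extremal choice, every collider other than possibly $w'$ lies in $\Anc[S\cup W]$. So if $w'$ were a non-collider on $Q$, then $Q$ would d-connect $u$ and $v$ given $S\cup W$, contradicting the independence. Hence $w'$ is a collider on $Q$, and $Q$ is active given $S\cup W\cup\{w'\}$. If the directed segment meets the rest of $P$, you get a walk, which reduces to an active path in the standard way.

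The same splicing, from $k'$ down to $u$ or from $k$ down to $v$, also shows $u,v\notin\Des(w)$. You need this because your fixed witness $(u,v,W)$ degenerates when $w'\in\{u,v\}$.
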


\begin{figure}[t]
    \centering
    \conditionalresize{\def\@dist{0cm}
\begin{tikzpicture}
    \node[base] at (-3, -2) (0) {0};
    \node[base] at (-3, 0) (1) {1};
    \node[base] at (0, -2) (2) {2};

    \node[f-node] at (3, -2) (3) {3};
    \node[d-node] at (3, 0) (4) {4};

    \draw[dedge] (0) -- (2);
    \draw[dedge] (1) -- (2);
    \draw[dedge] (2) -- (3);
    \draw[dedge] (3) -- (4);
    \draw[dedge] (1) -- (4);

    \node[above = \@dist of 2] (5) {$ \mathrel{\rotatebox{90}{$\in$}} $};
    \node[above = -0.15cm of 5] (6) {$ \src(\bar{S}) $};
    \node[right = \@dist of 3] (7) {$ \in \meekoneset^1 $};
    \node[right = \@dist of 4] (8) {$ \in \vstructset^1 \setminus \meekoneset^1 $};

    \node[] at (-3, -1) (S) {\textbf{S}};
    \node[fitbox, fit=(0) (1) (S)] (box) {};
\end{tikzpicture}

 }
    \caption{
        Example graph illustrating the classification of nodes into
        sets $ \vstructset^1 $ and $ \meekoneset^1 $ given the prefix node
        set $ S = \{0,1\} $.
    }
    \label{fig:f-descendants}
\end{figure}
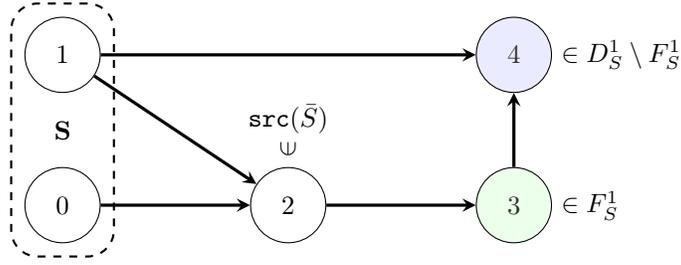

The preceding lemma shows that $\vstructset^m$ contains all
descendants of its members.
While descendants of nodes in $ \meekoneset^m $ are not necessarily
in $\meekoneset^m$, they must be in
$\vstructset^m\cup\meekoneset^m$, as shown in the following lemma;
see also Figure~\ref{fig:f-descendants}.

\begin{restatable}{lemma}{meeksetproperty}\label{lem:meek-set-property}
    Let $ S $ be a prefix node set.
    If $ w \in \meekoneset^m \setminus \left(\meekoneset^1 \cup \dots \cup
    \meekoneset^{m-1}\right)$, then $ \Des[w] \subseteq \meekoneset^{m} \cup
    \vstructset^m$.
    Furthermore, $ \meekoneset^m \cap \src(\bar{S}) = \varnothing $.
\end{restatable}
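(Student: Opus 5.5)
The plan is to first pin down what membership in $\meekoneset^m$ says graphically, and then treat the two claims separately. Fix $w\in\meekoneset^m$ with witnesses $u\in S$ and $W\subset\bar S$, $|W|=m$, so that $u\not\CI w\mid S$ and $u\CI w\mid S\cup W$.

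\textbf{Active paths given $S$.} I first describe every path $\pi$ between $u$ and $w$ that is active given $S$.
\begin{itemize}
\item Let $y$ be the last node of $\pi$ lying in $S$, and $z$ its successor. Since $S$ is a prefix node set, the edge between them must be $y\to z$.
\item If $y\neq u$, then $y$ is a conditioned non-collider, so $\pi$ would be blocked. Hence $y=u$, and all interior nodes of $\pi$ lie in $\bar S$.
\item A collider on $\pi$ is in $\bar S$. Its descendants are also in $\bar S$, again because $S$ is prefix, so none of them lies in $S$ and the collider blocks $\pi$. Hence $\pi$ has no colliders.
\item A collider-free path starting with $u\to z$ must be directed. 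So every active path given $S$ has the form $u\to z\to\cdots\to w$ with interior in $\bar S$.
\end{itemize}

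\textbf{Source nodes are excluded.} Suppose $w\in\src(\bar S)$. Then $w$ has no ancestor in $\bar S$, so the directed path above has no interior nodes and is the single edge $u\to w$. Adjacent nodes cannot be d-separated by any set, which contradicts $u\CI w\mid S\cup W$. This gives $\meekoneset^m\cap\src(\bar S)=\varnothing$.

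\textbf{Setup for the descendant claim.} Let $d\in\Des[w]$; the case $d=w$ is trivial, so assume $d\neq w$. Concatenating a directed path $u\to\cdots\to w$ with a directed path $w\to\cdots\to d$ gives a directed path from $u$ to $d$. By acyclicity it is simple, it lies in $\bar S$ after $u$, and it has no colliders. Hence $u\not\CI d\mid S$. Write $Z=S\cup W$.
\begin{itemize}
\item If $u\CI d\mid Z$, then $d\in\meekoneset^m$, with the same $u$ and $W$, and we are done.
\item If $u\not\CI w\mid Z\cup\{d\}$ and $d\notin W$, then the triple $(u,w,W)$ witnesses $d\in\vstructset^m$: $u\CI w\mid S\cup W$ but $u\not\CI w\mid S\cup W\cup\{d\}$, with $w\in\bar S$.
\end{itemize}

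\textbf{Remaining case.} What remains is $u\not\CI d\mid Z$ together with $u\CI w\mid Z\cup\{d\}$, or else $d\in W$. For $d\notin W$, I would use weak transitivity of d-separation. From $u\CI w\mid Z$ and $u\CI w\mid Z\cup\{d\}$ it gives $u\CI d\mid Z$ or $w\CI d\mid Z$. The first is excluded, so $w\CI d\mid Z$. This means every directed path from $w$ to $d$ passes through $W$, so $d$ is a descendant of some $x\in W$ lying below $w$.

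This is where minimality of $m$ should enter. Since $w\notin\meekoneset^1\cup\dots\cup\meekoneset^{m-1}$, no smaller conditioning set separates $w$ from $u$. In particular, $u\not\CI w\mid S\cup(W\setminus\{x\})$ for each $x\in W$, so each $x\in W$ is needed. I would try to exploit this as follows. Take a node $x\in W$ that is a descendant of $w$ and lies on a $w$-to-$d$ directed path. Dropping $x$ from $W$ should either re-open a path that conditioning on $d$ also re-opens, giving a $\vstructset^m$ witness for $d$ through a collider at $x$ or one of its ancestors, or show $u\CI d\mid Z$ directly. The case $d\in W$ would be handled the same way, using $W\setminus\{d\}$ plus one more node.

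\textbf{Expected main obstacle.} I expect this last case to be the hard part. It requires keeping the witness set at size exactly $m$ while finding a pair that is independent given $S\cup W'$ and becomes dependent once $d$ is added. I anticipate a careful case analysis on where $W$ meets the directed paths $u\to\cdots\to w\to\cdots\to d$.
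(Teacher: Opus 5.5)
Your proof of $\meekoneset^m\cap\src(\bar S)=\varnothing$ is correct and more direct than the paper's. You show that, since $S$ is prefix, every path between $u\in S$ and $w\in\bar S$ that is active given $S$ is a directed path $u\to\cdots\to w$ with interior in $\bar S$. A source $w$ would therefore be adjacent to $u$, and $S\cup W$ could not separate them. The paper instead takes a smallest separating set and places it in $\Anc(w)$ via Lemma~\ref{lem:meek-set-anc}; your route needs no minimality at all. Your reduction for the descendant claim is also sound up to the point where weak transitivity leaves either $w\CI d\mid S\cup W$ (so some $x\in W\cap\Des(w)$ lies on every directed $w$-to-$d$ path) or $d\in W$.

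The gap is that you never close this leftover case. Your plan of dropping some $x\in W\cap\Des(w)$ and searching for a new witness of size exactly $m$ targets a case that is in fact empty. The missing idea is that minimality forces $W\subseteq\Anc(w)$, which is the content of Lemma~\ref{lem:meek-set-anc}.

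\begin{itemize}
\item Since $u\not\CI w\mid S$ and $w\notin\meekoneset^1\cup\dots\cup\meekoneset^{m-1}$, no proper subset $W'$ of $W$ gives $u\CI w\mid S\cup W'$. The paper then applies Lemma~\ref{lem:min-sep-set} (minimal separators lie in $\Anc(u)\cup\Anc(w)$) together with $\Anc(u)\subseteq S$.
\item You could also get it from your own path analysis. A path active given $S\cup(W\cap\Anc(w))$ has interior in $\bar S$ and colliders in $\Anc(w)$. So all its interior nodes lie in $\Anc(w)$, and it remains active given $S\cup W$. Thus $W\cap\Anc(w)$ already separates, and minimality gives $W\cap\Anc(w)=W$.
\end{itemize}

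By acyclicity, $W\cap\Des[w]=\varnothing$. Hence $d\notin W$, and the directed path $w\to\cdots\to d$ avoids $S\cup W$, contradicting $w\CI d\mid S\cup W$. So $u\not\CI d\mid S\cup W$ forces $u\not\CI w\mid S\cup W\cup\{d\}$, i.e.\ $d\in\vstructset^m$.

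Once the paper has $W\cap\Des(w)=\varnothing$, it skips weak transitivity. It appends the directed path from $w$ to $d$ to an active $u$--$d$ path given $S\cup W$. The only possible block on the result is $d$ as a collider, which conditioning on $d$ opens.
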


\begin{proof}[Proof of Theorem \ref{thm:prefix-set}]
    We first show that $ S' $ satisfies $ {\src(\bar{S})
    \subset S'} $.
    By
    Lemmas~\ref{lem:vstruct-set-property}~and~\ref{lem:meek-set-property}
    we have $(\vstructset^0 \cup
        \vstructset^1 \cup \meekoneset^1 \cup \dots \cup \vstructset^\ell
    \cup \meekoneset^\ell) \cap \src(S) = \varnothing $.
    Since $ \bar{S'} = \vstructset^0 \cup \vstructset^1 \cup
    \meekoneset^1 \cup \dots \cup \vstructset^\ell \cup \meekoneset^\ell $, it
    must hold that $ \src(\bar{S}) \subset S' $.

    Next we show that $ S' $ is a prefix node set.
    For this we only need to show that for all $ x \in S' $ and $ y \in
    \Anc(x) $, it holds that $ y \in S' $.
    For contradiction, assume $ x \in S' $ but $ y \notin S' $.
    We must have $ y \in \vstructset^i $ or $ y \in \meekoneset^j $
    for some $ i, j \leq m $.
    If $ y \in \vstructset^i $, then by
    Lemma~\ref{lem:vstruct-set-property} we have $ x \in \vstructset^i $, a
    contradiction.
    Otherwise, let $ j $ be the lowest number such that $ y \in
    \meekoneset^j $; then by Lemma \ref{lem:meek-set-property} we have $ x
    \in \meekoneset^j \cup \vstructset^j $, a contradiction.
    Therefore $ S' $ must be a prefix node set.

    We now bound the number of CI tests needed to obtain $S'$.
    Towards this, we derive the complexity of $ \vstructset^m $ and $\meekoneset^m$.
    Constructing $\vstructset^m$ requires checking, for all $w\in\bar{S}$,
    whether there exist $u \in V$, $v \in \bar{S} $, and $W \subset
    \bar{S}$ with $|W| = m$ such that $u\CI v \mid S \cup
    W$ and $u \not\CI v \mid S \cup W \cup \{w\}$.
    The required number of conditional independence tests is
    upper bounded by $2 \times \binom{|\bar{S}|}{1}
    \binom{|V| - 1}{1} \binom{|\bar{S}| - 1}{1}
    \binom{|\bar{S}| - 2}{m}$, which is in $
    \mathcal{O}(p^{m+3}) $.
    Therefore, computing $ \vstructset^m $ takes $
    \mathcal{O}(p^{m+3}) $ CI tests; similarly, $ \meekoneset^m $
    takes $ \mathcal{O}(p^{m+2}) $ CI tests.
    Since we compute up to $ m = \ell $, the total number of CI tests
    required is $ \mathcal{O}(p^{\ell+3}) $.
\end{proof}
 
\subsection{Learning the essential graph via
\texttt{\algoname}}\label{sec:algorithm}

\begin{algorithm}[htpb]
    \caption{{\algofullname} (\texttt{\algoshortname})}
    \label{alg:learning}
    \begin{algorithmic}[1]
        \Require CI queries from a distribution $\mathbb{P}$ that
        respects a DAG $ \mathcal{G} $
        \Ensure The essential graph of $ \mathcal{G} $

        \State $ \mathcal{S} \gets \text{empty ordered list}
        $;\label{algoline:init-start}
        \State $ S \gets \varnothing$;
        \State $ \mathcal{E} \gets \text{complete undirected graph on
        } V $;\label{algoline:init-end}

        \LComment{Prefix node set expansion}
        \While{$S\neq V$}\label{algoline:prefix-set-start}

        \State $ V' \gets V \setminus S$;
        \State $ \ell \gets 0 $;

        \While {there exists a clique in $ \mathcal{E}[V'] $ with
        size $ \ge \ell $}

        \LComment{Edge removal}
        \For {$ v \in V, w \in V' $ s.t. $ v - w $ in $ \mathcal{E}
        $}
        \State Remove $ v - w $ in $ \mathcal{E} $ if $ \exists W
        \subset V' $ with $ |W| = \ell $ s.t. $ {w \CI v \mid S \cup
        W} $;\label{algoline:cis}
        \EndFor

        \LComment{Ordering learned from v-structures}
        \State Compute $ \vstructset^\ell $;
        \State $ V' \gets V' \setminus \vstructset^\ell$;

        \If{$ \ell > 0 $}
        \LComment{Ordering learned from Meek rules}
        \State Compute $ \meekoneset^\ell $;
        \State $ V' \gets V' \setminus \meekoneset^\ell $;
        \EndIf

        \State $ \ell \gets \ell + 1 $;
        \EndWhile

        \State $ S \gets S \cup V' $;
        \State Add $ V' $ to the end of $ \mathcal{S} $;

        \EndWhile\label{algoline:prefix-set-end}

        \For{$v \in S_i, w \in S_j$ with $i<j$}\label{algoline:final-loop-start}
        \State If $v - w$ in $ \mathcal{E} $, replace with $ v \rightarrow w $;
        \EndFor\label{algoline:final-loop-end}

        \State \Return $\mathcal{E}$\label{algoline:end}
    \end{algorithmic}
\end{algorithm}

Our algorithm for learning essential graphs,
\algofullname \ (\texttt{\algoshortname}), is presented in
Algorithm~\ref{alg:learning}.
It extends the prefix node set method of \texttt{CCPG}
\citep{shiragur2024causal}.
Unlike \texttt{CCPG}, which only identifies a coarse,
partition-level graph, \texttt{\algoshortname} recovers
the complete essential graph.
In particular, we generalize their Definitions  4.1--4.3 by
introducing the parameter $m$; the original definitions are
recovered as special cases from our
Definitions~\ref{def:vstructset}~and~\ref{def:meekset} by
choosing $m=0$ or $m=1$. In the following, we prove
that this generalization is sufficient to identify all edges in
the essential graph, and we establish a stopping criterion for
Algorithm~\ref{alg:learning}, allowing us to bound the number of
CI tests needed.

\texttt{\algoname} initializes with a complete undirected graph $
\mathcal{E} $ on the node set $ V $.
It proceeds iteratively, constructing a sequence of expanding prefix
node sets $ {\varnothing \subset \dots \subset S \subset \dots
\subset V }$ by greedily adding elements according to
Theorem~\ref{thm:prefix-set}, while
simultaneously removing edges from the working graph.
The number of CI tests used in this construction is bounded by an
exponential function of $\ell$, which we show using our stopping
criterion---to be no greater than the size of the maximum undirected
clique in the essential graph.
A detailed, step-by-step example in Appendix~\ref{app:algo-example}
illustrates the execution of Algorithm~\ref{alg:learning} on a 5-node
graph, showing how the prefix node set is expanded, how edges are removed,
and how the final edge orientations are determined.

The sets $ \vstructset^m $ and $ \meekoneset^m $ are calculated using
the CI test results from the edge removal step (Line~\ref{algoline:cis}).
The computation of $ \vstructset^m $, for example, only requires some
targeted additional tests to identify descendants of v-structures
(see Algorithm~\ref{alg:vstructset}).
The specific implementation is detailed in Appendix~\ref{app:code}.

We now sketch the proof of Theorem~\ref{thm:algorithm}, which
establishes the correctness and guarantees for
Algorithm~\ref{alg:learning}.
The full formal proof can be found in Appendix~\ref{app:algorithm}.

\textbf{Correctness.} To show that the algorithm correctly outputs the essential graph $
\mathcal{E}(\mathcal{G}) $, we formally characterize the nodes
contained within the computed sets $ \vstructset^m $ and $ \meekoneset^m $ (Lemmas~\ref{lem:first-in-vstructset} and~\ref{lem:undirected-clique}).
This characterization demonstrates that these sets accurately capture
the necessary information for the algorithm to learn the partial
ordering defined by directed edges $\mathcal{E}(\mathcal{G})$.
This ordering is reflected in the partition $S_1, \dots, S_m$
identified by Algorithm~\ref{alg:learning}, where directed edges in $\mathcal{E}(\mathcal{G})$ exist only between nodes belonging to
different components~$S_i$~and~$S_j$.

\textbf{Number of CI tests.}
The key is to bound the maximum order $ \ell $ up to which the sets $
\vstructset^m $ and $ \meekoneset^m $ must be computed.
In Appendix~\ref{app:algorithm}, we prove the following result: if $ w \in \vstructset^m $ and $
w \notin \vstructset^0 \cup \dots \cup \vstructset^{m-1} $, then
there must exist a clique of size $ m $ in $ \Anc(w) \setminus S $.
This implies that the maximum required order is bounded by the
maximum clique in any of the components $ S_i $.
This leads to the overall complexity bound of $ {p^{\mathcal{O}(s)}} $
where $ s $ is the size of the maximum undirected clique in $
\mathcal{E}(\mathcal{G}) $.

 \section{Lower bound}\label{sec:lower-bound}

The following lemma from \citet{zhang2024membership} specifies the
exact CI tests on which two DAGs will disagree if they differ only by
a single edge, provided that edge is undirected in the essential
graph of the larger DAG.

\begin{lemma}[Lemma 7 in \citet{zhang2024membership}]\label{lem:testing}
    Let $ \mathcal{G} $ and $ \mathcal{H} $ be two DAGs such that
    $ \mathcal{H} $ differs from $ \mathcal{G}$ by missing
    one edge $ u \rightarrow v $, which is undirected in $
    \mathcal{E}(\mathcal{G}) $.
    Denote the maximum undirected clique in $ \mathcal{G} $
        containing $ u,v $ by $ S $.
        For disjoint sets $A,B,C\subset [p]$, the statement of
        whether $A$ and $B$ are d-separated by $C$ differs between
        $\mathcal{G}$ and $\mathcal{H}$ only if
    \begin{equation}
        (\Pa_{\mathcal{G}}(v) \cap \Ch_{\mathcal{G}}(u)) \cap S
        \subseteq C \cap S \subseteq (\Pa_{\mathcal{G}}(v)
        \setminus \{u\}) \cap S.
    \end{equation}
\end{lemma}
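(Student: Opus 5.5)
We prove the contrapositive: if the d-separation status of $A$ and $B$ given $C$ differs between $\mathcal{G}$ and $\mathcal{H}$, then $C\cap S$ lies between the two sets in the display. Since $\mathcal{H}\subset\mathcal{G}$ as edge sets, every path active in $\mathcal{H}$ given $C$ is active in $\mathcal{G}$ given $C$: colliders of $\mathcal{H}$ on the path are colliders in $\mathcal{G}$, and descendant sets only grow. So the only possible disagreement is that $A$ and $B$ are d-connected given $C$ in $\mathcal{G}$ but d-separated in $\mathcal{H}$. In that case every $\mathcal{G}$-active path must either use the edge $u\to v$, or have a collider whose only route to $C$ passes through $u\to v$. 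A standard rerouting argument (for example, via the moralization criterion applied to $\Anc_{\mathcal{G}}[A\cup B\cup C]$) reduces this to the case where $u$ and $v$ are d-connected given $C'=C\setminus\{u,v\}$ in $\mathcal{G}$ only through the edge $u\to v$. Equivalently, $u\CI_{\mathcal{H}} v\mid C'$ holds while the path $u\to v$ is active in $\mathcal{G}$.

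The main step is to show that if $u$ and $v$ are d-separated in $\mathcal{H}$ by $C'$, then both inclusions hold. \emph{Lower inclusion.} Take $c\in\Pa_{\mathcal{G}}(v)\cap\Ch_{\mathcal{G}}(u)\cap S$. The path $u\to c\to v$ exists in $\mathcal{H}$ and is blocked only if $c\in C'$, so $c\in C$. \emph{Upper inclusion.} Take $c\in (C\cap S)\setminus\Pa_{\mathcal{G}}(v)$ with $c\ne u,v$. Because $S$ is a clique in $\mathcal{G}$, $c$ is adjacent to both $u$ and $v$, and $c\notin\Pa(v)$ forces $v\to c$. If also $u\to c$, then $c$ is a collider on $u\to c\leftarrow v$ in $\mathcal{H}$ and $c\in C'$, so this path is active, a contradiction. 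If instead $c\to u$, then $c\to u\to v\to c$ is a directed cycle, which is impossible. Hence $C\cap S\subseteq \Pa_{\mathcal{G}}(v)\setminus\{u\}$ (and $u,v\notin C$ by disjointness).

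The main obstacle is making the reduction rigorous, namely showing that a disagreement on arbitrary $A,B,C$ forces a disagreement on $u,v$ given $C'$. I would argue this with moral graphs. $\Anc_{\mathcal{H}}$ can only shrink relative to $\Anc_{\mathcal{G}}$, but since $u\in\Anc(v)$ in $\mathcal{G}$, the case where $v\in\Anc(A\cup B\cup C)$ and $u$ loses ancestral status needs separate care. Using that the edge $u\to v$ is undirected in $\mathcal{E}(\mathcal{G})$, one can pick a consistent DAG in $[\mathcal{G}]$ in which $u,v$ are adjacent in the ordering of $S$ and $\Pa(u)\subseteq\Pa(v)$. Then every collider activated through $v$ is also activated through $u$ or a common parent, and the separating path in $\mathcal{H}$ can be rerouted through $u$ and $v$ within $S$. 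Finally, I would apply the clique argument above to $S$ (the clique structure only matters inside $S$), giving the stated inclusions intersected with $S$.
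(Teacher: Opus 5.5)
The paper does not prove this lemma; it quotes it as Lemma~7 of \citet{zhang2024membership}, so your argument has to stand on its own. Two parts of it are correct. The monotonicity observation holds: every $\mathcal{H}$-active path is $\mathcal{G}$-active, so the only possible disagreement is d-connection in $\mathcal{G}$ versus d-separation in $\mathcal{H}$. The clique step also holds: $u\CI_{\mathcal{H}} v\mid C'$ forces both inclusions for nodes $c\neq u,v$.

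\textbf{The reduction is only asserted, and your sketch for it is invalid.} The step that carries all the content is that an arbitrary disagreement on $(A,B,C)$ forces $u\CI_{\mathcal{H}} v\mid C'$. You propose to replace $\mathcal{G}$ by another member of $[\mathcal{G}]$ in which $u,v$ are consecutive in the ordering of $S$ and $\Pa(u)\subseteq\Pa(v)$. That is not a legitimate WLOG, because $\mathcal{H}$ is defined as this particular $\mathcal{G}$ minus $u\to v$:
\begin{itemize}
\item Changing $\mathcal{G}$ changes $\mathcal{H}$, in general even its Markov equivalence class. For example, removing $u\to v$ from the clique ordered $u,w,v$ gives the chain $u\to w\to v$, while removing it from the ordering $u,v,w$ gives the v-structure $u\to w\leftarrow v$.
\item It also changes both sets $\Pa_{\mathcal{G}}(v)\cap\Ch_{\mathcal{G}}(u)\cap S$ and $(\Pa_{\mathcal{G}}(v)\setminus\{u\})\cap S$ in the conclusion.
\item Under your normalization the lower set is empty. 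In the proof of Theorem~\ref{thm:lower}, $u$ and $v$ are deliberately separated in the ordering of $S$ by $W$ so that both bounds equal $W$, so the normalized case is not the one that matters.
\end{itemize}
The second kind of disagreement you list is also unaddressed: a collider whose only directed route into $C$ uses $u\to v$. That concerns $u$ dropping out of $\Anc[C]$, and nothing in the sketch explains why it should yield $u\CI_{\mathcal{H}} v\mid C'$, which is a statement about the $u$--$v$ connection rather than about $u$'s ancestral status.

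\textbf{The claim $u,v\notin C$ ``by disjointness'' is unjustified.} It is valid only when $A=\{u\}$ and $B=\{v\}$. For general $A,B$, the upper inclusion asserts that a disagreement forces $u\notin C$ and $v\notin C$, and this must be proved.
\begin{itemize}
\item The case $u\in C$ is easy. Then $u$ is a blocking non-collider on every path through $u\to v$, and $\Anc_{\mathcal{G}}[C]=\Anc_{\mathcal{H}}[C]$.
\item The case $v\in C$ is not easy. An active path through $u\to v$ must have $v$ as a collider $u\to v\leftarrow b$. Since $u-v$ is undirected in $\mathcal{E}(\mathcal{G})$, this is not a v-structure, so $b\sim u$. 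But the shortcut through the $u$--$b$ edge can turn $u$ into a collider that is no longer in $\Anc_{\mathcal{H}}[C]$, and that has to be handled.
\end{itemize}

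\textbf{What a complete argument needs.} Work directly in the fixed $\mathcal{G}$, using two properties of essential graphs:
\begin{itemize}
\item A parent of $u$ outside the chain component of $u,v$ is also a parent of $v$, since there is no induced $x\to y - z$ in $\mathcal{E}(\mathcal{G})$.
\item If $x-y$ is undirected and $x\to y$ in $\mathcal{G}$, then $x$ is adjacent to every other parent of $y$, because $x\to y$ lies in no v-structure.
\end{itemize}
With these, reroute active paths, and the ancestral routes of colliders, around the missing edge. You must then show that when no reroute exists, $C$ contains $\Pa_{\mathcal{G}}(v)\cap\Ch_{\mathcal{G}}(u)\cap S$, excludes $u$ and $v$, and meets $S$ only inside $\Pa_{\mathcal{G}}(v)$. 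Your proposal does not supply this.
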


\begin{proof}[Proof of Theorem \ref{thm:lower}]
    First, note that by the binomial theorem we have $ 2^{s} =
    \sum_{i=0}^{s}\binom{s}{i}$. This number is exactly how many ways
    we can choose an unordered subset of any size from a fixed set of
    $ s $ elements.
    Therefore $ 2^s - \binom{s}{s-1} - \binom{s}{s} =2^s-s-1$ is exactly all
    the ways we can choose a subset of size less or equal to $ s-2 $
    from $ s $ nodes.

    Let $ S $ be the largest undirected clique in $ \mathcal{E}(\mathcal{G}) $.
    If less than $ 2^s -  s - 1 $ tests have been done,
    there must be a subset $ W \subseteq S $ of $ s-2 $ nodes or
    less, such that no test $ A \CI B \mid C $ has been performed
    where $ C \cap S = W $.
Since $ S $ is an undirected clique in $ \mathcal{E}(\mathcal{G})
    $, its nodes can be ordered arbitrarily to obtain a valid DAG in
    $ [\mathcal{G}] $.
    Let $ \mathcal{F} \in [\mathcal{G}] $ be a DAG such that in the topological
    ordering of $ S $ all nodes in $ W $ are placed consecutively
    after the first node.
    Let $ u $ denote this first node in the topological ordering, and let
    $ v $ denote the node after all nodes in $ W $.
    Let $ \mathcal{H} $ be the DAG obtained by removing the edge $ u
    \rightarrow v $ from $ \mathcal{F} $.
    For disjoint sets $ A, B, C $ testing if $ A $ and $ B $ are
    independent given $ C $ will disagree between $ \mathcal{F} $ and
    $ \mathcal{H} $ only if $ C \cap S = W $.
    This is because using Lemma \ref{lem:testing} we have
    \begin{equation}
        \begin{split}
            W &= (\Pa_{\mathcal{F}}(v) \cap \Ch_{\mathcal{F}}(u)) \cap
            S \subseteq C \cap S
            \ifbool{aistats}{\\ &}{}
            \subseteq (\Pa_{\mathcal{F}}(v) \setminus \{u\}) \cap S = W.
        \end{split}
    \end{equation}
    Since no CI tests have been performed that
    satisfy $C \cap S = W$, all the performed tests agree with $
    [\mathcal{G}] $ and $ [\mathcal{H}] $.
    Therefore, more tests must be performed in order to distinguish
    the MEC classes $[\mathcal{G}]$ and $ [\mathcal{H}]$.
\end{proof}

We remark that our lower bound in Theorem~\ref{thm:lower} is strictly
stronger than the one provided in \citet{zhang2024membership}.
Our result provides an \emph{any-case} guarantee: for any collection of
fewer than $2^{\Omega(s)}$ CI tests, we show an indistinguishable MEC
is guaranteed to exist.
In contrast, \citet{zhang2024membership} establish a \emph{worst-case}
result by constructing a specific MEC and proving that no algorithm
can distinguish it with fewer than $2^{\Omega(s)}$ tests.
 \section{Experiments}\label{sec:experiments}

In this section, we empirically evaluate our proposed algorithm, \algofullname
\ (\texttt{\algoshortname}),
and its variant, \texttt{\algoshortname+}, which incorporates $
\mathcal{O}(p^2) $ additional CI tests.  We observe that these
additional CI tests improve
performance in practice (details can be found in
Appendix~\ref{app:variant}).
We also benchmark performance against established causal discovery algorithms
on synthetic data from linear Gaussian structural equation
models~\citep{wright1921correlation} on random Erd\H{o}s-R\'enyi
graphs \citep{erdos1959random}, semi-synthetic gene expression data
obtained using the SERGIO simulator~\citep{dibaeinia2020sergio}, and
a real-world dataset.
Implementation details and data descriptions are provided in
Appendices~\ref{app:code} and \ref{app:experiments}, respectively.
Appendix~\ref{app:additional-experiments} presents additional
experiments, including results on scale-free graphs, networks with
thousands of nodes, and comparisons of the number of CI tests
performed by different algorithms.
All code for these experiments is available at
\url{https://github.com/uhlerlab/greedy-ancestral-search}.

\subsection{Linear Gaussian synthetic data}

\def\neighborhoodcaption{
    Comparison across Erd\H{o}s-R\'enyi graphs on 50 nodes and
    increasing expected neighborhood size.
    Results are averaged over 3 runs.
    (a) Execution time (seconds) presented on a logarithmic scale.
    (b) Accuracy comparison measured by Structural {H}amming
    Distance (SHD) between predicted and ground-truth graphs
    normalized by the number of possible edges.
    Shaded regions show the standard deviation.
}

\begin{figure}[h!]
    \centering
    \begin{subfigure}[b]{\ifbool{aistats}{0.42}{0.48}\textwidth}
        \centering
        \resizebox{\linewidth}{!}{\includegraphics{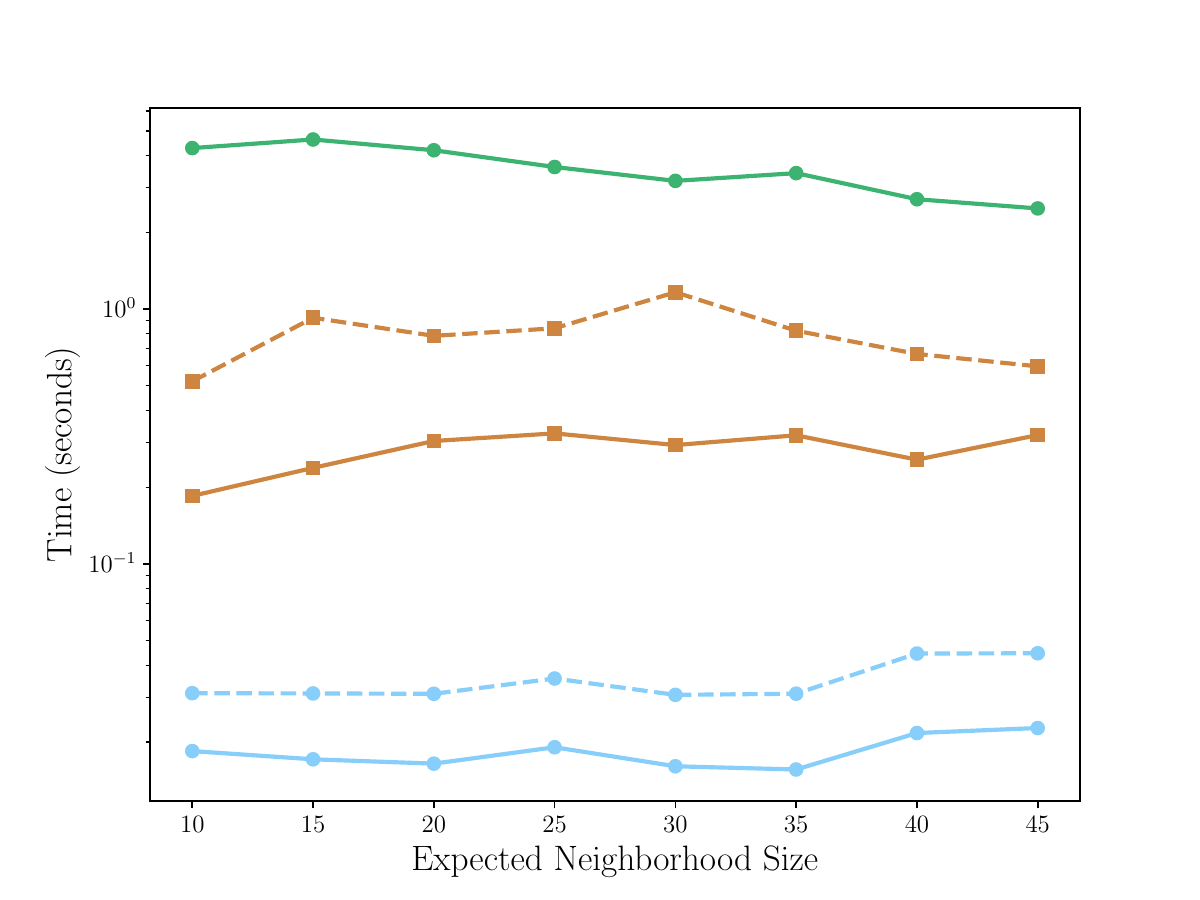}
        }
        \caption{}
    \end{subfigure}
    \hfill
    \begin{subfigure}[b]{\ifbool{aistats}{0.42}{0.48}\textwidth}
        \centering
        \resizebox{\linewidth}{!}{\includegraphics{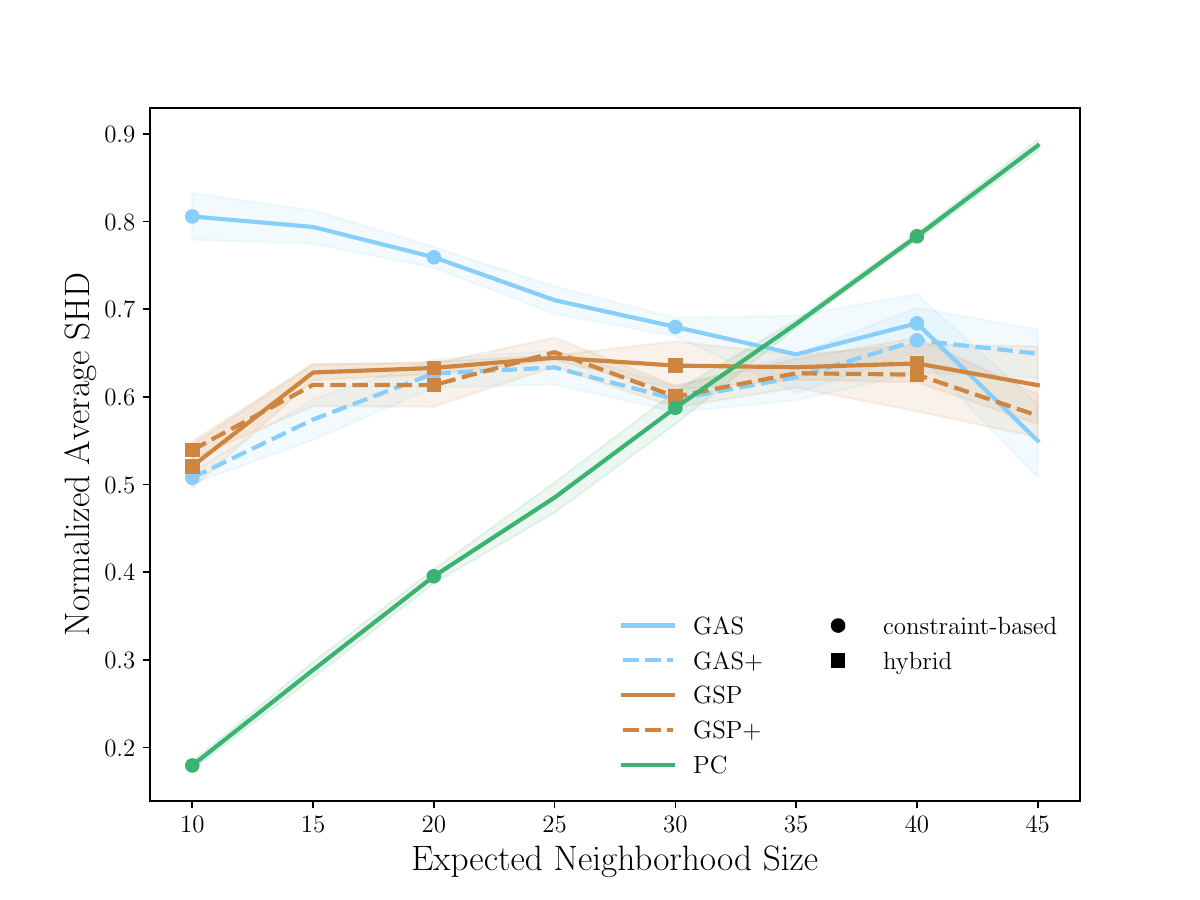}
        }
        \caption{}
        \label{fig:neighborhood-accuracy}
    \end{subfigure}
    \caption{\neighborhoodcaption}
    \label{fig:neighborhood}
\end{figure}

We compare both implementations of Algorithm \ref{alg:learning},
\texttt{\algoshortname} and \texttt{\algoshortname+}, against other
constraint-based and hybrid algorithms that rely on CI tests.
We include the popular constraint-based method \texttt{PC}
(\citealp{spirtes2001}), using the order-independent variant from
\citet{colombo2014order}.
We additionally include \texttt{GSP} with both depth $ 4 $ and depth
$ \infty $ (\citealp{solus2021consistency}).
A comparison with algorithms \texttt{FCI} and \texttt{GRaSP}$_2$
(\citealp{spirtes2001,lam2022greedy}), along with additional metrics
and a similar experiment scaled to 2500 nodes, can be found in
Appendix~\ref{app:additional-experiments}.

To ensure fair comparisons, evaluations with algorithms
originating from different softwares are presented separately.
Within each evaluation set, \texttt{\algoshortname} and
\texttt{\algoshortname+} used the same CI tester as the respective
baseline algorithms.

The experimental results are presented in Figure~\ref{fig:neighborhood}.
Regarding computational speed, a consistent finding is that both variants
of our algorithm, \texttt{\algoshortname} and \texttt{\algoshortname+}, are
significantly faster than the other methods.
Analyzing the prediction accuracy shown in
Figure~\ref{fig:neighborhood-accuracy}, \texttt{\algoshortname+} achieves
accuracy comparable to the \texttt{GSP} algorithms.
The \texttt{PC} algorithm demonstrates high accuracy in sparser
graphs, but its performance degrades by a large margin in denser graphs.
In contrast, \texttt{\algoshortname} performs fewer tests,
resulting in the highest accuracy in denser graphs but a lower
accuracy in sparse settings.

\subsection{SERGIO}

For our semi-synthetic evaluation, we simulated gene expression data
using the SERGIO model (\citealp{dibaeinia2020sergio}).
This enables precise comparison of inferred structures against
the ground truth DAG.
As in \citet{dibaeinia2020sergio}, we
utilized their DS1 graph, which samples $ 2700 $ cells each with the
expression of $ 100 $ genes.
Further details on this experimental setup are provided in
Appendix~\ref{app:experiments}.

\begin{figure}[htpb]
    \centering
    \begin{subfigure}[b]{0.22\textwidth}
        \centering
        \resizebox{\linewidth}{!}{\includegraphics{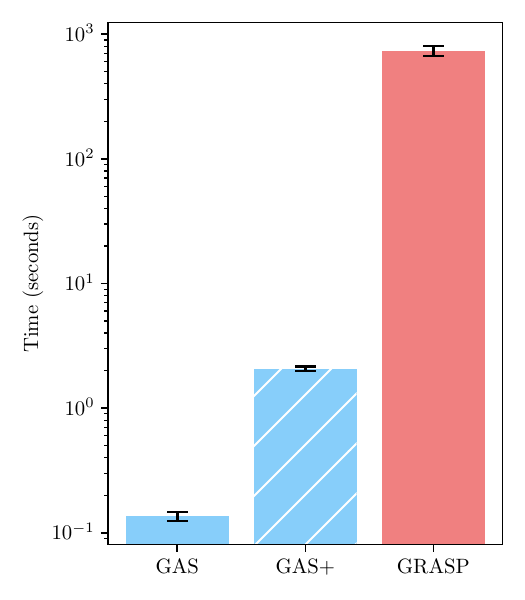}
        }
        \caption{}
    \end{subfigure}
    \begin{subfigure}[b]{0.22\textwidth}
        \centering
        \resizebox{\linewidth}{!}{\includegraphics{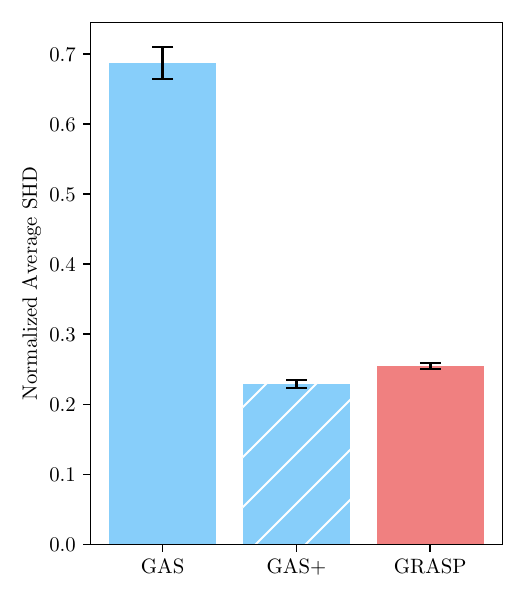}
        }
        \caption{}
    \end{subfigure}
    \caption{
        Comparison on semi-synthetic data generated using SERGIO.
        Results are averaged over 5 runs.
        (a) Execution time (seconds) presented on a logarithmic scale.
        (b) Accuracy comparison measured by Structural {H}amming
        Distance (SHD) between predicted and ground-truth graphs
        normalized by the number of possible edges.
    }
    \label{fig:sergio}
\end{figure}

Figure~\ref{fig:sergio} presents the comparative results on this
semi-synthetic data. The high-degree structure of this dataset makes
standard algorithms
like \texttt{PC} and \texttt{FCI} computationally infeasible due to
prohibitive runtimes (further discussed in
Appendix~\ref{app:experiments-sergio}). We therefore benchmark our
proposed methods, \texttt{\algoshortname}
and \texttt{\algoshortname+}, against \texttt{GRaSP}$_2$
(\citealp{lam2022greedy}).\footnote{
    \texttt{GSP}'s performance was found to be less competitive
    compared to the methods presented here.
}
The findings indicate that while \texttt{\algoshortname} is the fastest
algorithm overall, \texttt{\algoshortname+} achieves the highest accuracy
in recovering the underlying causal structure while being much faster than
\texttt{GRaSP}$_2$.

\subsection{Real-world example}

We consider the 6-variable Airfoil dataset
(\citealp{asuncion2007uci}); see \citet{lam2022greedy}.
Lacking a complete ground-truth DAG, we assess consistency
on the known causal relations for this system: (1) \emph{velocity},
\emph{chord} and \emph{attack} are experimental variables and thus
exogenous, while (2) \emph{pressure} is the measured outcome,
expected to be an effect of the other variables.

Figure~\ref{fig:airfoil} shows the partially directed graphs
learned by different algorithms on the Airfoil dataset. While all
algorithms correctly identify \emph{pressure} as a
downstream variable, \texttt{\algoname} is the most efficient,
requiring the fewest CI tests and running twice as
fast as the score-based \texttt{GRaSP}$_2$ algorithm.
However, we note that only \texttt{PC} correctly identifies
\emph{attack} as an exogenous variable.
For \texttt{GAS} and \texttt{GAS+}, this discrepancy stems from
their reliance on conditional dependencies to infer ancestry.
Specifically, the data indicates that \emph{frequency} and
\emph{chord} are marginally independent but become dependent when
conditioned on \emph{attack}.
If we assume the distribution respects an underlying DAG, this
pattern implies that \emph{attack} is a collider (or a descendant
of one) and thus non-exogenous.
While the CI test results may be correct, this implication can be
broken by latent confounders.

\begin{figure}[htpb]
    \centering
    \begin{subfigure}[b]{\ifbool{aistats}{0.17}{0.23}\textwidth}
        \centering
        \resizebox{\linewidth}{!}{\begin{tikzpicture}
    \def\radius{2cm}
    \def\angleoffset{90}

    \node (vel)   at (\angleoffset:\radius)      {Velocity};
    \node (att)   at (\angleoffset-300:\radius)  {Attack};
    \node (chord) at (\angleoffset-60:\radius)  {Chord};

    \node (freq)  at (\angleoffset-240:\radius)   {Frequency};
    \node (disp)  at (\angleoffset-120:\radius)  {Displacement};

    \node (press) at (\angleoffset-180:\radius)  {Pressure};

    \draw[uedge] (vel) -- (freq);
    \draw[dedge] (vel) -- (press);

    \draw[uedge] (chord) -- (disp);
    \draw[dedge] (chord) -- (att);
    \draw[dedge] (chord) -- (press);

    \draw[uedge] (freq) -- (disp);
    \draw[dedge] (freq) -- (press);
    \draw[dedge] (freq) -- (att);

    \draw[dedge] (disp) -- (press);
    \draw[dedge] (disp) -- (att);

    \draw[dedge] (att) -- (press);

\end{tikzpicture}
         }
        \caption{\texttt{\algoshortname}}
        \label{fig:airfoil-algo}
    \end{subfigure}
    \hfill
    \begin{subfigure}[b]{\ifbool{aistats}{0.17}{0.23}\textwidth}
        \centering
        \resizebox{\linewidth}{!}{\begin{tikzpicture}
    \def\radius{2cm}
    \def\angleoffset{90}

    \node (vel)   at (\angleoffset:\radius)      {Velocity};
    \node (att)   at (\angleoffset-300:\radius)  {Attack};
    \node (chord) at (\angleoffset-60:\radius)  {Chord};

    \node (freq)  at (\angleoffset-240:\radius)   {Frequency};
    \node (disp)  at (\angleoffset-120:\radius)  {Displacement};

    \node (press) at (\angleoffset-180:\radius)  {Pressure};

    \draw[uedge] (vel) -- (freq);
    \draw[dedge] (vel) -- (press);
    \draw[dedge] (vel) -- (att);

    \draw[uedge] (chord) -- (disp);
    \draw[dedge] (chord) -- (att);
    \draw[dedge] (chord) -- (press);

    \draw[uedge] (freq) -- (disp);
    \draw[dedge] (freq) -- (press);
    \draw[dedge] (freq) -- (att);

    \draw[dedge] (disp) -- (press);
    \draw[dedge] (disp) -- (att);

    \draw[dedge] (att) -- (press);

\end{tikzpicture}
         }
        \caption{\texttt{\algoshortname+}}
        \label{fig:airfoil-algo-plus}
    \end{subfigure}
    \hfill
    \begin{subfigure}[b]{\ifbool{aistats}{0.17}{0.23}\textwidth}
        \centering
        \resizebox{\linewidth}{!}{\begin{tikzpicture}
    \def\radius{2cm}
    \def\angleoffset{90}

    \node (vel)   at (\angleoffset:\radius)      {Velocity};
    \node (att)   at (\angleoffset-300:\radius)  {Attack};
    \node (chord) at (\angleoffset-60:\radius)  {Chord};

    \node (freq)  at (\angleoffset-240:\radius)   {Frequency};
    \node (disp)  at (\angleoffset-120:\radius)  {Displacement};

    \node (press) at (\angleoffset-180:\radius)  {Pressure};

    \draw[uedge] (chord) -- (att);
    \draw[uedge] (chord) -- (disp);
    \draw[uedge] (disp) -- (att);

    \draw[dedge] (disp) -- (press);
    \draw[dedge] (att) -- (freq);
    \draw[dedge] (vel) -- (freq);
    \draw[dedge] (chord) -- (press);
    \draw[dedge] (vel) -- (press);
    \draw[dedge] (freq) -- (press);
\end{tikzpicture}
         }
        \caption{\texttt{PC}}
        \label{fig:airfoil-pc}
    \end{subfigure}
    \hfill
    \begin{subfigure}[b]{\ifbool{aistats}{0.17}{0.23}\textwidth}
        \centering
        \resizebox{\linewidth}{!}{\begin{tikzpicture}
    \def\radius{2cm}
    \def\angleoffset{90}

    \node (vel)   at (\angleoffset:\radius)      {Velocity};
    \node (att)   at (\angleoffset-300:\radius)  {Attack};
    \node (chord) at (\angleoffset-60:\radius)  {Chord};

    \node (freq)  at (\angleoffset-240:\radius)   {Frequency};
    \node (disp)  at (\angleoffset-120:\radius)  {Displacement};

    \node (press) at (\angleoffset-180:\radius)  {Pressure};

    \draw[uedge] (vel) -- (freq);

    \draw[dedge] (freq) -- (att);
    \draw[dedge] (chord) -- (att);
    \draw[dedge] (vel) -- (att);

    \draw[dedge] (chord) -- (disp);
    \draw[dedge] (att) -- (disp);

    \draw[dedge] (vel) -- (press);
    \draw[dedge] (freq) -- (press);
    \draw[dedge] (disp) -- (press);
    \draw[dedge] (chord) -- (press);
    \draw[dedge] (att) -- (press);
\end{tikzpicture}
         }
        \caption{\texttt{GRaSP}$_2$}
        \label{fig:airfoil-grasp}
    \end{subfigure}
    \caption{
        Partially directed graphs learned in the Airfoil dataset.
        \texttt{\algoname}, \texttt{\algoname+}, and \texttt{PC}
        performed $46$, $50$, and $104$ distinct CI tests, respectively.
    }
    \label{fig:airfoil}
\end{figure}
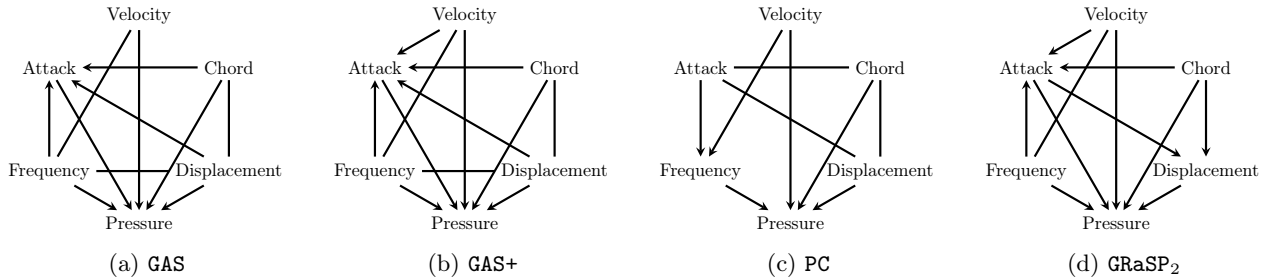

 \section{Discussion}\label{sec:discussion}

In this work, we established tight bounds on the number of
conditional independence tests required for causal structure discovery using constraint-based algorithms.
We propose an exponent-optimal algorithm achieving this bound up to a
logarithmic factor, requiring less
CI tests than existing algorithms.
Our empirical evaluations demonstrate that the proposed algorithm
is significantly faster than established baseline methods and
achieves highly competitive accuracy, particularly with denser graphs.

\textbf{Limitations and future work.}
This work is motivated by the goal of improving the efficiency of
constraint-based algorithms, both in terms of computational speed and
correctness.
We characterize efficiency by the number of CI tests performed, which
directly impacts computational speed.
However, as discussed in Section~\ref{sec:related-work}, the number
of CI tests is related with, but not equivalent to, the exact
correctness conditions required by an algorithm.
While we have shown that our algorithm can succeed under assumptions
weaker than the standard Markov and faithfulness conditions, its
precise correctness guarantees---and how they compare to those of
existing algorithms---remain unknown.
Moreover, understanding how finite-sample effects influence these
correctness conditions is an important direction for future work,
critical to understanding the potential and limitations of
constraint-based causal discovery.
Finally, our algorithm's reliance on both conditional independencies
and dependencies raises practical considerations.
Future work should explore whether the algorithm's performance on
finite, real-world data can be optimized by using different p-value
thresholds or distinct statistical tests for each constraint.
 
\subsubsection*{Acknowledgements}
We thank Kiran Shiragur for initial discussions that inspired this
work. We also thank the anonymous reviewers for their helpful feedback.
M.F.M.~was partially supported by the Eric and Wendy Schmidt Center at the Broad Institute, Fundaci\'o Privada Mir-Puig, and a MOBINT-MIF grant.
J.Z.~was partially supported by an Apple AI/ML PhD Fellowship.
C.U.~was partially supported by NCCIH/NIH (1DP2AT012345), ONR (N00014-24-1-2687), the United States Department of Energy (DE-SC0023187), and the Eric and Wendy Schmidt Center at the Broad Institute. 

\bibliography{refs}

\newpage
\appendix

\section{Meek rules}

\begin{proposition}[Meek rules, \cite{meek1995}]\label{prop:meek}
    We can infer all directed edges in $\mathcal{E}(\mathcal{G})$
    using the following four rules:
    \begin{enumerate}
        \item[R1.] If $i\rightarrow j \sim k$ and $i\not\sim k$, then
            $j\rightarrow k$.
        \item[R2.] If $i\rightarrow j \rightarrow k$ and $i\sim k$, then
            $i\rightarrow k$.
        \item[R3.] If $i\sim j, i\sim k, i\sim l, j\rightarrow k,
            l\rightarrow k$ and $j\not\sim l$, then $i\rightarrow k$.
        \item[R4.] If $i\sim j, i\sim k, i\sim l, j\rightarrow k,
            i\rightarrow j$ and $j\not\sim l$, then $l\rightarrow k$.
    \end{enumerate}
\end{proposition}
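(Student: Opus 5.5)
The plan is the classical two-part argument: \emph{soundness} (every orientation produced by R1--R4 holds in every DAG of $[\mathcal{G}]$) and \emph{completeness} (every edge left undirected after closing under R1--R4 is oriented both ways by members of $[\mathcal{G}]$). We start from the pattern $\mathcal{P}$: the skeleton of $\mathcal{G}$ with exactly the v-structures oriented. We then apply R1--R4 until nothing changes, and call the result $\mathcal{P}^*$. By the characterization of Markov equivalence in Section~\ref{sec:2.2} (same skeleton and same v-structures), showing that the directed edges of $\mathcal{P}^*$ are exactly the directed edges of $\mathcal{E}(\mathcal{G})$ proves the proposition.

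\textbf{Soundness.} We check each rule against an arbitrary $\mathcal{D}\in[\mathcal{G}]$ that agrees with the current partial orientation.
\begin{itemize}
\item R1: if $k\to j$ in $\mathcal{D}$, then $i\to j\leftarrow k$ with $i\not\sim k$ is a v-structure absent from $\mathcal{P}$. This is a contradiction.
\item R2: if $k\to i$, then $i\to j\to k\to i$ is a directed cycle.
\item R3: if $k\to i$, then acyclicity forces $j\to i$ and $l\to i$. Then $j\to i\leftarrow l$ with $j\not\sim l$ is a new v-structure.
\item R4: if $k\to l$, the edge $i - l$ must be oriented. If $l\to i$, then $k\to l\to i\to j\to k$ is a cycle. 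If $i\to l$, we get $i\to l$ together with $k\to l$; combining this with $j\to k$ and the non-adjacency $j\not\sim l$ also yields a contradiction (a new v-structure or a cycle). I would verify this case-by-case carefully.
\end{itemize}
By induction on the number of rule applications, every orientation in $\mathcal{P}^*$ is shared by all of $[\mathcal{G}]$.

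\textbf{Completeness.} First we establish structural facts about $\mathcal{P}^*$.
\begin{enumerate}
\item[(a)] $\mathcal{P}^*$ has no directed cycle. It is a partial orientation of the acyclic $\mathcal{G}$, because every orientation is sound.
\item[(b)] Whenever $a\to b - c$ appears, closure under R1 gives $a\sim c$. Closure under R2, applied together with the soundness facts, then gives $a\to c$.
\item[(c)] Every undirected component of $\mathcal{P}^*$ is chordal. This follows because it is the same skeleton as in $\mathcal{G}$ restricted to that component; an unoriented chordless cycle of length $\ge4$ would force a v-structure in $\mathcal{G}$ or a rule application.
\end{enumerate}
With these facts, the main construction is as follows. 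Take any undirected edge $x-y$ in $\mathcal{P}^*$. Orient its chain component by a perfect elimination ordering that starts at $x$, and separately one that starts at $y$; chordality lets us begin a perfect elimination ordering at any vertex. Orient all other components similarly. We must show that the resulting DAG $\mathcal{D}$ is acyclic and introduces no new v-structure.

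New v-structures inside a component are excluded by the perfect-elimination property. A v-structure $a\to b\leftarrow c$ with $a$ outside the component and $c$ inside is excluded by (b): $a\to b - c$ forces $a\to c$, so $a\sim c$. Acyclicity follows from (a) together with (b), which shows that the directed edges of $\mathcal{P}^*$ induce an acyclic order on the chain components.

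\textbf{Main obstacle.} I expect the hardest step to be proving (b) and (c) rigorously, because this is where R3 and R4 are genuinely needed. The concern is configurations where an undirected component receives directed edges from several parents, or where a chordless cycle is only partially oriented. My first attempt would be the approach of Andersson et al.: characterize $\mathcal{P}^*$ as a chain graph whose every "$a\to b - c$" induced subgraph is forbidden. I would show that any violation gives a rule application that was still available (R1 for non-adjacency, R2/R4 for an adjacent but wrongly oriented edge, R3 for a cycle through two converging arrows), contradicting closure.
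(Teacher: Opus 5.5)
The overall architecture (soundness, then completeness via chordal chain components and consistent extensions) is the standard one. However, two steps fail as written.

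\textbf{R4.} Your ``$i\to l$'' branch does not close, and no case analysis will close it. The DAG with edges $i\to j$, $j\to k$, $i\to k$, $k\to l$, $i\to l$ meets every premise of R4 as printed, including $j\not\sim l$. It is acyclic and has no v-structure, since its only non-adjacent pair $j,l$ has no common child.

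Worse, R4 as printed is unsound. Whenever its premises hold with $k - l$ still undirected, R1 applied to $j\to k - l$ (using $j\not\sim l$) forces $k\to l$, the opposite of R4's conclusion. The statement mis-transcribes Meek's rule. The paper's own proof of Lemma~\ref{lem:undirected-clique} uses the correct form, which in these labels reads: if $l\sim i$, $l\sim j$, $l - k$, $i\to j\to k$ and $i\not\sim k$, then $l\to k$. For that rule, $k\to l$ forces $i\to l$ (otherwise $l\to i\to j\to k\to l$ is a cycle), and then $i\to l\leftarrow k$ is a v-structure not in $\mathcal{P}$.

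Likewise, the premise $j\sim k$ in R1 and the premises $i\sim j$, $i\sim l$ in R3 must be undirected edges, as your arguments tacitly assume. For example, if $j\to i\leftarrow l$ is already a v-structure, the edge $i - k$ in R3 is genuinely reversible.

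\textbf{Fact (b).} R1 gives $a\sim c$ and R2 excludes $c\to a$. Nothing you cite excludes $a - c$, i.e.\ a triangle $a\to b - c - a$ with two undirected edges. No rule fires on these three vertices alone, so the local ``available rule'' strategy of your last paragraph cannot rule it out. Fact (a) does not rule it out either, since this is a partially directed rather than a directed cycle. Yet your boundary v-structure check and your acyclicity argument both rest on (b). This is the real content of Meek's completeness theorem, and it needs a non-local argument.

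One argument that works: among all such triangles in $\mathcal{P}^*$, choose one whose arrow $a\to b$ was added earliest, and split on why that arrow was added.
\begin{itemize}
\item If $a\to b\leftarrow d$ is a v-structure, consider the cases $c\not\sim d$, $d\to c$, $c\to d$, $c - d$. Respectively, the closure still allows R1 on $d\to b - c$, R1 on $d\to c - a$, R2 on $c\to d\to b$, or R3 orienting $c\to b$. This is where R3 is indispensable.
\item If $a\to b$ came from R1 via $e\to a$ (resp.\ R2 via $a\to f\to b$), each possible relation between $c$ and $e$ (resp.\ $f$) either leaves an R1 or R2 step available, or produces the triangle $e\to a - c - e$ (resp.\ $a\to f - c - a$) with an earlier arrow.
\item The R3 case (premises $a - g$, $a - h$, $g\to b\leftarrow h$) is similar. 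The same reasoning first forces $g\to c\leftarrow h$. Then R2, R3, or R1 applies to $a - c$ or to $a - h$, or else $g\to a\leftarrow h$ would be a v-structure of $\mathcal{G}$, contradicting that $a - g$ was undirected in $\mathcal{P}$.
\end{itemize}

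R4 is not needed for completeness at all. Prove completeness for the R1--R3 closure; the corrected, sound R4 can then only add arrows already in $\mathcal{E}(\mathcal{G})$.

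A minor point: a perfect elimination ordering must start at a simplicial vertex. What you want is an ordering starting at $x$ whose reverse is a perfect elimination ordering (e.g.\ LexBFS from $x$), with edges oriented forward.
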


\ifbool{aistats}{
    \begin{figure}[h!]
    \def\size{0.35}
    \def\sep{4em}
    \centering
    \begin{subfigure}[b]{\size\textwidth}
        \centering
        \resizebox{\linewidth}{!}{\begin{tikzpicture}
    \begin{scope}
        \node[base] (i) at (0, 0) {$i$};
        \node[base] (j) [below = of i] {$j$};
        \node[base] (k) [right =  of j] {$k$};

        \draw[dedge] (i) to (j);
        \draw[uedge] (j) to (k);

        \node[fitbox, fit=(i)] (0) {};
        \node[fitbox, fit=(j) (k)] (0) {};
    \end{scope}

    \node at (3.5, -1) {\huge$\Rightarrow$};

    \begin{scope}[xshift=5cm]
        \node[base] (i) at (0, 0) {$i$};
        \node[base] (j) [below = of i] {$j$};
        \node[base] (k) [right =  of j] {$k$};

        \draw[dedge] (i) to (j);
        \draw[dedge,violet] (j) to (k);

        \node[fitbox, fit=(i)] (0) {};
        \node[fitbox, fit=(j)] (0) {};
        \node[fitbox, fit=(k)] (0) {};
    \end{scope}
\end{tikzpicture}

         }
        \caption{Meek rule 1.}
    \end{subfigure}
    \hspace{\sep}
    \begin{subfigure}[b]{\size\textwidth}
        \centering
        \resizebox{\linewidth}{!}{\begin{tikzpicture}
    \begin{scope}
        \node[base] (i) at (0, 0) {$i$};
        \node[base] (j) [below = of i] {$j$};
        \node[base] (k) [right =  of j] {$k$};

        \draw[dedge] (i) to (j);
        \draw[dedge] (j) to (k);
        \draw[uedge] (i) to (k);

        \node[fitbox, fit=(i)] (0) {};
        \node[fitbox, fit=(j)] (0) {};
        \node[fitbox, fit=(k)] (0) {};
    \end{scope}

    \node at (3.5, -1) {\huge$\Rightarrow$};

    \begin{scope}[xshift=5cm]
        \node[base] (i) at (0, 0) {$i$};
        \node[base] (j) [below = of i] {$j$};
        \node[base] (k) [right =  of j] {$k$};

        \draw[dedge] (i) to (j);
        \draw[dedge] (j) to (k);
        \draw[dedge,violet] (i) to (k);

        \node[fitbox, fit=(i)] (0) {};
        \node[fitbox, fit=(j)] (0) {};
        \node[fitbox, fit=(k)] (0) {};
    \end{scope}
\end{tikzpicture}
         }
        \caption{Meek rule 2.}
    \end{subfigure}
    \hfill
    \begin{subfigure}[b]{\size\textwidth}
        \centering
        \resizebox{\linewidth}{!}{\def\padding{0.3cm}
\begin{tikzpicture}
    \begin{scope}
        \node[base] (i) at (0, 0) {$i$};
        \node[base] (j) [below = of i] {$j$};
        \node[base] (k) [right =  of j] {$k$};
        \node[base] (l) [above =  of k] {$l$};

        \draw[uedge] (i) to (j);
        \draw[uedge] (i) to (l);
        \draw[uedge] (i) to (k);
        \draw[dedge] (j) to (k);
        \draw[dedge] (l) to (k);

        \draw[fitbox]
        ($ (i.north west) + (-\padding, \padding) $) --
        ($ (j.south west) + (-\padding, -\padding) $) --
        ($ (j.south east) + (\padding, -\padding) $) --
        ($ (i.south east) + (\padding, -\padding) $) --
        ($ (l.south east) + (\padding, -\padding) $) --
        ($ (l.north east) + (\padding, \padding) $) --
        cycle;
        \node[fitbox, fit=(k)] (0) {};
    \end{scope}

    \node at (3.5, -1) {\huge$\Rightarrow$};

    \begin{scope}[xshift=5cm]
        \node[base] (i) at (0, 0) {$i$};
        \node[base] (j) [below = of i] {$j$};
        \node[base] (k) [right =  of j] {$k$};
        \node[base] (l) [above =  of k] {$l$};

        \draw[uedge] (i) to (j);
        \draw[uedge] (i) to (l);
        \draw[dedge,violet] (i) to (k);
        \draw[dedge] (j) to (k);
        \draw[dedge] (l) to (k);

        \draw[fitbox]
        ($ (i.north west) + (-\padding, \padding) $) --
        ($ (j.south west) + (-\padding, -\padding) $) --
        ($ (j.south east) + (\padding, -\padding) $) --
        ($ (i.south east) + (\padding, -\padding) $) --
        ($ (l.south east) + (\padding, -\padding) $) --
        ($ (l.north east) + (\padding, \padding) $) --
        cycle;
        \node[fitbox, fit=(k)] (0) {};

    \end{scope}
\end{tikzpicture}
         }
        \caption{Meek rule 3.}
    \end{subfigure}
    \hspace{\sep}
    \begin{subfigure}[b]{\size\textwidth}
        \centering
        \resizebox{\linewidth}{!}{\begin{tikzpicture}
    \begin{scope}
        \node[base] (i) at (0, 0) {$i$};
        \node[base] (j) [below = of i] {$j$};
        \node[base] (k) [right =  of j] {$k$};
        \node[base] (l) [above =  of k] {$l$};

        \draw[dedge] (i) to (j);
        \draw[uedge] (i) to (l);
        \draw[uedge] (i) to (k);
        \draw[dedge] (j) to (k);
        \draw[uedge] (l) to (k);

        \node[fitbox, fit=(i) (l)] (0) {};
        \node[fitbox, fit=(j)] (0) {};
        \node[fitbox, fit=(k)] (0) {};
    \end{scope}

    \node at (3.5, -1) {\huge$\Rightarrow$};

    \begin{scope}[xshift=5cm]
        \node[base] (i) at (0, 0) {$i$};
        \node[base] (j) [below = of i] {$j$};
        \node[base] (k) [right =  of j] {$k$};
        \node[base] (l) [above =  of k] {$l$};

        \draw[dedge] (i) to (j);
        \draw[uedge] (i) to (l);
        \draw[uedge] (i) to (k);
        \draw[dedge] (j) to (k);
        \draw[dedge,violet] (l) to (k);

        \node[fitbox, fit=(i) (l)] (0) {};
        \node[fitbox, fit=(j)] (0) {};
        \node[fitbox, fit=(k)] (0) {};
    \end{scope}
\end{tikzpicture}
         }
        \caption{Meek rule 4.}
    \end{subfigure}
    \hfill
    \caption{
        Meek rules \citep{meek1995}.
        The dashed lines partition nodes into sets based on
        ancestral relationships.
        Only Meek rule 1 establishes an ancestor for a node that
        previously had none, forcing a component to split.
    }
    \label{fig:meek-set-rule}
\end{figure}

 }{}
 \section{Omitted proofs of upper bound}\label{app:upper-bound}

We will start by showing the following lemma.

\begin{restatable}{lemma}{lemPrefixMotivation}\label{lem:prefix-set}
    Let $ S \subset V $ be a prefix node set.
    Let $ A,B $ and $ C $ be disjoint subsets such that $ A \subset V
    $, and $ B, C \subset \bar{S} $.
    Let $ \mathcal{H} = \mathcal{G}[\bar{S} \cup A] $ be the induced
    subgraph of $ \mathcal{G} $ on the node set $ \bar{S} \cup A $.
    Then $ A \not\CI_{\mathcal{G}} B \mid (S \setminus A) \cup C $ if
    and only if $ A \not\CI_{\mathcal{H}} B \mid C $.
\end{restatable}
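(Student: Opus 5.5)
The plan is to prove both directions directly from the definition of d-separation, working with paths. The key structural fact is that $S$ is a prefix node set: for any edge between $y\in S$ and $z\in\bar S$, it must be oriented $y\rightarrow z$, since otherwise $z\in\Anc(y)\subseteq S$. For the same reason no node of $\bar S$ has a descendant in $S$.

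Throughout, I will use a standard reduction. If there is an active path from $A$ to $B$ (in $\mathcal{G}$ or $\mathcal{H}$), then there is one whose only node in $A$ is its first node. Take the last node of $A$ on the path and keep the subpath from there. This subpath is still active: its non-endpoint nodes keep the same collider/non-collider status and the same neighbours. Write $Z=(S\setminus A)\cup C$.

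($\Leftarrow$) Take a path $\pi$ from $a\in A$ to $b\in B$ that is active in $\mathcal{H}$ given $C$ and has interior nodes outside $A$. Its interior nodes lie in $\bar S$, so none lies in $S\setminus A$. Each interior non-collider is not in $C$, hence not in $Z$. Each collider $c$ has $\Des_{\mathcal{H}}[c]\cap C\neq\varnothing$, and since $\mathcal{H}$ is an induced subgraph, $\Des_{\mathcal{H}}[c]\subseteq\Des_{\mathcal{G}}[c]$. Hence $\pi$ is active in $\mathcal{G}$ given $Z$.

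($\Rightarrow$) Take a path $\pi$ from $a\in A$ to $b\in B$ that is active in $\mathcal{G}$ given $Z$, again with interior nodes outside $A$. The main step is to show that every node of $\pi$ lies in $\bar S\cup A$, i.e.\ no interior node is in $S$.
\begin{itemize}
\item Suppose some interior node lies in $S$. Since $b\in\bar S$, walking along $\pi$ toward $b$ we find an interior node $y\in S$ whose successor $z$ on $\pi$ lies in $\bar S$.
\item Because $y$ is interior and not in $A$, we have $y\in S\setminus A\subseteq Z$. Activeness then forces $y$ to be a collider on $\pi$, so in particular $z\rightarrow y$.
\item This contradicts the prefix property, which requires the edge between $y\in S$ and $z\in\bar S$ to be $y\rightarrow z$.
\end{itemize}
So $\pi$ is a path in $\mathcal{H}$. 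Its interior non-colliders are not in $Z\supseteq C$, so they are not in $C$.

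For a collider $c\in\bar S$ on $\pi$, activeness gives a directed path $c\rightarrow\dots\rightarrow d$ with $d\in Z$. Every node on this directed path is a descendant of $c\in\bar S$, so it lies in $\bar S$; otherwise $c$ would be an ancestor of a node in $S$, contradicting $S$ being ancestrally closed. Hence $d\notin S$, so $d\in C$. The whole directed path lies in $\bar S$, and therefore in $\mathcal{H}$, giving $\Des_{\mathcal{H}}[c]\cap C\neq\varnothing$. Thus $\pi$ is active in $\mathcal{H}$ given $C$.

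I expect the main obstacle to be the claim that the active path in $\mathcal{G}$ never enters $S$, since this is the only place the argument is not a routine transfer. It is handled by the ``boundary node'' argument above, which needs both facts: conditioned nodes must be colliders, and edges leaving $S$ point into $\bar S$. Some care is also needed to ensure that the descendant witnessing an active collider stays inside $\bar S$; the same ancestral-closure argument handles this.
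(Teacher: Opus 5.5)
Your proposal is correct and follows essentially the same path-transfer argument as the paper. It relies on the same two facts: edges between $S$ and $\bar{S}$ point into $\bar{S}$, and descendants of nodes in $\bar{S}$ remain in $\bar{S}$. The only local variation is how you rule out interior nodes of $S$: you take the last such node before $b$ and show it must be a collider with a parent in $\bar{S}$, whereas the paper takes a collider in $S$ and notes that its parent-neighbour on the $b$-side is a conditioned non-collider in $S\setminus A$, which is the same boundary idea.
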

\begin{proof}
    We will first prove the \emph{if} direction.
    Assume $ A \not\CI_{\mathcal{H}} B \mid C $.
    This means there exists an active path $ P $ in $ \mathcal{H} $
    between a node in $ A $ and a node in $ B $, given $ C $.
    By definition of $ \mathcal{H} $ as $ \mathcal{G}[\bar{S} \cup A]
    $, all nodes on the path $ P $ necessarily belong to $ \bar{S} \cup A
    $, which means no node on $ P $ is part of $ S \setminus A $.
    For $ P $ to be active in $ \mathcal{H} $ given $ C $, every
    non-collider on $ P $ must not be in $ C $, and every collider on
    $ P $ must either be in $ C $ or have a descendant in $ C $.
    Now, considering this same path $ P $ within the graph $
    \mathcal{G} $ and conditioned on $ (S \setminus A) \cup C $: any
    non-collider on $ P $, not being in $ C $ (due to being
    active in $ \mathcal{H} $) and not being in $ S \setminus A $
    (due to being fully in $ \mathcal{H} $) is therefore not in $ (S
    \setminus A) \cup C $.
    Similarly, any collider on $ P $ that is in $
    \Anc_{\mathcal{H}}[C] $ is consequently in $
    \Anc_{\mathcal{G}}[(S \setminus A) \cup C]$.
    Since path $ P $ satisfies the criteria for an active path
    in $ \mathcal{G} $ when conditioned on $ (S \setminus A) \cup C
    $, it follows that $ A \not\CI_{\mathcal{G}} B \mid (S \setminus
    A) \cup C $.

    For the \emph{only if} direction, we assume ${A
    \not\CI_{\mathcal{G}} B \mid (S \setminus A) \cup C}$.
    This assumption implies there exists a path $ P $ between a node
    in $ A $ and a node in $ B $ that is active in $ \mathcal{G} $
    when conditioned on $ (S \setminus A) \cup C $; we can select $ P
    $ such that only its endpoints are in $ A \cup B $.
    For $ P $ to be active, any non-collider on this path must not be
    in $ (S \setminus A) \cup C $, which directly means non-colliders
    on $ P $ must reside in $ A \cup \bar{S} \setminus C $.
    Colliders on $ P $, on the other hand, must be in $
    \Anc_{\mathcal{G}}[(S \setminus A) \cup C] $.
    Consider a collider $ c $ on path $ P $.
    If $ c $ were in $ \Anc_{\mathcal{G}}[S \setminus A] $, then
    because $ S $ is a prefix set (which implies $
    \Anc_{\mathcal{G}}[S \setminus A] \subseteq S $), $ c $ itself
    must be in $ S $.
    As $ c $ is an intermediate node on $ P $ (not in $ A \cup B $),
    $ c $ being in $ S $ means $ c \in S \setminus A $.
    The argument then follows that if such a collider $ c \in S
    \setminus A $ is on $ P $, the prefix nature of $ S $
    necessitates that at least one of $ c $'s neighbors on path $ P $
    must also be in $ S \setminus A $ while being an intermediate
    node on $P$ as $B\subset\bar{S}$.
    This neighbor, however, must function as a non-collider on $ P $
    while also being in $ S \setminus A $, thereby blocking $ P $ in
    $\mathcal{G}$, a contradiction.
    Therefore, any collider on path $ P $ must be in $
    \Anc_{\mathcal{G}}[C] \setminus S \subseteq \Anc_{\mathcal{H}}[C]$.
    Given that non-colliders are in $ A \cup \bar{S} \setminus C $, and
    colliders are in $ \Anc_{\mathcal{H}}[C] $, it follows that $ P $
    is active in $ \mathcal{H} $ given $ C $.
    This implies $ A \not\CI_{\mathcal{H}} B \mid C $.
\end{proof}

We will make use of the following lemma.

\begin{lemma}\label{lem:min-sep-set}
    Let $ S \subset V $ be a prefix set.
    Let $ a,b \in V $ and $ W \subset \bar{S} $ be such that $ a
    \CI b \mid S \cup W $ and $ a \not\CI b \mid S \cup W' $ for all
    $ W' \subset W $.
    Then, $ W \subseteq \Anc(a) \cup \Anc(b) $.
\end{lemma}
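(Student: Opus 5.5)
The plan is to show that the part of $W$ lying outside $\Anc(a)\cup\Anc(b)$ can be dropped without destroying the separation; minimality of $W$ then forces that part to be empty. Set
\[
W' = W \cap \bigl(\Anc(a) \cup \Anc(b)\bigr).
\]
The goal is to prove $a \CI b \mid S \cup W'$. Because $W'\subseteq W$ and, by hypothesis, $a\not\CI b\mid S\cup W''$ for every proper subset $W''\subsetneq W$, this gives $W'=W$, which is the claim.

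A preliminary observation handles the prefix set. Since $S$ is a prefix node set, $\Anc[S]=S$. Since $W\subset \bar S$, no node of $W$ lies in $\Anc[S]$. Moreover $a,b\notin W$ (the conditioning sets exclude the endpoints), so the only way a node of $W$ can lie in $\Anc[\{a,b\}]$ is as a proper ancestor of $a$ or $b$. Combining these,
\[
W \cap \Anc[\{a,b\}\cup S] = W'.
\]

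Next I would invoke the standard fact about active paths: if a path between $a$ and $b$ is active given $Z$, then every node on it lies in $\Anc[\{a,b\}\cup Z]$. The reason is that colliders lie in $\Anc[Z]$ by definition. From any non-collider one can follow the edge directions along the path until reaching either an endpoint or a collider, so every non-collider is an ancestor of an endpoint or of a collider.

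The core step argues by contradiction. Suppose a path $P$ is active given $S\cup W'$, and apply the fact with $Z=S\cup W'$. Every node of $P$ then lies in
\[
\Anc[\{a,b\}\cup S\cup W'] = \Anc[\{a,b\}\cup S],
\]
where the equality uses $W'\subseteq \Anc(a)\cup\Anc(b)$. By the preliminary observation, no node of $P$ belongs to $W\setminus W'$. I then check that $P$ stays active given the larger set $S\cup W$:
\begin{itemize}
\item Non-colliders on $P$ are outside $S\cup W'$ by activeness, and outside $W\setminus W'$ by the previous sentence, so they are outside $S\cup W$.
\item Colliders on $P$ lie in $\Anc[S\cup W']\subseteq\Anc[S\cup W]$.
\end{itemize}
Hence $a\not\CI b\mid S\cup W$, contradicting the hypothesis. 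Under the Markov and faithfulness assumptions, independence and d-separation coincide, so this completes the argument.

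The main point requiring care is the ancestor-closure fact for active paths, together with the bookkeeping that uses $\Anc[S]=S$ to keep ancestors of $S$ out of $W$. Both are routine once written out.
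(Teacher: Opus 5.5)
Your proof is correct, but it is organized differently from the paper's.

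The paper proceeds incrementally. Starting from $X_0=\varnothing$, it uses minimality to obtain a path that is active given $S\cup X_k$. It notes that this path must be blocked given $S\cup W$ by a new non-collider $x_{k+1}\in W\setminus X_k$. It then uses the inductive hypothesis $X_k\subseteq\Anc(a)\cup\Anc(b)$, together with $\Anc[S]=S$, to place the whole path, and hence $x_{k+1}$, inside $\Anc(a)\cup\Anc(b)$. Iterating this exhausts $W$.

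You instead make a single pruning step: $S\cup\bigl(W\cap(\Anc(a)\cup\Anc(b))\bigr)$ still d-separates $a$ and $b$, and one application of minimality finishes. Both proofs rest on the same fact: a path active given $Z$ lies in $\Anc[\{a,b\}\cup Z]$, and the prefix property keeps $W$ disjoint from $\Anc[S]$.

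What each route buys:
\begin{itemize}
\item Your version avoids the induction and invokes minimality for only one subset of $W$. It also isolates a stronger fact that needs no minimality at all: pruning $W$ to its ancestral part never destroys separation. This is the standard argument that minimal d-separators lie in the ancestral set of the endpoints.
\item The paper's version is witness-based. For each element of $W$ it exhibits an active path on which that element is the blocking non-collider. This matches the style of the neighbouring lemmas, but it requires minimality against every set in the adaptively built chain.
\end{itemize}

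One small remark on your parenthetical about endpoints. Under the paper's convention, $a$ and $b$ are silently removed from overlapping conditioning sets, so $a,b\notin W$ is not automatic from the notation. It does follow from minimality, since deleting $a$ or $b$ from $W$ leaves the effective conditioning set unchanged.
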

\begin{proof}[]
    We aim to show that every element $ w \in W $ must be a member of
    $ \Anc(a) \cup \Anc(b) $.
    The argument can be understood by iteratively considering how
    elements of $ W $ contribute to blocking paths.
    If $ W = \varnothing $, the inclusion in $ \Anc(a) \cup \Anc(b) $
    is trivial.

    \begin{enumerate}
        \item
            \emph{Initial element.}
            Since $ W' = \varnothing $ is a proper subset of $ W $,
            we have $ a \not\CI b \mid S $.
            Therefore, there must exist an active path $ P $ between
            $ a $ and $ b $ given $ S $.
            Since $ S $ is prefix, by Lemma~\ref{lem:prefix-set},
            there are no intermediate nodes in $ S $ and therefore,
            the path does not contain colliders.
            This implies that all nodes on $ P $ are necessarily within $
            \Anc(a) \cup \Anc(b) $.
            Since $ a \CI b \mid S \cup W $, this path $ P $ must be
            blocked by $ S \cup W $.
            Thus, at least one element from $ W $, termed $ x_1 $,
            must lie on $ P $ as a non-collider.
            Being a node on $ P $, $ x_1 $ is therefore in $ \Anc(a)
            \cup \Anc(b) $.

        \item
            \emph{Subsequent elements.}
            Let $ X_k = \{x_1, \dots, x_k\} $ be a subset of $ W $
            for which it has been established that $ X_k \subseteq
            \Anc(a) \cup \Anc(b) $.
            If $ X_k \neq W $, we must have $ a \not\CI b \mid S \cup X_k $.
            So there exists an active path, $ P_k $, between $ a $
            and $ b $ given $ S \cup X_k $.
            Since $ S \cup X_k $ does not block $ P_k $, a node
            from $ W \setminus X_k $, termed $ x_{k+1} $, must
            block $ P_k $.
            Since the $ P_k $ is active, any collider on $ P_k $ must
            also be in $ \Anc[X_k] $.
            Since $ X_k \subseteq \Anc(a) \cup \Anc(b) $ we have $
            \Anc[X_k] \subseteq \Anc(a) \cup \Anc(b) $.
            This implies that all nodes on the entire path $ P_k $
            are in $ \Anc(a) \cup \Anc(b) $.
            Since $ x_{k+1} $ is on $ P_k $, $ x_{k+1}
            $ must also be in $ \Anc(a) \cup \Anc(b) $.
    \end{enumerate}

    Therefore every element in $ W $ must be in $ \Anc(a) \cup \Anc(b) $.
\end{proof}

\subsection{Proof of Lemma~{\ref{lem:vstruct-set-property}}}

To prove Lemma \ref{lem:vstruct-set-property}, we will make use of the
following definition and lemma.

\begin{definition}[Set $A_S$]
    Let $S \subset V $ be a set of nodes not necessarily prefix.
    For all $w \in \bar{S}$, let $w \in A_S$ if and only if $u \CI v \mid
    S$ and $ u \not\CI v \mid S \cup \{w\} $ for some $u \in V$ and
    $v \in \bar{S}$.
\end{definition}

\begin{lemma}\label{lem:as-property}
    If $w \in A_S$, then $\Des[w] \subset A_S$.
    Furthermore, $A_S\cap \src(\bar{S})=\varnothing$.
\end{lemma}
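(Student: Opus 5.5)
The plan is to work with d-separation, which is legitimate here because the distribution respects $\mathcal{G}$. The proof rests on one structural observation about an element $w \in A_S$. Fix $u \in V$ and $v \in \bar{S}$ with $u \CI v \mid S$ and $u \not\CI v \mid S \cup \{w\}$, and let $P$ be a path between $u$ and $v$ that is active given $S \cup \{w\}$. Adding $w$ to the conditioning set can only block non-colliders, never unblock them. Since $P$ is blocked given $S$, it must therefore contain a collider $c$ with $w \in \Des[c]$ and $\Des[c] \cap S = \varnothing$. In particular $c \in \bar{S}$, and every node of $\Des[c]$, hence of $\Des[w]$, lies in $\bar{S}$. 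Both claims of Lemma~\ref{lem:as-property} will be derived from this observation.

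\emph{Source nodes.} Suppose $w \in \src(\bar{S})$, so that $\Anc(w) \subseteq S$. The collider $c$ is in $\Anc[w] \cap \bar{S}$, which forces $c = w$. Hence $w$ is a collider on $P$, and both of its neighbours $x, y$ on $P$ are parents of $w$, so $x, y \in S$. Because $v \in \bar{S}$, the neighbour on the $v$-side is not $v$ and is therefore an intermediate node; call it $x$. The edge $x \to w$ points out of $x$, so $x$ is a non-collider on $P$. But $x \in S$, so $x$ blocks $P$ given $S \cup \{w\}$, a contradiction. (If $u \in S$ the same argument applies after reading the conditioning set as $S \setminus \{u\}$, following the paper's convention for overlapping sets.) Therefore $A_S \cap \src(\bar{S}) = \varnothing$.

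\emph{Descendants.} Let $d \in \Des(w)$. By the observation above, $d \in \bar{S}$, so it suffices to show $u \not\CI v \mid S \cup \{d\}$; together with $u \CI v \mid S$ this gives $d \in A_S$. Take a directed path $\pi$ from $c$ through $w$ to $d$; all of its nodes are in $\Des[c]$, hence outside $S$. First consider the case where $d$ does not lie on $P$. Form the walk that follows $P$ from $u$ to $c$, goes down $\pi$ to $d$, back up $\pi$ to $c$, and follows $P$ to $v$. On this walk:
\begin{itemize}
\item $d$ is a collider and lies in the conditioning set;
\item the interior nodes of $\pi$ are non-colliders outside $S \cup \{d\}$;
\item the non-colliders of $P$ are outside $S \cup \{w\}$ and are not $d$ in this case;
\item the colliders of $P$ lie in $\Anc[S \cup \{w\}] \subseteq \Anc[S \cup \{d\}]$.
\end{itemize}
By the standard fact that a walk active in this sense yields a $d$-connecting path, we obtain $u \not\CI v \mid S \cup \{d\}$.

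The main obstacle is the remaining case, where $d$ lies on $P$ as a non-collider (it cannot be a collider of $P$ that is blocked, since $d \in \Des[c]$). To handle it, I would first choose $P$ and $c$ so that the configuration is minimal, for instance taking $c$ to be the collider of $P$ that enables activation and is closest to $u$. If $d$ appears on $P$, say on the $u$-side of $c$, I would splice instead: follow $P$ from $u$ to $d$, then go up $\pi$ to $c$ (or take the direct descent from $d$ if that is shorter), and continue along $P$ to $v$. The goal is to exhibit a walk in which every occurrence of $d$ is a collider. An alternative route to try if the splicing becomes messy is a contradiction argument via the semi-graphoid axioms. From $u \CI v \mid S$, $u \CI v \mid S \cup \{d\}$, and the fact that $d$ is a descendant of $w$ outside $S$, one would aim to derive $u \CI v \mid S \cup \{w\}$; I would check this against the walk argument.
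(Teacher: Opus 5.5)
Your structural observation, your source-node argument, and your treatment of the case where $d$ is not a non-collider of $P$ are fine. In that last case $P$ itself is already active given $S\cup\{d\}$, since its colliders lie in $\Anc[S\cup\{w\}]\subseteq\Anc[S\cup\{d\}]$, so the detour through $\pi$ is unnecessary.

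The gap is exactly the case you leave open, and it is the substance of the descendant claim. Your splice follows $P$ from $u$ to $d$, climbs $\pi$ to $c$, and continues along $P$. This makes $d$ a collider only if the edge of $P$ at $d$ on the $u$-side points into $d$, and nothing in your argument controls that orientation. If that edge points away from $d$, then $d$ is a non-collider in the conditioning set and the walk is blocked. With your choice of $c$ (the activating collider closest to $u$), you also cannot force the orientation by contradiction with $u\CI v\mid S$. The remaining segment of $P$ from $c$ to $v$ may still contain other colliders in $\Anc[w]\setminus\Anc[S]$, so the spliced walk need not be active given $S$ even if $d$ were a non-collider. The semi-graphoid fallback does not help either. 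Deriving $u\CI v\mid S\cup\{w\}$ from $u\CI v\mid S$ and $u\CI v\mid S\cup\{d\}$ is just the contrapositive of what you want. It is not a consequence of the semi-graphoid axioms, because the fact that $d$ descends from $w$ is graphical information those axioms cannot express.

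The missing idea, which is how the paper argues, is to take the leftmost and rightmost colliders $k,k'$ of $P$ that lie outside $\Anc[S]$. Both lie in $\Anc[w]\subseteq\Anc[d]$, so $\Des[k]$ and $\Des[k']$ avoid $S$. There are three cases for where the non-collider $d$ sits on $P$.
\begin{itemize}
\item If $d$ lies between $u$ and $k$, form the walk consisting of $P$ from $u$ to $d$, then a directed path from $k'$ to $d$ traversed backwards, then $P$ from $k'$ to $v$. Every collider outside $\Anc[S]$ has been cut out, and no non-collider lies in $S$. So if $d$ were a non-collider, this walk would be active given $S$, contradicting $u\CI v\mid S$. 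Hence $d$ is a collider and the walk is active given $S\cup\{d\}$.
\item If $d$ lies between $k'$ and $v$, the argument is symmetric, splicing through $k$ instead.
\item If $d$ lies between $k$ and $k'$, replace the segments of $P$ from $k$ to $d$ and from $d$ to $k'$ by directed paths from $k$ and from $k'$ into $d$. This makes $d$ a collider outright.
\end{itemize}
The essential move is jumping to the \emph{far} extreme collider, so that $u\CI v\mid S$ itself dictates the orientation at $d$.

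A smaller point concerns the case $u\in S$. There the overlapping-set convention makes the base conditioning set $S\setminus\{u\}$, so your observation only yields $\Des[c]\cap(S\setminus\{u\})=\varnothing$. To conclude $\Des[w]\subseteq\bar{S}$ you must also exclude $u\in\Des(w)$. This works if you take $c$ to be the activating collider closest to $v$. Otherwise the walk from $u$ up a directed path to $c$ and then along $P$ to $v$ would be active given $S\setminus\{u\}$.
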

\begin{proof}[]
    We first show that if $w\in A_S$, then it must hold that
    $\Des[w]\subset A_S$: since $w\in A_S$, there exists a node
    $u\in V$ and $v\in \bar{S}$ such that $u \CI v\mid S$ and $u
    \not\CI v\mid S\cup\{w\}$.
    We now show that for any $x\in \Des[w]$, we have $u \not\CI v\mid
    S\cup\{x\}$.

    Since $u \not\CI v \mid S \cup \{w\}$, there is a path $P$ from $u$
    to $v$ that is active given $S\cup\{w\}$.
    Therefore, all non-colliders on $P$ are not in $S\cup\{w\}$ and
    all colliders on
    $P$ are in $\Anc[S\cup\{w\}]$.
    Since $x\in \Des[w]$, all colliders on $P$ are in $\Anc[S\cup\{x\}]$.
    If all non-colliders are not in $S\cup\{x\}$, then $P$ is an
    active path from $u$ to $v$ given $S\cup\{x\}$, and thus $u
    \not\CI v\mid S\cup\{x\}$.
    Otherwise there is a non-collider on $P$ that is $x$.

    Since $u\CI v\mid S$, the path $P$ is inactive given $S$. From
    above we know that all non-colliders on $P$ are not in $S$.
    Therefore there exists a collider on $P$ that is not in
    $\Anc[S]$.
    Suppose the leftmost and rightmost such colliders are $k,k'$ (it
    is possible that $k=k'$), then $k,k'$ must be in
    $\Anc[S\cup\{w\}]\setminus\Anc[S]\subseteq\Anc[w]\subset\Anc[x]$.
    Therefore, the path $ P $ takes the form: $ {u - \dots \rightarrow
    k \leftarrow \dots \rightarrow k' \leftarrow \dots - v} $.
    If $ x $ is between $ u $ and $ k $ (or between $ k' $ and $ v
    $), consider the path $ Q $ in the graph by cutting out the parts
    between $ x $ and $ k' $ (or between $ k $ and $ x $) on $ P $ and
    replacing them with directed edges from $ k' $ to $ x $ (or from
    $ k $ to $ x $).
    Compared to $P$, the additional non-colliders on $Q$ are all on
    the directed path from $k'$ to $x$ (or from $ k $ to $ x $).
    They are not in $S$ since $k,k'\notin \Anc[S]$, and thus $Q$ has
    no non-colliders in $S$.

    Compared to $P$, there is no collider on $P$ that is not in
    $\Anc[S]$ and is still on $Q$ by the fact that $k,k'$ are
    leftmost and rightmost colliders on $P$ that are not in
    $\Anc[S]$. Therefore, $x$ must be a collider on $Q$, or else $Q$
    is active given $S$ and $u\not\CI v\mid S$. Therefore all
    non-colliders on $Q$ are not in $S\cup\{x\}$. Every collider on
    $Q$ is either $x$ or a collider of $P$, which is in
    $\Anc[S\cup\{x\}]$.
    Thus $Q$ is active given $S\cup\{x\}$.
    Therefore $u\not\CI v\mid S\cup \{x\}$.

    If $ x $ is between $ k $ and $ k' $ we can replace the part between
    $ k $ and $ x $ by the direct path from $ k $ to $ x $ and the
    part between $ x $ and $ k' $ by the direct path from $ k' $ to $ x $.
    There is no additional collider on the new path and all
    non-colliders are not in $ S $ since $ k,k' \notin \Anc[S] $.
    Therefore the path is active given $S\cup\{x\}$ and $u\not\CI
    v\mid S\cup \{x\}$.

    Next we show that $A_S\cap \src(\bar{S})=\varnothing$: for
    contradiction assume that there exists a node $a \in
    \src(\bar{S})$ such that $a \not \in \bar{S} \backslash A_S$,
    that is $a \in \src(\bar{S})$ and for some node $u\in V, v \in
    \bar{S}$, ${u \CI v \mid S }$ and $ { u \not\CI v \mid S \cup \{a\} } $.

    Since $u \CI v \mid S $ and $ u \not\CI v \mid S \cup \{a\}$, there
    exists a path $P$ between $u$ and $v$ which is inactive when
    conditioned on $S$ but is active upon conditioning on $S \cup \{
    a\}$.
    Moreover, this path contains a node $b$ that is a
    collider on $P$ and satisfies: $a \in \Des[b]$ and $\Des[b] \cap
    S=\varnothing$. Since $\Des[b] \cap S=\varnothing$, we have that
    $b \in \bar{S}$.
    Furthermore, since $b \in \bar{S}, a \in \src(\bar{S})$ and $a
    \in \Des[b]$, this implies that $b=a$.
    Therefore, the path $P$ takes the form: $P=v - \dots \rightarrow
    a \leftarrow \dots - u$.
    All the colliders on the path $P$ either belong to or have
    descendant in the set $S \cup \{ a\}$.

    Now consider the path $v - \dots \to a$ and note that it is active
    given $S$.
    Let $k$ be the number of nodes between $v$ and $a$ on this
    path $v-v_1 - \dots -  v_k\to a$.
    It is immediate that $v_{k} \in S$ since $a \in \src(\bar{S})$.
    However, since $v_k \in S$, and since we condition on the set
    $S$, this should be a collider for the path $Q$ to be active,
    which is not possible.
    Thus, we get a contradiction, which completes the proof.
\end{proof}

\vstructsetproperty*
\begin{proof}[]
    Since $ w \in \vstructset^m $, there exists two nodes $ u \in V $, $ v
    \in \bar{S} $ and a set $ W \subset \bar{S} $ with $ |W| = m $
    such that $ {u \CI v \mid S \cup W} $ and $ {u\not\CI v\mid S
    \cup W\cup\{w\}}$.
    Let $ S' = S \cup W $.
    We have that $ w \in A_{S'} $ and by Lemma \ref{lem:as-property},
    $\Des[w] \subset A_{S'}$, $ A_{S'} \cap \src(\bar{S'}) = \varnothing $.
    Since $ A_{S'} \subset \vstructset^m $ we have that $ \Des[w]
    \subset \vstructset^m $.
    Additionally, $ \src(\bar{S}) \subset W \cup \src(\bar{S'})$ and
    $ w \notin W $, therefore $ w \notin \src(\bar{S}) $.
\end{proof}

\subsection{Proof of Lemma~{\ref{lem:meek-set-property}}}

To prove Lemma \ref{lem:meek-set-property} we will make use of the
following result.

\begin{lemma}\label{lem:meek-set-anc}
    Let $ S $ be a prefix set.
    Let $ w \in \bar{S} $ be such that $ w \notin \meekoneset^1 \cup \dots \cup
    \meekoneset^{m-1} $ and $ w \in \meekoneset^m $.
    Let $ u \in S $ and $ W \subset \bar{S} $ be
    such that $ u \not\CI w \mid S$ and $ u \CI w \mid S \cup W$.
    Then $ W \subset \Anc(w) $.
\end{lemma}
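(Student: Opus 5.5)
The plan is to reduce the statement to Lemma~\ref{lem:min-sep-set}, applied with $a=u$ and $b=w$, and then use the prefix property of $S$ to discard $\Anc(u)$. Throughout, $W$ is the set witnessing $w\in\meekoneset^m$, so $|W|=m$, $W\subset\bar S$, $u\not\CI w\mid S$ and $u\CI w\mid S\cup W$.

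\emph{Step 0: $w\notin W$.} By the paper's convention, $u\CI w\mid S\cup W$ means independence given $(S\cup W)\setminus\{w\}$. Suppose $w\in W$.
\begin{itemize}
\item If $m=1$, then $W=\{w\}$ and the statement becomes $u\CI w\mid S$, which contradicts $u\not\CI w\mid S$.
\item If $m\ge 2$, then $W\setminus\{w\}$ has size $m-1\ge 1$ and also witnesses the definition. Hence $w\in\meekoneset^{m-1}$, contradicting the minimality assumption.
\end{itemize}
So $w\notin W$.

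\emph{Step 1: $W$ is a minimal separating set relative to $S$.} We check the hypothesis of Lemma~\ref{lem:min-sep-set}: $u\not\CI w\mid S\cup W'$ for every proper subset $W'\subsetneq W$.
\begin{itemize}
\item For $W'=\varnothing$, this is exactly the assumption $u\not\CI w\mid S$.
\item For $1\le |W'|=k\le m-1$, suppose instead that $u\CI w\mid S\cup W'$. Since $u\not\CI w\mid S$ and $W'\subset\bar S$, the pair $(u,W')$ witnesses $w\in\meekoneset^{k}$ by Definition~\ref{def:meekset}. This contradicts $w\notin\meekoneset^1\cup\dots\cup\meekoneset^{m-1}$.
\end{itemize}
Hence $u\CI w\mid S\cup W$, while $u\not\CI w\mid S\cup W'$ for all $W'\subsetneq W$.

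\emph{Step 2: apply Lemma~\ref{lem:min-sep-set} and use that $S$ is prefix.} Since $S$ is prefix and $W\subset\bar S$, Lemma~\ref{lem:min-sep-set} with $a=u$, $b=w$ gives $W\subseteq\Anc(u)\cup\Anc(w)$. Because $u\in S$ and $S$ is a prefix node set, $\Anc(u)\subseteq S$. Since $W\cap S=\varnothing$, we get $W\cap\Anc(u)=\varnothing$, and therefore $W\subseteq\Anc(w)$.

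The only delicate point is Step 1. It must be checked that the minimality hypothesis on $w$ excludes separating subsets of every intermediate size. It does, because Definition~\ref{def:meekset} only requires some conditioning set of the given size. The remaining steps are routine.
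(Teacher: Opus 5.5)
Your proof is correct and follows the paper's route: establish that $W$ is a minimal separating set relative to $S$, apply Lemma~\ref{lem:min-sep-set} to get $W\subseteq\Anc(u)\cup\Anc(w)$, and then discard $\Anc(u)\subseteq S$ using the prefix property. Your Step~1 spells out the minimality check that the paper only asserts, and your reading $|W|=m$ is the intended one (without it the claim can fail). Step~0 is harmless but already covered by Step~1, since $W\setminus\{w\}$ would then be a separating proper subset.
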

\begin{proof}[]
    Since $ u \not\CI w \mid S \cup W' $ for all $ W' \subset W $,
    by Lemma \ref{lem:min-sep-set} we have $ W \subset \Anc(u) \cup \Anc(w) $.
    However, given that $ u \in S $, $ S $ is a prefix set and $ W
    \subset \bar{S} $, it necessarily follows that $  W \subset \Anc(w) $.
\end{proof}

\meeksetproperty*
\begin{proof}[]
    We first show that if $w \in \meekoneset^m \setminus
    \meekoneset^1 \cup \dots \cup
    \meekoneset^{m-1}$, then for any $y\in\Des(w)$, we have $y \in
    \meekoneset^m \cup D_S^m$.
    Since $w\in \meekoneset^m$, we have $ u \not\CI w \mid S $ and $
    u \CI w \mid S \cup W $  for some $ u \in S $.

    Take the active path between $ u $ and $ w $ given $ S $ and
    extend it by the directed path from $ w $ to $ y $.
    Note that none of nodes on the directed path from $w$ to $y$
    are in $S$, since $S$ is prefix and $w \not\in S$.
    Therefore, this extended path is also active given $S$, which
    means $u\not\CI y\mid S$.

    Thus, if $y\not\in \meekoneset^m$, then it must hold that $u\not\CI y\mid
    S\cup W$.
    This means there is an active path, denoted by $P$, between $u$
    and $y$ given $S\cup W$.
    Consider extending this path by the directed path from $w$ to
    $y$, denoted as $Q$.
    Compared to $P$, the additional non-colliders on $Q$ are not in
    $S\cup W$: for $S$, this is because $S$ is prefix, $w\not\in
    S$, and all additional non-colliders are descendants of $w$; for
    $W$, this is because by Lemma~\ref{lem:meek-set-anc} we have $W
    \cap \Des(w) = \varnothing$.
    Thus, $Q$ is active given $S\cup W$, unless $y$ is a collider on $Q$.
    Since $u\CI w\mid S\cup W$, the path $Q$ must be inactive
    given $S\cup W$, which means $y$ is a collider on $Q$.
    This means $Q$ is active given $S\cup W \cup \{y\}$.
    Therefore, $u\not\CI w\mid S\cup W \cup \{y\}$.
    Together with $u\CI w\mid S\cup W$, we have $y\in D_S^m$.

    Next we show that $\meekoneset^m \cap \src(\bar{S})=\varnothing$.
    Since $w\in \meekoneset^m$, we have $ u \not\CI w \mid S $
    and $ u \CI w \mid S \cup W $  for some $ u \in S $ and $ W
    \subset \bar{S} $, $ |W| = m $.
    Let $ W' \subset \bar{S} $ be the smallest set such that $ u' \CI
    w \mid S \cup W' $ for some $ u' \in S $.
    We must have $ 1 \leq |W'| \leq |W| $.
    Additionally, Lemma~\ref{lem:meek-set-anc} shows that $ W'
    \subset \Anc(w) $.
    Since $ W' $ is a non-empty subset of $ \bar{S} $ containing
    ancestors of $ w $ we have $ w \notin \src(\bar{S}) $.
\end{proof}
 
\subsection{Proof of Theorem~{\ref{thm:algorithm}}}\label{app:algorithm}

We develop the proof of Theorem~\ref{thm:algorithm} through a series
of intermediate results.
Our first key lemma establishes that any node serving as the collider
in a v-structure is necessarily contained within a set $\vstructset^m$.

\begin{lemma}\label{lem:v-structure-test}
    Let $ S \subset V $ be a prefix node set.
    If there are three nodes $ u \in V $, $ k, v \in \bar{S} $
    that form a v-structure $ u \rightarrow k \leftarrow v $, then
    there exists a set $ W \subset \bar{S} $ such that $ u \CI v
    \mid S \cup W $ and $ u \not\CI v \mid S \cup W \cup \{k\}$.
\end{lemma}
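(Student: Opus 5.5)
The plan is to write down an explicit $W$ built from the ancestors of $u$ and $v$. Let
\[
W = \bigl(\Anc(u)\cup\Anc(v)\bigr)\setminus\bigl(S\cup\{u,v\}\bigr),
\]
so that $W\subset\bar S$ automatically. Put $A = S\cup\Anc[u]\cup\Anc[v]$, so the conditioning set is $C=S\cup W = A\setminus\{u,v\}$, read with the paper's convention for overlapping sets in case $u\in S$.

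First I check that $k\notin W$, which is needed for the second statement to be meaningful. The edges $u\to k$ and $v\to k$ mean that $k\in\Anc(u)\cup\Anc(v)$ would create a directed cycle, so $k\notin\Anc(u)\cup\Anc(v)$. Together with $k\in\bar S$ this gives $k\notin W$.

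Second, I show $u\CI v\mid S\cup W$. The set $A$ is ancestral: $\Anc[u]$ and $\Anc[v]$ are closed under taking ancestors, and $S$ is prefix. I then use the standard moralization criterion. For an ancestral set $A$ containing $u$ and $v$, the nodes $u,v$ are d-separated by $A\setminus\{u,v\}$ if and only if they are non-adjacent in the moral graph of $\mathcal G[A]$. Equivalently, one can argue directly on d-separation: any active path would have all colliders in $A$, hence all nodes in $A$, and then every interior non-collider is conditioned on.

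Non-adjacency in $\mathcal G$ holds because $u\to k\leftarrow v$ is a v-structure. It remains to rule out $u,v$ being married by a common child $c\in A$. Such a $c$ cannot lie in $\Anc(u)\cup\Anc(v)$: for instance $c\in\Anc(u)$ together with $u\to c$ would give a cycle $u\to c\to\cdots\to u$. It also cannot lie in $S$, because $S$ is prefix and $v\in\Anc(c)$ would force $v\in S$, contradicting $v\in\bar S$. Hence $u$ and $v$ are non-adjacent in the moral graph of $\mathcal G[A]$, which gives the independence.

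Third, I show $u\not\CI v\mid S\cup W\cup\{k\}$ by exhibiting the path $u\to k\leftarrow v$. Its only interior node is $k$, a collider lying in the conditioning set. There are no interior non-colliders, and the endpoints are excluded from the conditioning set by convention, so the path is active.

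The main obstacle is the separation step. I would either cite the moralization criterion or, to stay self-contained, give the direct path argument sketched above: all nodes of an active path lie in the ancestral set $A$, every interior non-collider would be blocked, and a path consisting only of colliders must be $u\to c\leftarrow v$ with $c\in A$, which was excluded. The prefix property of $S$ is used only to exclude such a common child inside $S$ and to make $A$ ancestral.
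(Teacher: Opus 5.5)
Your proof is correct, but it reaches the separation statement by a different route from the paper's.

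The paper takes the smaller set $W=(\Pa(u)\cup\Pa(v))\setminus S$ and argues by contradiction on a hypothetical active path. Conditioning on the parents forces such a path to have the form $u\to u'-\dots-v'\leftarrow v$, so it must contain a collider. A collider in $\Anc[S]$ lies in $S$ and is blocked by its neighbours there. Otherwise the leftmost collider, a descendant of $u$, must be an ancestor of $v$, and the rightmost, a descendant of $v$, must be an ancestor of $u$; together these close a directed cycle.

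You instead enlarge $W$ to all ancestors of $u,v$ outside $S$. This makes the conditioning set $A\setminus\{u,v\}$ for an ancestral set $A$. Every node of an active path then lies in $A$ and every interior non-collider is blocked. Separation therefore reduces to the moral-graph check that $u\not\sim v$ and that $u,v$ have no common child in $A$, which you settle with acyclicity and the prefix property of $S$.

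What each buys: your version is shorter and avoids the leftmost/rightmost-collider bookkeeping and the separate case of colliders in $\Anc[S]$. The paper's choice gives a more local separating set, bounded in size by the in-degrees of $u$ and $v$. For how the lemma is used later, that size does not matter. Lemmas~\ref{lem:clique} and~\ref{lem:vstruct-sets} immediately pass to a minimum-size separating set, and the dependence given $S\cup W\cup\{k\}$ holds for every $W$ because $u\rightarrow k\leftarrow v$ is active once $k$ is conditioned on. So your larger choice loses nothing.

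Your explicit check that $k\notin W$ is a small point the paper leaves implicit.
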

\begin{proof}[]
    The path $ u \rightarrow k \leftarrow v $ is active given $ k $
    and therefore $ u \not\CI v \mid S \cup W \cup \{k\}$ for any $
    W \subset \bar{S} $.
    We will now see that for $ W = \left( \Pa(u) \cup \Pa(v) \right)
    \setminus S $ we have $ u \CI v \mid S \cup W $.
    If $ u \not\CI v \mid S \cup W $ then there exists an active
    path between $ u $ and $ v $ given $ S \cup W $.
    Let $ u - u' - \dots - v' - v $ be this path.
    If we have $ u \leftarrow u' $ or $ v' \rightarrow v $ then the
    path would be inactive.
    We therefore have $ u \rightarrow u' - \dots - v' \leftarrow v $
    and the path must contain at least one collider.
    Let $ c $ be a collider in this path, thus $ c \in \Anc[S \cup W] $.
    If $ c \in \Anc[S] $, since $ S $ is prefix, $ c \in S $.
    In this case, the nodes on the path adjacent to $ c $ are also in
    $ S $ and the path must be inactive.
    By acyclicity, if there is a collider $ c \in \Anc[W] $ in the
    path then there must be at least two colliders.
    Let $ c, c' $ be the leftmost and rightmost colliders on the path.
    We must have $ c \in \Anc[\Pa(v)] \setminus S $ and $ c' \in
    \Anc[\Pa(u)] \setminus S $ and the graph contains a cycle $
    c' \rightarrow \dots \rightarrow u \rightarrow \dots \rightarrow
    c \rightarrow \dots \rightarrow v \rightarrow c' $, a
    contradiction to $ \mathcal{G} $ being a DAG.
\end{proof}

We now provide a result characterizing specific nodes in $ \vstructset^m $.

\begin{lemma}\label{lem:first-in-vstructset}
    Let $ S $ be a prefix set.
    If $ w \in \vstructset^m $ and $ \Anc(w) \cap \vstructset^{m} =
    \varnothing $, then $ w $ must serve as the collider in a
    v-structure $ a \rightarrow w \leftarrow b $.
\end{lemma}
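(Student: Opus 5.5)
The plan is to extract, from the definition of $\vstructset^m$, a witnessing path whose ``newly activated'' collider is forced to be $w$ itself by the minimality hypothesis $\Anc(w)\cap\vstructset^m=\varnothing$. Then, again using minimality, I would show that the two parents of $w$ on that path are non-adjacent. The core tool is the path-surgery argument from the proof of Lemma~\ref{lem:as-property}, applied with the (not necessarily prefix) set $S\cup W$ in place of $S$.

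\textbf{Step 1 (a witnessing path).} Since $w\in\vstructset^m$, there are $u\in V$, $v\in\bar S$ and $W\subset\bar S$ with $|W|=m$ such that $u\CI v\mid S\cup W$ and $u\not\CI v\mid S\cup W\cup\{w\}$. Let $P$ be a path that is active given $S\cup W\cup\{w\}$. No non-collider of $P$ lies in $S\cup W\cup\{w\}$, and every collider lies in $\Anc[S\cup W\cup\{w\}]$. Because $P$ is blocked given $S\cup W$, some collider lies outside $\Anc[S\cup W]$. All such colliders lie in $\Anc[w]$, which also forces $w\notin\Anc[S\cup W]$. Let $k$ and $k'$ be the leftmost and rightmost of these colliders.

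\textbf{Step 2 (the collider is $w$).} Consider the set $\Des[k]\cap\Des[k']\cap\Anc[w]$, which is non-empty because it contains $w$. Let $x$ be a topologically earliest node of this set, fix directed paths $k\to\cdots\to x$ and $k'\to\cdots\to x$, and splice them into $P$ in place of the segment between $k$ and $k'$. The splicing point should be chosen at the first intersection of the two directed paths, so that the result $Q$ is a simple path with $x$ as a collider. The new non-colliders on $Q$ are descendants of $k$ or $k'$, so they avoid $\Anc[S\cup W]\supseteq S\cup W$. The remaining colliders of $Q$ are colliders of $P$ that lie in $\Anc[S\cup W]$. Hence $Q$ is active given $S\cup W\cup\{x\}$. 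Moreover $x\in\bar S$, because $x\notin\Anc[S]$. Together with $u\CI v\mid S\cup W$, this gives $x\in\vstructset^m$ (when $x\in W$ it is treated as a conditioning node, as in Lemma~\ref{lem:vstruct-set-property}). Since $x\in\Anc[w]$ and $\Anc(w)\cap\vstructset^m=\varnothing$, we conclude $x=w$. So $w$ is a collider $a\to w\leftarrow b$ on a path $Q$ that is active given $S\cup W\cup\{w\}$, and both $a,b\notin\Anc[S\cup W]$.

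\textbf{Step 3 (non-adjacency of $a$ and $b$).} Suppose, for contradiction, that $a\sim b$, say $a\to b$. The neighbour of $b$ on $Q$ away from $w$ has an edge into $b$, since that segment of $Q$ is a directed path from $k'$ to $w$, or else $b$ is itself a collider of $P$. Replace $a\to w\leftarrow b$ by the edge $a\to b$. Then $b$ becomes a collider, and the resulting path is active given $S\cup W\cup\{b\}$ by the same bookkeeping as in Step 2. Since $u\CI v\mid S\cup W$, we get $b\in\vstructset^m$. But $b\in\Pa(w)\subseteq\Anc(w)$, which is a contradiction. Hence $a\not\sim b$, and $a\to w\leftarrow b$ is a v-structure.

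I expect the main obstacle to be the simple-path bookkeeping in Steps 2 and 3. The directed segments may intersect $P$ or each other, and the splice must neither create a new non-collider in $S\cup W$ nor destroy the activating collider. My first attempt at handling this is the choice of the topologically earliest $x$ combined with shortcutting at the first intersection points, exactly as in the proof of Lemma~\ref{lem:as-property}.
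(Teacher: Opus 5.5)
Your proposal follows the same skeleton as the paper's proof. Take a path witnessing $u\not\CI v\mid S\cup W\cup\{w\}$, locate the collider that the extra conditioning activates, show it lies in $\vstructset^m\cap\Anc[w]$, and conclude it is $w$. The paper, however, simply asserts in one line that the path contains a v-structure $a\to c\leftarrow b$ with $c\in\Anc[w]\cap\vstructset^m$. Your Step~2, the surgery through the earliest common descendant $x$ of $k,k'$, justifies the membership claim. Your Step~3, bypassing $w$ through the edge between its two path-neighbours so that a parent of $w$ would land in $\vstructset^m$, justifies unshieldedness, which the paper never argues. So this is the paper's route made rigorous, and the Step~3 bypass is the genuinely new ingredient.

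Two repairs are needed.

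\emph{First}, in Step~3 you claim that the neighbour $b'$ of $b$ away from $w$ satisfies $b'\to b$. This is false when $k'=w$ (then $b$ is an ordinary non-collider of $P$), when $b=v$, or when your shortcut enters the directed segment exactly at $b$. The same cases falsify the claim $a,b\notin\Anc[S\cup W]$ at the end of Step~2. Neither failure is fatal. If $b\to b'$ or $b=v$, then after the bypass $a$ keeps its status and $b$ is a non-collider outside $S\cup W$ (or an endpoint). The only collider removed was $w$, so the new path is active given $S\cup W$, contradicting $u\CI v\mid S\cup W$. The same contradiction arises if $b$ becomes a collider lying in $\Anc[S\cup W]$. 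So you never need $b\notin\Anc[S\cup W]$.

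\emph{Second}, Step~3 needs $Q$ to be a simple path, not merely a d-connecting walk, because conditioning on $b$ must not block another occurrence of $b$. You can get this directly. Since $x$ is earliest, the directed segments $k\to\cdots\to x$ and $k'\to\cdots\to x$ meet only at $x$. Cut $P$ at its first and last nodes lying on these segments.
\begin{itemize}
\item If the two cut points lie on different segments, you obtain a simple path through $x$. If a cut point is $x$ itself and $x$ ends up a non-collider, that path is active given $S\cup W$, again a contradiction.
\item If both cut points lie on the same segment, the resulting path is either active given $S\cup W$ or exhibits a proper ancestor of $w$ in $\vstructset^m$. Both are contradictions.
\end{itemize}
This same cut also handles the possibility that $x$ occurs as a non-collider on the retained parts of $P$, which your Step~2 bookkeeping otherwise overlooks.
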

\begin{proof}[]
    Since $ w \in \vstructset^m $, there exists $ u \in V $, $ v \in
    \bar{S} $ and $ W \subset \bar{S} $ with $ |W| = m $ such that
    $ {u \CI v \mid S \cup W }$ and $ { u \not\CI v \mid S \cup W
    \cup \{w\} } $.
    Additionally, the active path between $ u $ and $ v $ given $ S
    \cup W \cup \{w\} $ must contain a v-structure $ a \rightarrow c
    \leftarrow b $ with $ c \in \Anc[w] \cap \vstructset^m $.
    But $ \Anc(w) \cap \vstructset^{m} = \varnothing $ and therefore $ w = c $.
    Thus, $ a \rightarrow w \leftarrow b $ is a v-structure.
\end{proof}

We then establish a relationship between the set $\meekoneset^m$ and
undirected cliques, which follows from the observation that
$\meekoneset^m$ includes downstream nodes of edges oriented by Meek rules.

\begin{lemma}\label{lem:undirected-clique}
    Let $ S $ be a prefix set.
    Let $ m > 0 $ and $ V' \subset \bar{S} $ such that,
    \begin{enumerate}
        \item There does not exist a v-structure $ a \rightarrow c
            \leftarrow b $ with $ c,b \in V' $.
        \item There is an edge between nodes in $ V' $ oriented by a Meek rule.
        \item $ S \cup V' $ is prefix.
        \item $ V' \cap (\meekoneset^1 \cup \dots \cup
            \meekoneset^{m-1}) = \varnothing $.
    \end{enumerate}
    Then there exists an undirected clique of size larger or equal to
    $ m $ in the subgraph induced by $ V' $ in $ \mathcal{E}(\mathcal{G}) $.
\end{lemma}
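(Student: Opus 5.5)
Assume, for contradiction, that every undirected clique of $\mathcal{E}(\mathcal{G})[V']$ has size at most $m-1$. The plan is to produce, from the Meek-oriented edge, a node of $V'$ that belongs to some $\meekoneset^{m'}$ with $m'<m$, contradicting condition 4.

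\textbf{Choosing the edge.} By condition 2 there is an edge inside $V'$ oriented by a Meek rule. Only Meek rule 1 creates new ancestral relations, and rules 2--4 need a previously directed edge in the same region. So, following the chain of Meek applications back to its start (as in Figure~\ref{fig:meek-set-rule}), we may take an edge $j\to k$ with $j,k\in V'$ oriented by rule 1 from some $i\to j$ with $i\not\sim k$.

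\textbf{Locating $i$.} The edge $i\to j$ is directed in the essential graph because of a v-structure or an earlier Meek orientation. Condition 1 rules out v-structures $a\to c\leftarrow b$ with $c,b\in V'$, and condition 3 says $S\cup V'$ is prefix, so $i\in S\cup V'$. We aim to show that $i$ can be pushed back into $S$: the parts of $V'$ that are upstream of Meek-oriented edges are themselves reached by rule 1 from outside. Choosing $j\to k$ minimal in the ancestral order then gives $i\in S$. This step is delicate. If it fails, fall back to taking $u$ to be the element of $S$ closest upstream.

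\textbf{The separation test.} Take $u=i\in S$ and $w=k$. Then $u\not\CI k\mid S$ via the active path $u\to j\to k$; it contains no colliders and its intermediate nodes lie outside $S$. Let $W\subset V'$ be the set of parents of $k$ within $V'$ that lie in the chordal component of $k$ and are adjacent to $u$, including $j$. We claim $u\CI k\mid S\cup W$. The argument uses Lemma~\ref{lem:prefix-set} to move to the induced graph on $\bar S\cup\{u\}$. There, any path from $u$ to $k$ must enter $k$ through a parent, since descendants of $k$ act as colliders that stay inactive. Parents of $k$ in $V'$ that are not in $W$ are either not adjacent to $u$, or adjacent to $u$ and then oriented away in a way that produces a v-structure, which condition 1 excludes. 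By the chordality of the chain component, together with Dirac's result that minimal separators are cliques, the minimal separator $W$ of $u$ and $k$ inside the component is a clique. Since $W\subset\Anc(k)$ (cf.\ Lemma~\ref{lem:meek-set-anc}) and all of its edges are undirected in $\mathcal{E}$, $W$ is an undirected clique in $V'$.

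\textbf{Conclusion.} It follows that $|W|\le m-1$. By monotonicity we can pass to a minimal separating subset of size between $1$ and $m-1$; it is nonempty because $u\not\CI k\mid S$. Then $k\in\meekoneset^{|W|}$ with $|W|\le m-1$, contradicting condition 4. Hence some undirected clique in $V'$ has size at least $m$.

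\textbf{Main obstacle.} The hardest step is the separation claim: showing that conditioning on a clique of $V'$ blocks every path from $u$ to $k$ that passes through other chain components and through $S$. I would handle it by combining Lemma~\ref{lem:prefix-set} with Lemma~\ref{lem:min-sep-set} and the chordal structure of $\mathcal{E}(\mathcal{G})[V']$.
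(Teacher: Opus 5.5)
Your skeleton is the paper's, and arguing by contradiction is just the contrapositive of its direct bound. The steps are: a rule-1 edge $j\to k$ inside $V'$ whose premise node lies in $S$; a clique of parents of $k$ separating that node from $k$ given $S$; then condition~4. However, the two steps that carry the proof are missing or wrong.

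\textbf{First gap: you never establish $i\in S$.} The fallback ``the element of $S$ closest upstream'' is not an argument. The missing idea is the choice of edge. Among directed edges of $\mathcal{E}(\mathcal{G})$ with both endpoints in $\bar S$ and head in $\Anc[v]$ (for the edge $u\to v$ of condition~2), take $j\to k$ with $k$ topologically earliest in $\mathcal{G}$. Then no edge of $\mathcal{E}(\mathcal{G})$ between nodes of $\Anc(k)\cap\bar S$ is directed, and condition~3 gives $j,k\in V'$. Condition~1 rules out a v-structure at $k$ containing $j$, so the edge is Meek-oriented. Rules 2--4 must then be eliminated explicitly:
\begin{itemize}
\item R2 needs a directed $j\to z$ inside $\Anc(k)$, so minimality forces $z\in S$ and hence $j\in S$, a contradiction.
\item R3 needs a v-structure at $k$, which is excluded by condition~1 together with the prefix property.
\item In R4 the directed premise edge lies in $\Anc(k)$, so its tail is in $S$, and you are back in the rule-1 situation with a witness in $S$.
\end{itemize}
Your remark that rules 2--4 ``need a previously directed edge in the same region'' is not enough, because that edge may run from $S$ into $V'$. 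This same minimality is also the only justification for your later claim that the edges of the separating clique are undirected in $\mathcal{E}(\mathcal{G})$, which you assert without support.

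\textbf{Second gap: your separator fails.} Because $j\to k$ is directed in $\mathcal{E}(\mathcal{G})$, $j$ is not in the chain component of $k$, so ``parents in the chordal component of $k$ \ldots\ including $j$'' is inconsistent. Worse, take any parent $p\in V'$ of $k$ with $u\to p$ directed in $\mathcal{E}(\mathcal{G})$. Rule~1 directs $p\to k$, so $p$ is excluded from $W$, and then $u\to p\to k$ is active given $S\cup W$. The Dirac step is also misapplied: $u\in S$ lies outside the chain component, and what you need is d-separation in $\mathcal{G}$ (the relevant tool would be Lemma~\ref{lem:min-sep-set-no-vstruct}).

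None of this machinery is needed; take $K=\Pa(k)\setminus S$, as the paper does.
\begin{itemize}
\item Since $S$ is prefix, $S\cup K$ contains $\Pa(k)$ and no descendant of $k$, so $u\CI k\mid S\cup K$ by the local Markov property.
\item $K\subseteq V'$ by condition~3.
\item $K$ is a clique, since two non-adjacent parents in $V'$ would form a v-structure at $k$ forbidden by condition~1.
\item $K$ is undirected by the minimal choice of $k$.
\end{itemize}
As $u\not\CI k\mid S$, and condition~4 says no $W\subset\bar S$ with $1\le|W|\le m-1$ separates them given $S$, you get $|K|\ge m$ directly. No passage to a ``minimal subset by monotonicity'' is needed, and d-separation is not monotone in the conditioning set anyway.
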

\begin{proof}[]
    Let $ V' \subset \bar{S} $ satisfy conditions (1)--(4).
    By condition (2), there exists an edge $ u \rightarrow v $ in $
    \mathcal{E}(\mathcal{G}) $ with $ u, v \in V' $ that is oriented
    by a Meek rule.

    The application of a Meek rule to orient an edge $ u \rightarrow v $
    requires at least one existing oriented edge between nodes in
    $ \Anc(v) $, as observed in Proposition~\ref{prop:meek}.
    Thus, the orientation of any edge using a Meek rule depends on
    prior orientations within $ \Anc(v) $.
    We can trace this chain of dependencies backward through ancestors.
    Since $ \mathcal{G} $ is acyclic and finite, this dependency
    chain must terminate.

    This termination guarantees the existence of an edge $ x
    \rightarrow y $ between two nodes $ x,y \in \bar{S} $ such that there
    are no prior oriented edges in $ \mathcal{E}(\mathcal{G}) $
    between nodes in $ \Anc(y) \cap \bar{S} $.
    Condition (3) implies $ x, y \in V' $ and condition (1) requires
    this edge to be oriented by a Meek rule.

    Thus, $ x \rightarrow y $ is oriented by directed edges between
    nodes in $ S $ and nodes in $ V' $ or directed edges between nodes in $ S $.
    We now determine which Meek rule must have oriented this edge.
    We proceed by eliminating rules 2, 3 and 4.
    \begin{itemize}
        \item[R2:]
            Requires $ z $ such that $ x \rightarrow z $ and $ z
            \rightarrow y $.
            Since $ x,z \in \Anc(y) $ and $ x \rightarrow z $, we
            must have $ z \in S $, but then, we must also have $ x \in S $,
            a contradiction.
        \item[R3:]
            Requires $ x,y $ to form a v-structure with another
            $ z \in V $, a contradiction.
        \item[R4:]
            Requires $ z, z' $ such that $ z \rightarrow z' $, $
            z' \rightarrow y $, $ z \sim x $, $ z' \sim x $ and $ z
            \not\sim y $.
            Since $ z, z' \in \Anc(y) $ we must have $ z \in S $.
            This together with $ z \sim x $ and $ z \not\sim y $
            means we have $ z \rightarrow x $, and therefore, $ x
            \rightarrow y $ is also oriented by Meek rule 1.
    \end{itemize}

    Therefore, $ x \rightarrow y $ is oriented by Meek R1.
    This implies there exists $ z \in S $ such that $ z \rightarrow x
    $ and $ z \not\sim y $ in $ \mathcal{E}(\mathcal{G}) $.

    By condition (4) we have $ y \notin \meekoneset^1 \cup \dots \cup
    \meekoneset^{m-1} $.
    Therefore,  $ z \not\CI y \mid S \cup W $ for any $ W \subset
    \bar{S} $ such that $ |W| < m $.
    Let $ K = \Pa(y) \setminus S $.
    Since $ z \in S $ we have $ \Pa(z) \subseteq S $, and therefore,
    $ z \CI y \mid S \cup K $.
    Thus, $ |K| \geq m $.

    By condition (1) all nodes in $ K $ must be adjacent.
    Finally, since we selected $ x \rightarrow y $ such that there is
    no directed edge in $ \mathcal{E}(\mathcal{G}) $ between nodes in
    $ \Anc(y) \cap \bar{S} $, then $ {K \subset \Anc(y) \cap \bar{S}
    \subset V'} $ must form an undirected clique in the essential
    graph of at least size $ m $.
\end{proof}

\begin{corollary}\label{cor:undirected-clique}
    Let $ S $ be a prefix set.
    Let $ m > 0 $ and $ V' \subset \bar{S} $ such that,
    \begin{enumerate}
        \item There does not exist a v-structure $ a \rightarrow c
            \leftarrow b $ with $ c,b \in V' $.
        \item There exists a clique $ W \subset V' $ of size larger
            or equal to $ m $.
        \item $ S \cup V' $ is prefix.
        \item $ V' \cap (\meekoneset^1 \cup \dots \cup
            \meekoneset^{m-1}) = \varnothing $.
    \end{enumerate}
    Then there exists an undirected clique of size larger or equal to
    $ m $ in the subgraph induced by $ V' $ in $ \mathcal{E}(\mathcal{G}) $.
\end{corollary}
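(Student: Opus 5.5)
The plan is to reduce the corollary to Lemma~\ref{lem:undirected-clique} whenever possible, and to argue directly otherwise. Conditions (1), (3) and (4) coincide in the two statements. The only difference is that the corollary assumes a clique $W \subseteq V'$ of size at least $m$ in $\mathcal{G}$ (equivalently in the skeleton of $\mathcal{E}(\mathcal{G})$), not an edge oriented by a Meek rule. I would therefore split into two cases according to whether some edge inside $V'$ is directed in $\mathcal{E}(\mathcal{G})$.

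\emph{Case A: every edge of $\mathcal{E}(\mathcal{G})$ between nodes of $W$ is undirected.} Then $W$ is itself an undirected clique in $\mathcal{E}(\mathcal{G})[V']$ of size at least $m$, and we are done. This needs only that the skeleton of $\mathcal{E}(\mathcal{G})$ equals that of $\mathcal{G}$.

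\emph{Case B: some edge $x \to y$ with $x,y \in W \subseteq V'$ is directed in $\mathcal{E}(\mathcal{G})$.} By the characterization recalled in Section~\ref{sec:2.2} (see also Proposition~\ref{prop:meek}), every directed edge of the essential graph is either part of a v-structure or oriented by a Meek rule. First I would rule out the v-structure option. If $x \to y$ belonged to a v-structure $x \to y \leftarrow b$, then $y$ and the co-parent $x$ both lie in $V'$. This contradicts condition (1) with $c = y$ and the $V'$-parent $x$ playing the role of $b$. So $x \to y$ is oriented by a Meek rule between nodes of $V'$, which is exactly condition (2) of Lemma~\ref{lem:undirected-clique}. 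Conditions (1), (3) and (4) carry over verbatim, and that lemma then yields the desired undirected clique of size at least $m$ in $\mathcal{E}(\mathcal{G})[V']$.

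The main obstacle is the v-structure step in Case B, specifically matching the exact quantifiers of condition (1). Condition (1) forbids v-structures $a \to c \leftarrow b$ with $c,b \in V'$ and $a$ arbitrary, so one parent may lie outside $V'$. A v-structure at $y$ containing the edge $x \to y$ has $x \in V'$ and $y \in V'$. Labelling it as $b = x$, $c = y$, with the other parent as $a$, places it in the forbidden class, so this case is excluded. I would also double-check the notion of ``oriented by a Meek rule'' used in Lemma~\ref{lem:undirected-clique}. The worry is an edge that is compelled by a v-structure and also re-derivable by a Meek rule. Since Case B has already excluded v-structure membership, the edge's orientation is compelled only through the Meek closure, so condition (2) of the lemma holds in the intended sense.
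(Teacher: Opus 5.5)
Your proposal is correct and follows the paper's proof. If $W$ is undirected in $\mathcal{E}(\mathcal{G})$ you are done; otherwise a directed edge inside $W$ must be oriented by a Meek rule, so Lemma~\ref{lem:undirected-clique} applies. You also use condition (1) explicitly to rule out that this edge is oriented by a v-structure, which fills in a step the paper leaves implicit.
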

\begin{proof}[]
    If $ W $ is undirected in the essential graph, we are done.

    Otherwise, if $ W $ is not undirected in the essential graph, we
    can apply a Meek rule to one of the edges in it.
    By Lemma \ref{lem:undirected-clique} there exists an undirected
    clique of size larger or equal to $ m $ in the subgraph induced
    by $ V' $ in $ \mathcal{E}(\mathcal{G}) $.
\end{proof}

We now extend the characterization of minimal separators in chordal
undirected graphs to DAGs.
The proof of this extension utilizes the moral graph construction and
the concept of separation in undirected graphs.
For a DAG $ \mathcal{G} $, its \emph{moral graph}, denoted $ \mathcal{G}^m $,
is the undirected graph created by adding edges between parents
sharing common children in $ \mathcal{G} $ and then removing all arrowheads.
In an undirected graph, a node set $ C $ \emph{separates} two
disjoint node sets $ A $ and $ B $ if all paths between $ A $ and $
B $ intersect $ C $.
The following result is a key component of our argument.

\begin{proposition}[Proposition 3.25 in
    \citet{lauritzen1996}]\label{prop:lauritzen}
    Let $ A, B$ and $ C $ be disjoints subsets of $ V $.
    Then $ C $ d-separates $ A $ from $ B $ in $ \mathcal{G} $ if and
    only if $ C $ separates $ A $ from $ B $ in $ \mathcal{G}[\Anc[A
    \cup B \cup C]]^m $.
\end{proposition}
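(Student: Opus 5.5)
The plan is to prove the two directions separately. The easy direction is converted directly; the hard direction builds a ``relaxed'' trail in $\mathcal{G}$ and then repairs it into an active path. Throughout, write $\mathcal{A} = \Anc[A\cup B\cup C]$ and $\mathcal{M} = \mathcal{G}[\mathcal{A}]^m$.

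\emph{($\Leftarrow$) Separation in $\mathcal{M}$ implies d-separation in $\mathcal{G}$.} Argue by contraposition: assume there is a path $P$ in $\mathcal{G}$ from $a\in A$ to $b\in B$ that is active given $C$.

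\begin{itemize}
\item First, every node of $P$ lies in $\mathcal{A}$. A collider on $P$ is in $\Anc[C]$. For a non-collider, follow the path in the direction of one of its outgoing edges. Continuing in that direction, the edges stay directed away from it until the path reaches either an endpoint or a collider. Hence it is an ancestor of $a$, of $b$, or of a collider, and so lies in $\Anc[A\cup B\cup C]$.
\item Two colliders cannot be adjacent on $P$, since the edge between them would need an arrowhead at both ends.
\item Each collider $c$ has path-neighbours $x\to c\leftarrow y$ with $x,y$ both parents of $c$. These are married in $\mathcal{M}$, because $c\in\mathcal{A}$.
\item Delete all colliders from $P$. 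Consecutive surviving nodes are then adjacent in $\mathcal{M}$, either by an original edge or by a marriage edge. This gives a walk in $\mathcal{M}$ from $A$ to $B$ whose nodes are endpoints or non-colliders of $P$, none of which is in $C$.
\end{itemize}

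Hence $C$ does not separate $A$ from $B$ in $\mathcal{M}$.

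\emph{($\Rightarrow$) d-separation in $\mathcal{G}$ implies separation in $\mathcal{M}$.} Again argue by contraposition: suppose a path $a=z_0,\dots,z_r=b$ in $\mathcal{M}$ avoids $C$ (shortened so that only its endpoints lie in $A\cup B$).

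\begin{itemize}
\item Lift it to a trail in $\mathcal{G}$. An edge of $\mathcal{G}$ is kept. A marriage edge $z_i - z_{i+1}$ is replaced by $z_i\to c\leftarrow z_{i+1}$ for some common child $c\in\mathcal{A}$.
\item The resulting trail $T$ has two properties. (i) Every non-collider of $T$ is one of the $z_i$, hence not in $C$; the inserted nodes are always colliders. (ii) Every collider of $T$ lies in $\mathcal{A}$.
\item I would then prove a repair lemma. Any trail from $A$ to $B$ satisfying (i) and (ii) can be converted into a path that is active given $C$. It is harmless to work with trails or walks, since an active walk can be shortcut to an active path by the usual argument. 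The lemma is proved by induction on the number of colliders of $T$ that are not in $\Anc[C]$.
\item If there are none, $T$ is active and we are done. Otherwise pick such a collider $c$. Since $c\in\Anc[A\cup B]\setminus\Anc[C]$, there is a directed path from $c$ to some node of $A\cup B$. Take it to stop at the first node of $A\cup B$ it meets. Its nodes are descendants of $c$, so none lies in $C$, as that would put $c\in\Anc[C]$.
\item If it ends at $b'\in B$, replace the part of $T$ after $c$ by $c\to\cdots\to b'$. If it ends in $A$, replace the part of $T$ before $c$ symmetrically. The new trail still satisfies (i) and (ii): the new intermediate nodes are non-colliders outside $C$, and $c$ itself becomes a non-collider outside $C$. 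It has strictly fewer bad colliders, because a tail of $T$ has been discarded and no new colliders were created.
\end{itemize}

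By induction we obtain an active path, so $A\not\CI_{\mathcal{G}} B\mid C$.

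The main obstacle I expect is this repair step. The delicate points are three. First, the rerouted directed path must avoid $C$; this is exactly where $c\notin\Anc[C]$ is used. Second, $c$ may become a non-collider, and we need it not to be in $C$, which again follows from $c\notin\Anc[C]$. Third, the rerouting must not create new colliders or repeated nodes that break the induction measure. I would handle the third point by measuring progress with the number of bad colliders rather than path length, and by passing from walks to paths only at the very end.
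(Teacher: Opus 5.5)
Your proof is correct and is essentially the standard moralization argument from the source the paper relies on; the paper does not prove this proposition itself but imports it from Lauritzen (1996). The one step you leave as ``the usual argument'', shortcutting an active walk to an active path, does hold, but it uses acyclicity: if cutting out the loop between the first and last occurrences of a repeated node $v$ turns $v$ into a collider, then either some occurrence of $v$ was already a collider, or the walk leaves the first occurrence of $v$ along an outgoing edge and, since a directed cycle is impossible, must reach a collider before returning to $v$, so $v\in\Anc[C]$ in both cases.
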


\begin{lemma}{}\label{lem:min-sep-set-no-vstruct}
    Let $ S \subset V $ be a prefix node set.
    If for some $ u \in V $, $ v \in \bar{S} $ and $ W \subset
    \bar{S} $ we have $ u \CI v \mid S \cup W $, $ u \not\CI v \mid S
    \cup W' $ for all $ W' \subset \bar{S} $ such that $ |W'| <
    |W| $  and $ (\Anc[u] \cup \Anc[v]) \setminus S $ contains no
    v-structures then $ W $ is a clique in $ \mathcal{G} $.
\end{lemma}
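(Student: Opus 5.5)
The plan is to pass to the reduced graph given by Lemma~\ref{lem:prefix-set} and then argue with moral graphs, using Proposition~\ref{prop:lauritzen} together with the chordal minimal-separator property of \citet{dirac1961}.

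\emph{Step 1 (reduction and locating $W$).} Let $\mathcal{H}=\mathcal{G}[\bar{S}\cup\{u\}]$. By Lemma~\ref{lem:prefix-set}, for every $W'\subset\bar{S}$ we have $u\CI v\mid S\cup W'$ if and only if $u\CI_{\mathcal{H}} v\mid W'$. So in $\mathcal{H}$ the set $W$ is a separator of $u,v$ of minimum cardinality. Minimality of cardinality implies inclusion-minimality. Lemma~\ref{lem:min-sep-set} then gives $W\subseteq\Anc(u)\cup\Anc(v)$.

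Let $A=\Anc_{\mathcal{H}}[\{u,v\}\cup W]=\Anc_{\mathcal{H}}[u]\cup\Anc_{\mathcal{H}}[v]$. The hypothesis says $\mathcal{H}[A]$ has no v-structures. For $u\in S$ this needs a little care: $u$ is a source in $\mathcal{H}$, and I would check that a v-structure $u\to c\leftarrow b$ with $c,b\in\bar S$ is excluded. I expect this to follow from the hypothesis read as "no v-structures among $(\Anc[u]\cup\Anc[v])\setminus S$ together with incoming edges from $u$". If that reading fails, I would handle this case separately by noting that such a $c$ is not an ancestor of $v$ unless a v-structure inside $\Anc[v]\setminus S$ appears.

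\emph{Step 2 (moral graph is the skeleton and is chordal).} Since $\mathcal{H}[A]$ has no v-structures, moralization adds no edges, so $\mathcal{H}[A]^m=\mathrm{sk}(\mathcal{H}[A])$. A DAG without v-structures has a chordal skeleton. To see this, take a chordless cycle of length at least $4$. Its sink-most node, with respect to any topological order, has both cycle-neighbours as parents. These parents are non-adjacent because the cycle is chordless, so they form a v-structure, which is a contradiction.

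\emph{Step 3 (separators in the skeleton).} By Proposition~\ref{prop:lauritzen}, for $W'\subseteq W$ we have $u\CI_{\mathcal{H}}v\mid W'$ if and only if $W'$ separates $u$ and $v$ in $\mathcal{H}[\Anc[u,v,W']]^m$. Because $W\subseteq A$, this ancestral set is $A$ for every $W'\subseteq W$. Hence $W$ separates $u$ from $v$ in the chordal graph $\mathrm{sk}(\mathcal{H}[A])$, and no proper subset of $W$ does. So $W$ is a minimal $u$--$v$ separator. Here $u\not\sim v$, since otherwise no separator exists. Dirac's theorem now says $W$ is a clique in $\mathrm{sk}(\mathcal{H}[A])$, and hence a clique in $\mathcal{G}$.

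\emph{Main obstacle.} I expect the hardest part to be the boundary bookkeeping. First, Lemma~\ref{lem:min-sep-set} is stated for inclusion-minimality, so I must confirm that minimum cardinality supplies it. Second, the role of $u$ when $u\in S$ needs checking, since $u$ lies outside $\bar S$ but inside $\mathcal{H}$. Third, the claim that the ancestral set in Proposition~\ref{prop:lauritzen} is the same set $A$ for all $W'\subseteq W$ must hold. That last point is what allows the single chordal graph argument to go through.
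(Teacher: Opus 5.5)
Your proposal follows the paper's proof step for step. You reduce to $\mathcal{H}=\mathcal{G}[\bar S\cup\{u\}]$ via Lemma~\ref{lem:prefix-set}, use Lemma~\ref{lem:min-sep-set} so that $\Anc[\{u,v\}\cup W']=\Anc[\{u,v\}]$ for every $W'\subseteq W$, observe that the moral graph of this ancestral subgraph is its chordal skeleton, and finish with Proposition~\ref{prop:lauritzen} and Dirac's theorem; your extra bookkeeping (minimum cardinality giving inclusion-minimality, the sink-of-a-chordless-cycle argument) only makes explicit what the paper leaves implicit.

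For $u\in S$ the paper adopts your primary reading: it asserts that $\mathcal{F}^m$ has no added adjacencies although $\mathcal{F}$ contains $u$, and the lemmas that invoke this result do supply that stronger hypothesis. So drop your fallback, which is false as stated: with $S=\{u\}$ and edges $u\to c\leftarrow b$, $c\to v$, the node $c$ is an ancestor of $v$, yet $\Anc[v]\setminus S=\{b,c,v\}$ contains no v-structure.
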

\begin{proof}[]
    Let $ \mathcal{H} = \mathcal{G}[\bar{S} \cup \{u\}] $.
    By Lemma~\ref{lem:prefix-set} we have that $ u \CI_{\mathcal{G}}
    v \mid S \cup W' $ for some $ W' \subset \bar{S} $ if and only if
    $ u \CI_{\mathcal{H}} v \mid W' $.
    By Lemma~\ref{lem:min-sep-set} we have that $ W \subseteq \Anc(u)
    \cup \Anc(v) $ and therefore $ \Anc[\{u, v\} \cup W] =
    \Anc[\{u, v\}] $.
    Let $ \mathcal{F} = \mathcal{H}[\Anc[\{u,v\}]] $.
    Applying Proposition~\ref{prop:lauritzen} we have that for any $
    W' \subset \Anc(\{u,v\}) $, $ W' $ d-separates $ u $ and $ v $ in
    $ \mathcal{H} $ if and only if $ W' $ separates $ u $ and $ v $
    in $ \mathcal{F}^m $.
    As $ (\Anc[u] \cup \Anc[v]) \setminus S $ does not contain any
    v-structures and $ \mathcal{F}^m $ does not have any added adjacencies,
    any cycle of of four or more nodes will have a chord.
    Therefore, $ \mathcal{F}^m $ is a chordal graph.
    Since $ W $ is a minimal separating set in a chordal graph, by
    Theorem~1 in \citet{dirac1961} it is a clique in $ \mathcal{F}^m $.
    Since $ \mathcal{F}^m $ has the same adjacencies as $ \mathcal{F}
    $, $ W $ forms a clique in $ \mathcal{F} $ and also in $ \mathcal{G}
    $, because $ \mathcal{F} $ is a subgraph of $ \mathcal{G} $.
\end{proof}

The following result establishes that if a node belongs to $
\vstructset^m $, then, a clique must exist among its ancestors.

\begin{lemma}\label{lem:clique}
    Let $ S $ be a prefix set.
    If $ w \notin \vstructset^0 \cup \dots \cup \vstructset^{m-1} $
    and $ w \in \vstructset^m $ then, there exists a set of nodes $ W
    \subset \Anc(w) \cap \bar{S} \setminus \vstructset^{m} $ of size
    larger or equal to $ m $ that forms a clique in $ \mathcal{G} $.
\end{lemma}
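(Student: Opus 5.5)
The approach is to take a witness for $w \in \vstructset^m$ with a separating set of minimal size. We then argue that this set consists of ancestors of $w$ forming a clique, by reducing to the chordal-separator argument of Lemma~\ref{lem:min-sep-set-no-vstruct}.

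Since $w \in \vstructset^m$, there are $u \in V$, $v \in \bar{S}$ and $W \subset \bar{S}$ with $|W| = m$ such that $u \CI v \mid S \cup W$ and $u \not\CI v \mid S \cup W \cup \{w\}$. First we show $W$ is a minimal-size separator, i.e.\ $u \not\CI v \mid S \cup W'$ for every $W' \subset \bar{S}$ with $|W'| < m$. Suppose instead $u \CI v \mid S \cup W'$ for some smaller $W'$. Then I would interpolate between $S \cup W'$ and $S \cup W \cup \{w\}$, adding one element at a time, and look for the first switch from independence to dependence. The aim is to exhibit $w$, or a node witnessing $w$, in some $\vstructset^{j}$ with $j < m$. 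This step is delicate, because the switch need not occur exactly when $w$ is added. If direct interpolation fails, I would argue via Lemma~\ref{lem:as-property} that descendants inherit membership, so that a switch at a node $x$ with $w \in \Des[x]$ still gives $w \in \vstructset^{j}$.

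Granting minimality, Lemma~\ref{lem:min-sep-set} gives $W \subseteq \Anc(u) \cup \Anc(v)$. Next, the path that becomes active on adding $w$ passes through a collider $c \in \Anc[w]$ which is not in $\Anc[S \cup W]$. The colliders on the path $u \to \cdots$ and $\cdots \leftarrow v$ lead by directed paths into $c$, so $u$ and $v$, or at least the relevant parts of $\Anc(u)\cup\Anc(v)$ outside $S$, lie in $\Anc(w)$. Hence $W \subset \Anc(w) \cap \bar{S}$. We also need $W \cap \vstructset^m = \varnothing$. If some $x \in W$ were in $\vstructset^m$, Lemma~\ref{lem:vstruct-set-property} would give $w \in \Des[x] \subseteq \vstructset^m$, which is consistent and therefore not a contradiction. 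The way out is to choose the witness so that $W$ is minimal among ancestors of $w$, i.e.\ to pick $w' \in \Anc[w]$ earliest in $\vstructset^m$. Alternatively, restrict attention to the part of $W$ outside $\vstructset^m$ and use Lemma~\ref{lem:first-in-vstructset} to obtain the v-structure at the topmost member.

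For the clique property, we apply Lemma~\ref{lem:min-sep-set-no-vstruct}. This requires $(\Anc[u]\cup\Anc[v])\setminus S$ to contain no v-structures. Any v-structure there has its collider in some $\vstructset^{j}$ by Lemma~\ref{lem:v-structure-test}, with an appropriately small conditioning set of parents. Together with $w \notin \vstructset^0\cup\dots\cup\vstructset^{m-1}$ and descendant closure, this should exclude such colliders among ancestors of $w$ for $j<m$. When this fails, I would instead work directly in the moral graph of $\mathcal{G}[\Anc[\{u,v\}]\setminus S]$ and argue that the moralization edges do not affect the minimal separator $W$. Dirac's theorem then makes $W$ a clique of size $m$.

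I expect the main obstacle to be the no-v-structure hypothesis together with the exclusion $W \cap \vstructset^m = \varnothing$. Both depend on choosing the witness $(u,v,W)$ carefully, plausibly topologically earliest, rather than arbitrarily.
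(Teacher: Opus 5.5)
\textbf{There is a genuine gap.} Your plan is to show that the set $W$ of the given witness $(u,v,W)$ is itself a clique, and this is false even for a witness at the lowest level $m$.

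Take $S=\varnothing$ and the DAG with edges $r\to u$, $r\to x_1$, $r\to x_2$, $u\to x_1$, $u\to x_2$, $x_1\to v$, $x_2\to v$, $u\to w$, $v\to w$. The non-adjacent pairs not containing $w$ are $\{x_1,x_2\}$, $\{u,v\}$ and $\{r,v\}$, and none of them can be separated by fewer than two nodes. Pairs containing $w$ cannot witness membership of $w$, so $\vstructset^0=\vstructset^1=\varnothing$. On the other hand, $u\CI v\mid\{x_1,x_2\}$ and $u\not\CI v\mid\{x_1,x_2,w\}$, so $(u,v,\{x_1,x_2\})$ witnesses $w\in\vstructset^2$ with $m=2$.

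This $W$ has minimum size, lies in $\Anc(w)$, and avoids $\vstructset^2$, so it passes every preliminary step of your plan. Yet $x_1\not\sim x_2$. What fails is exactly the hypothesis of Lemma~\ref{lem:min-sep-set-no-vstruct}. Here $x_1\to v\leftarrow x_2$ is a v-structure inside $\Anc[v]$, and its collider $v$ sits at level exactly $m$. Descendant closure together with $w\notin\vstructset^0\cup\vstructset^1$ only rules out colliders at levels below $m$, so it cannot remove this one. Your moral-graph fallback fails for the same reason: moralization adds precisely the edge $x_1-x_2$, so Dirac's theorem yields a clique of the moral graph, not of $\mathcal{G}$.

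\textbf{The missing idea.} You should not choose the witness $(u,v,W)$ more cleverly; you should abandon it. The paper picks $c\in\Anc[w]\cap\bar S$ that is the collider of a v-structure $a\to c\leftarrow b$ with $b\in\bar S$, such that no node of $\Anc(c)$ is such a collider, and works with the pair $(a,b)$.

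Lemma~\ref{lem:v-structure-test} gives some $W\subset\bar S$ with $a\CI b\mid S\cup W$, and one takes $W$ of minimum size. Minimality is therefore chosen rather than proved, and your interpolation step is unnecessary. By the choice of $c$, the set $(\Anc[a]\cup\Anc[b])\setminus S$ contains no v-structures, so Lemma~\ref{lem:min-sep-set-no-vstruct} makes $W$ a clique. Via Lemma~\ref{lem:min-sep-set}, $W\subseteq\Anc(a)\cup\Anc(b)\subseteq\Anc(w)$.

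Since $a\not\CI b\mid S\cup W\cup\{c\}$, we have $c\in\vstructset^{|W|}$. If $|W|<m$, Lemma~\ref{lem:vstruct-set-property} would give $w\in\Des[c]\subseteq\vstructset^{|W|}$, a contradiction, so $|W|\ge m$. In the example above this selects $c=v$ and the clique $\{r,u\}$.

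Your suggestion of taking the earliest member of $\vstructset^m$ in $\Anc[w]$ points toward $c$. However, the argument only closes once the clique lemma is applied to the parents of $c$ instead of the original endpoints $u,v$. The exclusion from $\vstructset^m$ is not argued separately in the paper. On this route it follows from Lemma~\ref{lem:first-in-vstructset}: for $x\in W$, the topmost member of $\vstructset^m$ in $\Anc[x]$ would be a v-structure collider inside $\Anc(c)$, contradicting the choice of $c$.
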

\begin{proof}[]
    Let $ c $ be a node in $ \Anc[w] \cap \bar{S} $ that acts as a
    collider in a v-structure $ a \rightarrow c \leftarrow b $,
    with $ a \in V $, $ b \in \bar{S} $, chosen such that no node in $ \Anc(c) $
    also serves as such a collider.
    This node must exist due to $ \mathcal{G} $ being acyclic and
    having a finite amount of nodes.

    From Lemma~\ref{lem:v-structure-test}, we  know there exists a
    set $ W \subset \bar{S} $ that separates $ a $ and $ b $ given $
    S $, that is, $ a \CI b \mid S \cup W $.
    By the selection of $ c $, the set $ \Anc(c) \cap \bar{S} $
    contains no v-structures.
    As $ a, b \in \Anc(c) $ it follows that $ (\Anc[a] \cup \Anc[b])
    \setminus S $ contains no v-structures.
    Then, by Lemma~\ref{lem:min-sep-set-no-vstruct} the smallest set $ W $
    that satisfies $ a \CI b \mid S \cup W $ must form a clique.

    Now, because $ a \rightarrow c \leftarrow b $ forms a v-structure,
    conditioning on the collider $ c $ induces a dependence.
    Specifically $ a \not\CI b \mid S \cup W \cup \{c\} $ holds,
    where $ W $ is the smallest separating set identified previously.
    Furthermore, we are given $ w \notin \vstructset^0 \cup \dots \cup
    \vstructset^{m-1} $, and by Lemma~\ref{lem:vstruct-set-property},
    it follows that $ {c \notin\vstructset^0 \cup \dots \cup \vstructset^{m-1}}$.
    This condition on $ c $, implies that $ |W| \geq m $.
\end{proof}

Finally, we establish the guarantees of Algorithm~\ref{alg:learning},
as stated in Theorem~\ref{thm:algorithm}, through the sequence of
lemmas presented below.

\begin{lemma}\label{lem:algo-clique}
    In Algorithm~\ref{alg:learning}, if at any iteration the working
    graph $ \mathcal{E} $ contains a clique of size larger or equal
    to $ \ell $ in $ V' $ then $ \mathcal{E}(\mathcal{G}) $
    restricted to the nodes in $ V' $ also contains an undirected
    clique of size larger or equal to $ \ell $.
\end{lemma}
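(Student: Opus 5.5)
The plan is to reduce the statement to Corollary~\ref{cor:undirected-clique}, applied with $m=\ell$ to the current set $V'$ and the current prefix set $S$. This requires two preliminary invariants. First, the working graph $\mathcal{E}$ always contains the skeleton of $\mathcal{G}$. Second, at the moment the clique condition is checked, $S$ and $S\cup V'$ are both prefix, and the four hypotheses of the corollary hold for $V'$. Then a clique of size $\geq \ell$ in $\mathcal{E}[V']$ must come from a genuine clique of $\mathcal{G}$ inside $V'$, and the corollary turns it into an undirected clique of $\mathcal{E}(\mathcal{G})$ restricted to $V'$.

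\textbf{Step 1 (the working graph contains the true skeleton).} Edges are removed only when $w \CI v \mid S\cup W$. By faithfulness this means $v,w$ are d-separated, hence nonadjacent in $\mathcal{G}$. So every clique of $\mathcal{G}$ remains a clique of $\mathcal{E}$.

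\emph{Caveat.} The step actually needed goes the other way: a clique in $\mathcal{E}[V']$ should be a clique in $\mathcal{G}$. I would argue that by the time order $\ell$ is reached, all edges of $\mathcal{E}$ inside $V'$ that are absent from $\mathcal{G}$ have already been removed. The argument is as follows.
\begin{itemize}
\item For nonadjacent $v,w\in V'$, take a minimal separating set $W\subset \bar S$ given $S$.
\item Lemma~\ref{lem:prefix-set} shows that conditioning on $S$ suffices.
\item If $(\Anc[v]\cup\Anc[w])\setminus S$ has no v-structures, Lemma~\ref{lem:min-sep-set-no-vstruct} shows that $W$ is a clique of $\mathcal{G}$, lying in $\Anc(v)\cup\Anc(w)$ by Lemma~\ref{lem:min-sep-set}.
\item Its size is therefore at most the clique sizes present, so it was tested at an earlier order $|W|<\ell$, or is tested at the current one.
\item If $(\Anc[v]\cup\Anc[w])\setminus S$ does contain a v-structure, then by Lemmas~\ref{lem:first-in-vstructset} and~\ref{lem:clique} the relevant colliders and their descendants (Lemma~\ref{lem:vstruct-set-property}) were already moved out of $V'$ via some $\vstructset^{m}$ with $m<\ell$. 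Hence $v,w$ would not both remain in $V'$.
\end{itemize}
This is the part I expect to be the main obstacle. It requires a careful induction on $\ell$ that matches the order in which edge removals and $\vstructset^{m},\meekoneset^{m}$ computations interleave within the inner loop.

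\textbf{Step 2 (hypotheses of the corollary).} I take them in turn, by induction over the iterations.
\begin{itemize}
\item \emph{Condition (3).} This follows from Theorem~\ref{thm:prefix-set}, since the removed sets are closed under descendants by Lemmas~\ref{lem:vstruct-set-property} and~\ref{lem:meek-set-property}.
\item \emph{Condition (4).} After the Meek steps for $m=1,\dots,\ell-1$ have been executed, $V'$ is disjoint from $\meekoneset^1\cup\dots\cup\meekoneset^{\ell-1}$ by construction.
\item \emph{Condition (1).} Suppose a v-structure $a\to c\leftarrow b$ had $c,b\in V'$. Choose $c$ minimal in ancestral order. Lemma~\ref{lem:v-structure-test}, together with Lemmas~\ref{lem:min-sep-set-no-vstruct} and~\ref{lem:clique}, gives a separating clique $W\subset\Anc(c)\cap\bar S$ with $c\in \vstructset^{|W|}$. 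Since $W\subseteq V'$ by prefix-closedness, $|W|$ is bounded by the clique sizes already processed, i.e.\ $|W|<\ell$, so $c$ would already have been removed, a contradiction. This circularity (bounding $|W|$ by cliques that must have existed at earlier orders) is handled by the same induction as in Step 1.
\item \emph{Condition (2).} This is the hypothesis of the lemma, transferred to $\mathcal{G}$ via Step 1.
\end{itemize}

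\textbf{Step 3.} Apply Corollary~\ref{cor:undirected-clique} with $m=\ell$ to conclude that $\mathcal{E}(\mathcal{G})[V']$ contains an undirected clique of size $\ge\ell$. The case $\ell=0$ is trivial, since the empty set is an undirected clique.
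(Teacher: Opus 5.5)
\textbf{There is a genuine gap.} Your plan applies Corollary~\ref{cor:undirected-clique} to the whole current $V'$. That requires two invariants at the moment the loop condition for order $\ell$ is checked. The first is that every edge of $\mathcal{E}$ inside $V'$ is an edge of $\mathcal{G}$ (your Step~1). The second is that $V'$ contains no v-structure $a\to c\leftarrow b$ with $c,b\in V'$ (condition (1) in your Step~2). Both are false. At that moment only orders $0,\dots,\ell-1$ have been processed. So a non-edge of $\mathcal{G}$ whose minimum separator has size $\ge\ell$ is still in $\mathcal{E}$, and a collider whose separating sets all have size $\ge\ell$ is still in $V'$.

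Concretely, take $x\to y$, $x\to a$, $y\to a$, $x\to b$, $y\to b$, $a\to c\leftarrow b$, and $S=\varnothing$. The only minimal separations are $a\CI b\mid\{x,y\}$, $x\CI c\mid\{a,b\}$ and $y\CI c\mid\{a,b\}$, all of size $2$. So when the condition for $\ell=2$ is checked, nothing has been removed. $\mathcal{E}$ still contains $a-b$, $x-c$, $y-c$, and $V'=V$ still contains $a\to c\leftarrow b$. Your step ``$|W|$ is bounded by the clique sizes already processed, i.e.\ $|W|<\ell$'' has no basis; Lemma~\ref{lem:clique} gives the reverse inequality. No induction on $\ell$ can establish these invariants.

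The missing idea is that surviving structure is itself the certificate of a large clique. This calls for a case split, not an invariant.

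\emph{Case 1: $V'$ contains a v-structure with $c,b\in V'$.} Choose $c$ so that no such collider lies in $\Anc(c)\cap\bar S$.
\begin{itemize}
\item By Lemma~\ref{lem:v-structure-test}, $c$ lies in some $\vstructset^k$. Since $c$ survived orders $<\ell$, the smallest such $k$ is $\ge\ell$.
\item Lemma~\ref{lem:clique} then gives a clique of size $\ge\ell$ in $V''=\Anc(c)\cap\bar S\subseteq V'$. In the example this is $\{x,y\}$.
\item Now apply the corollary to $V''$, not to $V'$. It is v-structure-free by the choice of $c$, and $S\cup V''$ is prefix. It avoids $\meekoneset^1\cup\dots\cup\meekoneset^{\ell-1}$ by Lemma~\ref{lem:meek-set-property}, because $c$ was not removed.
\end{itemize}
This case does not even use the clique in $\mathcal{E}$.

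\emph{Case 2: $V'$ has no such v-structure.} Argue by contradiction instead of claiming spurious edges are already gone.
\begin{itemize}
\item Suppose $\mathcal{G}[V']$ had no clique of size $\ge\ell$. Then some edge $u-v$ of the $\mathcal{E}$-clique would be absent from $\mathcal{G}$.
\item Its minimum-size separator $W$ given $S$ lies in $V'$, by Lemma~\ref{lem:min-sep-set} and prefix-closedness. So $|W|\ge\ell$, since it was not found at earlier orders.
\item By Lemma~\ref{lem:min-sep-set-no-vstruct}, $W$ is a clique of $\mathcal{G}$, which is a contradiction.
\end{itemize}
Only at this point do the corollary's hypotheses hold for $V'$ itself. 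This is the paper's argument.
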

\begin{proof}
    We will prove that $ \mathcal{E}(\mathcal{G}) $ restricted to $
    V' $ contains an undirected clique of size larger or equal to $ \ell $.
    By Theorem~\ref{thm:prefix-set}, $ S \cup V' $ is a prefix set.

    If $ V' $ contains a v-structure, let $ a \rightarrow c
    \leftarrow b $ a v-structure such that $ c,b \in V' $, and such that
    there is no v-structure $ a' \rightarrow c' \leftarrow b' $ with
    $ c', b' \in \Anc(c) \cap \bar{S} $.
    Note that $ |V'| $ is bounded and therefore this v-structure must exist.
    By Lemma~\ref{lem:v-structure-test} we have $ c \in \vstructset^k
    $ for some $ k \geq \ell $.
    By Lemma~\ref{lem:clique} there exists a clique $ W $ of size larger or
    equal to $ m $ in $ \Anc(c) \cap \bar{S} $.
    We also have that $ S \cup (\Anc(c) \cap \bar{S}) $ is prefix and
    since $ c \notin \vstructset^0 \cup \dots \cup
    \vstructset^{\ell-1} \cup \meekoneset^1 \cup \dots \cup
    \meekoneset^{\ell-1} $, by Lemma~\ref{lem:meek-set-property} we
    have $ (\Anc(c) \cap \bar{S}) \cap (\meekoneset^1 \cup \dots \cup
    \meekoneset^{\ell-1}) = \varnothing $.
    By Corollary~\ref{cor:undirected-clique} there exists an undirected
    clique in $ \Anc(c) \cap \bar{S} $.

    If $ V' $ does not contain a v-structure we will prove by contradiction.
    Assume that there does not exist a clique of size larger or equal to $
    \ell $ in $ \mathcal{G}[V'] $.
    Then, we must have $ u \sim v $ in $ \mathcal{E} $ but not in $
    \mathcal{G} $ for some $ u,v \in V' $.
    This means that there exists a subset $ W \subset V' $, $ |W|
    \geq \ell $ such that $ u \CI v \mid S \cup W $ and is minimal.
    We have that $ (\Anc[u] \cup \Anc[v]) \setminus S $ does not
    contain v-structures and by
    Lemma~\ref{lem:min-sep-set-no-vstruct}, it must be a clique in $
    \mathcal{G} $.
    We have $ |W| \geq \ell $ and by
    Corollary~\ref{cor:undirected-clique} we will have an undirected
    clique in the essential graph restricted to $ V' $.
\end{proof}

\begin{lemma}\label{lem:vstruct-sets}
    Let $ S_1, \dots, S_m $ be the partition on $ V $ created
    by Algorithm~\ref{alg:learning} and let $ u \rightarrow c
    \leftarrow v $ be a v-structure in $ \mathcal{G} $.
    Then, $ u \in S_i $, $ v \in S_j $ and $ c \in S_k $ with $ k > i,j $.
\end{lemma}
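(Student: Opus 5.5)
The plan is to track the iteration of the outer loop of Algorithm~\ref{alg:learning} at which each of $u$, $v$ and $c$ is first placed into the prefix set. Suppose $S$ is the prefix node set at the start of the outer iteration that produces the component $S_k$ containing $c$. We want $u,v\in S$ already at that time, i.e.\ $u\in S_i$ and $v\in S_j$ with $i,j<k$. Equivalently, it suffices to show that during any outer iteration in which $c\in\bar{S}$ and at least one of $u,v$ is in $\bar S$ as well, the node $c$ is removed from $V'$ and therefore is not added to $S$ in that iteration.

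\textbf{Basic observations.} First, $u,v\in\Anc(c)$ and every partial union $S_1\cup\dots\cup S_k$ is prefix by Theorem~\ref{thm:prefix-set} (the inner loop computes exactly the sets from that theorem, for increasing $\ell$). Therefore $i,j\le k$ automatically. It remains to exclude $i=k$ or $j=k$.

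\textbf{Main case.} Assume without loss of generality that $v\in\bar S$ at the iteration where $c$ would be added, with $u\in V$ arbitrary. Then Lemma~\ref{lem:v-structure-test} applies to $u\to c\leftarrow v$ with $k=c$, $v\in\bar S$ (the symmetric roles handle $u\in\bar S$). It gives a $W\subset\bar S$ with $u\CI v\mid S\cup W$ and $u\not\CI v\mid S\cup W\cup\{c\}$, so $c\in\vstructset^{|W|}$.

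\textbf{Main obstacle.} The difficulty is that the algorithm only computes $\vstructset^\ell$ for $\ell$ up to the stopping index. That index is the first $\ell$ for which the working graph $\mathcal{E}[V']$ has no clique of size $\ge\ell$, and the $\ell$ we need may exceed it. Also, the definition in the algorithm restricts $W\subset V'$ rather than $\bar S$, since $V'$ shrinks as nodes are removed.

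I would handle this by choosing $W$ minimal. I would first pick a v-structure $a\to c'\leftarrow b$ with $c'\in\Anc[c]\cap\bar S$, $b\in\bar S$ and no such collider in $\Anc(c')$, exactly as in the proof of Lemma~\ref{lem:clique}. By Lemma~\ref{lem:min-sep-set-no-vstruct} its minimal separating set $W$ is a clique in $\mathcal{G}$, and by Lemma~\ref{lem:min-sep-set} it lies in $\Anc(c')\cap\bar S$. As long as $c'$ has not yet been removed, its ancestors in $\bar S$ have also not been removed, by Lemmas~\ref{lem:vstruct-set-property} and~\ref{lem:meek-set-property}, because removal is closed under descendants. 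Hence they remain in $V'$. Edges of $\mathcal{G}$ are never deleted in the edge removal step, by faithfulness. So the clique $W$ persists in $\mathcal{E}[V']$, and the inner loop must still run at $\ell=|W|$. At that point $c'\in\vstructset^{|W|}$ and is removed from $V'$. By Lemma~\ref{lem:vstruct-set-property}, $\Des[c']\ni c$ is removed as well.

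\textbf{Conclusion.} In every outer iteration where $v$ (or $u$) is still outside $S$, the node $c$ therefore leaves $V'$ and is not put into the current component. Hence $c$ enters a strictly later component than both $u$ and $v$, which gives $k>i,j$.
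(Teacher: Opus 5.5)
Your proposal is correct and takes essentially the paper's route. Prefix-closure of the partial unions $S_1\cup\dots\cup S_k$ gives $i,j\le k$. Equality is then ruled out by Lemma~\ref{lem:v-structure-test} combined with the argument behind Lemma~\ref{lem:clique}: pick an ancestrally first collider, note its minimum separating set is a clique by Lemma~\ref{lem:min-sep-set-no-vstruct}, so the inner loop must reach that $\ell$ and remove $c$.

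The only differences are that you inline Lemma~\ref{lem:clique} for the first collider $c'$ rather than invoking it for $c$, and that you justify a step the paper compresses into ``$\vstructset^k$ must have been computed''. Namely, you show the clique survives in $\mathcal{E}[V']$: ancestors of $c'$ stay in $V'$ because removal is closed under descendants, and true edges are never deleted under faithfulness.
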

\begin{proof}[]
    Without loss of generality, assume $ j \geq i $.
    Since $ S_1 \cup \dots \cup S_j $ is prefix by
    Theorem~\ref{thm:prefix-set}, we must have $ k \geq j $.
    If not $ k > j $ then $ k = j $.
    Let $ S = S_1 \cup \dots \cup S_{j-1} $.
    By Lemma~\ref{lem:v-structure-test} we have $ u \CI v \mid S \cup W $ and $
    u \not\CI v \mid S \cup W \cup \{c\} $ for some $ W \subset \bar{S} $.
    Let $ W' \subset \bar{S} $ be the set with the least amount of
    nodes that satisfies this and let $ k = |W'| $.
    We have that $ c \in \vstructset^k $ and $ c \notin \vstructset^1
    \cup \dots \cup
    \vstructset^{k-1} $.
    By Lemma~\ref{lem:clique} there exists a clique of size larger or
    equal to $ k $ in $ \Anc(c) \cap \bar{S} \subseteq S_j $,
    meaning, $ \vstructset^k $ must have been computed, a
    contradiction to $ c \in S_j $.
\end{proof}

\begin{lemma}\label{lem:algorithm-skeleton}
    The graph created by Algorithm~\ref{alg:learning} has the same
    skeleton as $ \mathcal{G} $.
\end{lemma}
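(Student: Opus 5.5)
We prove two inclusions: every edge of $\mathcal{G}$ survives in $\mathcal{E}$, and every non-edge of $\mathcal{G}$ is removed at some point of Algorithm~\ref{alg:learning}.

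\textbf{No true edge is removed.} An edge $v - w$ is removed only on Line~\ref{algoline:cis}, when ${w \CI v \mid S \cup W}$ for some $W \subset V'$. If $v \sim w$ in $\mathcal{G}$, no set d-separates them. Since $\mathbb{P}$ respects $\mathcal{G}$, this independence never holds, so adjacent pairs are never deleted.

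\textbf{Every non-edge is removed.} Let $v \not\sim w$ in $\mathcal{G}$. Consider the stage of the outer loop in which the later of the two nodes, say $w \in S_j$, is still in $V'$. Take $S = S_1 \cup \dots \cup S_{j-1}$, which is prefix, and note that $v \in S \cup S_j$. Since $S \cup S_j$ is prefix (Theorem~\ref{thm:prefix-set}), we may assume $w \notin \Anc(v)$ if $v \in S_j$ (swap the roles otherwise).

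\begin{enumerate}
\item Set $K = (\Pa(v) \cup \Pa(w)) \setminus S$. The local Markov argument gives $v \CI w \mid S \cup K$, since $\Pa(v)\cup\Pa(w)\subseteq S \cup K$ and $w$ is a non-descendant of $v$. The same blocking argument as in Lemma~\ref{lem:v-structure-test} also works; a more careful choice via Lemma~\ref{lem:prefix-set} may be needed. Consequently a separating set $W \subset \bar{S}$ exists.
\item Choose such a $W$ of minimum cardinality, and set $m = |W|$.
\item By Lemma~\ref{lem:min-sep-set}, $W \subseteq \Anc(v)\cup\Anc(w)$. Since $S \cup S_j$ is prefix, $W \subseteq S_j$.
\item We must show that when $\ell = m$, all of $W$ and $w$ still lie in $V'$, and that the while-loop on cliques has not yet terminated, i.e.\ $\ell$ actually reaches $m$ within stage $j$.
\end{enumerate}

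For the last step, we argue that $(\Anc[v]\cup\Anc[w])\setminus S$ contains no v-structures. Here Lemma~\ref{lem:vstruct-sets} helps: a v-structure with collider in $S_j$ would need both parents in earlier components, which then lie in $S$. Granting this, Lemma~\ref{lem:min-sep-set-no-vstruct} shows $W$ is a clique in $\mathcal{G}$, and hence in $\mathcal{E}$, which we know has kept all true edges. The nodes of $W$ and $w$ belong to $S_j$, so at iteration $\ell' \le m$ they are not yet removed from $V'$. Therefore $\mathcal{E}[V']$ contains a clique of size $\ge m$, the loop reaches $\ell = m$, and the test $w \CI v \mid S\cup W$ deletes the edge. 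The case $v \in S$ (edges between different components) is covered by the same argument, since the loop ranges over all $v \in V$.

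The main obstacle is the bookkeeping in the last step. $V'$ shrinks within a stage, so we must rule out that a node of $W \cup \{w\}$ is discarded into $\vstructset^{\ell'}$ or $\meekoneset^{\ell'}$ before $\ell = m$. Nodes discarded in this way never end up in $S_j$, since they are deferred to later components. Because all of them lie in $S_j$, they were never removed, which gives exactly the required invariant.
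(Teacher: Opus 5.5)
Your proposal is correct and follows the same route as the paper. True edges survive by faithfulness. For a non-edge whose later endpoint lies in $S_j$, you take a minimum separating set $W$ relative to $S=S_1\cup\dots\cup S_{j-1}$ and use Lemma~\ref{lem:vstruct-sets} to exclude v-structures. Lemma~\ref{lem:min-sep-set-no-vstruct} then makes $W$ a clique, which forces the loop to reach $\ell=|W|$ and delete the edge. You also fill in steps the paper leaves implicit: existence of $W$, $W\subseteq S_j$ via Lemma~\ref{lem:min-sep-set}, and the invariant that nodes of $S_j$ never leave $V'$ during stage $j$. One slip: the local Markov justification in step~1 should be applied at $w$ (using $v\notin\Des(w)$), since $w$ can be a descendant of $v$ when $v\in S$. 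Your fallback to the blocking argument of Lemma~\ref{lem:v-structure-test} already covers this case.
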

\begin{proof}[]
    We will prove by contradiction.
    Let $ \mathcal{E} $ denote the graph created by
    Algorithm~\ref{alg:learning} and assume that $ \mathcal{E} $ and
    $ \mathcal{G} $ do not have the same skeleton.

    Let $ u,v \in V $.
    If $ u \sim v $ in $ \mathcal{G} $ we have $ u \not\CI v \mid W $
    for all $ W \subset V $.
    Therefore, if $ \text{sk}(\mathcal{G}) \neq
    \text{sk}(\mathcal{E}) $, there must exist $ u \in S_i $ and $ v
    \in S_j $ such that $ u \sim v $ in $ \mathcal{E} $ but not in $
    \mathcal{G} $.

    Denote $ S = S_1 \cup \dots \cup S_{j-1}. $
    There must exist a set $ W \subset S_j $ such that $ u \CI v
    \mid S \cup W $ and is minimal.
    By Lemma~\ref{lem:vstruct-sets}, $ (\Anc[u] \cup \Anc[v]) \setminus
    S $ does not contain any v-structures and by
    Lemma~\ref{lem:min-sep-set-no-vstruct} we have that $ W $ forms
    a clique.
    This contradicts Algorithm~\ref{alg:learning} not removing this edge.
\end{proof}

\begin{lemma}\label{lem:directed-edges}
    Let $ S_1, \dots, S_m $ be the partition on $ V $ created
    by Algorithm~\ref{alg:learning}.
    For all $ u, v \in V $ if $ u \rightarrow v $ in $
    \mathcal{E}(\mathcal{G}) $ then $ u \in S_i $ and $ v \in S_j $
    such that $ i < j $.
\end{lemma}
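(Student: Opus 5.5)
We argue by strong induction along a topological order of $\mathcal{G}$, following how directed edges in $\mathcal{E}(\mathcal{G})$ arise. By Proposition~\ref{prop:meek} (and \citet{meek1995}), every directed edge of $\mathcal{E}(\mathcal{G})$ either belongs to a v-structure or is obtained by repeatedly applying Meek rules R1--R4 starting from the v-structures. So it is enough to show two things. First, every v-structure edge goes forward in the partition. Second, each Meek rule, applied to edges already known to go forward (and to undirected edges lying inside a single $S_i$ or going forward), yields a forward edge. In every case we use that $S_1\cup\dots\cup S_i$ is prefix by Theorem~\ref{thm:prefix-set}. Hence $u\to v$ in $\mathcal{G}$ already forces $i\le j$, and the task is to exclude $i=j$.

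\textbf{Base case.} If $u\to v$ lies in a v-structure $u\to v\leftarrow w$, then Lemma~\ref{lem:vstruct-sets} gives $i<j$ directly.

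\textbf{Meek rules.} We check R2--R4 first, since they are easy.
\begin{itemize}
\item For R2, $i\to j\to k$ with $i\in S_a$, $j\in S_b$, $k\in S_c$ and $a<b<c$, so $a<c$.
\item For R3, $j\to k$ and $l\to k$ form a v-structure, because $j\not\sim l$. Suppose $i$ and $k$ were in the same component. Then $j$ and $l$ are in strictly earlier components, and $i\sim j$ forces $i$ to be in a component at least as late as $j$'s (prefix property with the edge $j\to i$), or $i\to j$ forces $i$ earlier. In either case the ancestral closure of $i$ would be violated. More cleanly: $i\to k$ holds in $\mathcal{G}$, so $i$ is in the same or an earlier component than $k$. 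If it were the same, then applying Lemma~\ref{lem:vstruct-sets}-type reasoning with $S=S_1\cup\dots\cup S_{c-1}$ makes $k$ a collider of a v-structure $j\to k\leftarrow l$ with $j,l\in S$, and the earlier bullet already handled that case.
\item R4 reduces to R1, exactly as in the proof of Lemma~\ref{lem:undirected-clique}.
\end{itemize}

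\textbf{Main case: R1.} The real obstacle is R1: $x\to y$ is already forward, $y-z$, and $x\not\sim z$, and we must conclude $z$ lies in a strictly later component than $y$. Suppose instead that $y,z\in S_j$ with $x\in S_i$, $i<j$, and set $S=S_1\cup\dots\cup S_{j-1}$. Then $x\in S$, and $x\not\CI z\mid S$ via the path $x\to y\to z$, whose interior avoids $S$. On the other hand, $x\CI z\mid S\cup W$ for $W=\Pa(z)\setminus S$, and we may choose a minimal such $W$. By Lemma~\ref{lem:meek-set-anc} and Lemma~\ref{lem:min-sep-set-no-vstruct}, this $W$ is a clique contained in $\Anc(z)\cap S_j$; the latter lemma applies because no v-structures lie in $S_j$, by Lemma~\ref{lem:vstruct-sets}. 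So $z\in\meekoneset^{|W|}$, and the working graph $\mathcal{E}[V']$ still contains a clique of size $|W|$ when $\ell=|W|$ is reached. This uses Lemma~\ref{lem:algorithm-skeleton} together with the fact that removed edges are truly absent. Hence the inner loop would compute $\meekoneset^{|W|}$ and remove $z$ from $V'$, which contradicts $z\in S_j$. When $|W|=0$, the same conclusion follows from $x\CI z\mid S$, which contradicts the dependence shown above, so $|W|\ge1$ and the $\ell>0$ branch applies.

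The delicate point is making sure the loop really reaches $\ell=|W|$ while $z$ is still in $V'$. We would establish this through the monotonicity of the cliques that survive in $\mathcal{E}[V']$, using Lemma~\ref{lem:algo-clique}.
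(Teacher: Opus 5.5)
Your R1 argument is sound, but the induction over the Meek closure breaks at R3 and R4 as written.

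\emph{R3.} You must show that $i$ lies in a strictly earlier component than $k$. The fact that $j\to k\leftarrow l$ is a v-structure with $j,l$ earlier says nothing about $i$. In the sub-case $j\to i$ in $\mathcal{G}$, the prefix property only places $i$ at or after $j$, so your claim that ``the ancestral closure of $i$ would be violated'' is false there. The missing ingredient is that R3 fires only while $i-j$ and $i-l$ are still undirected. If $\mathcal{G}$ had $j\to i\leftarrow l$, this would be a v-structure (since $j\not\sim l$) and hence oriented from the start. So $\mathcal{G}$ contains $i\to j$ or $i\to l$, which puts $i$ no later than $j$ (or $l$), and thus strictly before $k$.

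\emph{R4.} The reduction in the proof of Lemma~\ref{lem:undirected-clique} works only because $x\to y$ there is chosen with no earlier oriented edge inside $\Anc(y)\cap\bar S$. That choice is what forces $z\in S$ and hence $z\to x$. An arbitrary step of your induction has no such property. A repair, in Meek's formulation ($a-c$, $c\to d\to b$, $a\sim d$, $c\not\sim b$ orients $a\to b$):
\begin{itemize}
\item If $a\to d$ in $\mathcal{G}$, then $a$ lies no later than $d$, hence strictly before $b$.
\item If $d\to a$, then $c\to a$ by acyclicity, and $c$ lies strictly before $a$. So $c\to a\to b$ with $c\not\sim b$ is exactly your R1 configuration. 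Your R1 argument uses only orientations in $\mathcal{G}$ and the partition, not the fact that $c\to a$ was produced by the rules, so it applies.
\end{itemize}

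With these repairs your route is valid, and it differs from the paper's. The paper avoids the four-rule case analysis entirely. It applies Lemma~\ref{lem:undirected-clique} to $V'=S_k$, which traces back to the earliest Meek-oriented edge in the component, where only R1 from the prefix is possible. It then contradicts the maximality of the size $s'$ of the largest undirected clique of $\mathcal{E}(\mathcal{G})[S_k]$.

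Your R1 step is in one respect cleaner. It only needs $W$ to be a clique of $\mathcal{G}$, not an undirected clique of $\mathcal{E}(\mathcal{G})$, because the loop condition refers to the working graph. The point you flag as delicate is also immediate and needs neither Lemma~\ref{lem:algo-clique} nor the skeleton lemma. Throughout the outer iteration $W\subseteq S_j\subseteq V'$, and true edges are never deleted, so the loop condition holds for every $\ell\le|W|$.
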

\begin{proof}[]
    We prove this by contradiction.
    Assume there exists a directed edge $ u \rightarrow v $ in $
    \mathcal{E}(\mathcal{G}) $ such that both $ u $ and $ v $ belong
    to the same component $ S_k $.

    By Lemma \ref{lem:vstruct-sets}, for any v-structure $ a \rightarrow c
    \leftarrow b $ in $ \mathcal{G} $, we will have $ a \in S_i, b
    \in S_j, c \in S_k $ with $ k > i,j $.
    This implies that if we have $ u \rightarrow v $ in $
    \mathcal{E}(\mathcal{G}) $ and $ u,v $ belong to the same
    component, the edge must have been oriented by a Meek rule.

    Let $ S = S_1 \cup \dots \cup S_{k-1} $ and let $ s' $ denote the
    largest clique in $ \mathcal{E}(\mathcal{G})[S_k] $.
    By Theorem~\ref{thm:prefix-set}, $ S $ is a prefix set.
    By Lemma \ref{lem:vstruct-sets}, there does not exist a
    v-structure $ a \rightarrow c \leftarrow b $ with $ c,b \in S_k $.
    By definition, we have $ S_k \cap (\vstructset^0 \cup \dots \cup
        \vstructset^{s'} \cup \meekoneset^1 \cup \dots \cup
    \meekoneset^{s'}) = \varnothing $ and by Lemma
    \ref{lem:undirected-clique}, there must exist a
    clique in $ \mathcal{E}(\mathcal{G})[S_k] $ of size strictly
    larger than $ s' $, a contradiction.
\end{proof}

\thmAlgorithm*
\begin{proof}[]
    We will show that Algorithm \ref{alg:learning} outputs the
    essential graph of $ \mathcal{G} $ and that it performs, at most, $
    p^{\mathcal{O}(s)} $ CI tests.

    Let $ \mathcal{E} $ denote the output of Algorithm \ref{alg:learning}.
    By Lemma \ref{lem:algorithm-skeleton}, $ v $ and $ w $ are
    adjacent in $ \mathcal{E} $ if and only if $ v $ and $ w $ are
    adjacent in $ \mathcal{G} $.
    We will now prove that if $ v \rightarrow w $ in $ \mathcal{E} $
    then also $ v \rightarrow w $ in $ \mathcal{G} $.
    If $ v \rightarrow w $ in $ \mathcal{E} $ we must have $ v \in
    S_i $ and $ w \in S_j $ for some $ i < j $.
    By Theorem \ref{thm:prefix-set}, $ S = S_1 \cup \dots \cup S_i $
    is prefix, therefore, we have $ v \rightarrow w $ in $ \mathcal{G} $.
    Finally, using Lemma \ref{lem:directed-edges}, we have $ v
    \rightarrow w $ in $ \mathcal{E} $ if and only if we have $ v
    \rightarrow w $ in $ \mathcal{E}(\mathcal{G}) $.
    Therefore, $ \mathcal{E} = \mathcal{E}(\mathcal{G}) $.

    By Lemma \ref{lem:algo-clique} the algorithm will calculate sets
    $ \vstructset^m $ and $ \meekoneset^m $ of up to $ m \leq s $ where $ s
    $ is the largest undirected clique in the essential graph of $
    \mathcal{G} $.
    By Theorem \ref{thm:prefix-set} an iteration requires at most $
    \mathcal{O}(p^{s+3}) $ CI tests.
    Thus, the algorithm uses $ \mathcal{O}(p^{s+4}) $ CI tests,
    proving Theorem \ref{thm:algorithm}.
\end{proof}
 \section{Step-by-step example of
Algorithm~\ref{alg:learning}}\label{app:algo-example}

\def\Xsep{1.5cm}
\def\Ysep{1cm}

We now provide a step-by-step example of Algorithm~\ref{alg:learning}.

\begin{example}
    Suppose we have a probability distribution $ \mathbb{P} $ that is
    Markov and faithful to the graph below.
    In this case, the graph and the essential graph are equal.
    \begin{figure}[h!]
        \centering
        \begin{tikzpicture}[node distance = \Ysep and \Xsep]
            \node[base] (0) {0};
            \node[base, above = of 0] (1) {1};
            \node[base, right = of 0] (2) {2};
            \node[base, right = of 2] (3) {3};
            \node[base, above = of 3] (4) {4};

            \path[dedge]
            (0) edge (2)
            (1) edge (2)
            (1) edge (4)
            (2) edge (3)
            (3) edge (4);
        \end{tikzpicture}
        \label{fig:example-algo-step}
    \end{figure}

    \vspace{0.5em}

    \setcounter{step}{-1}
    \begin{step}[Initialization, Lines
        \ref{algoline:init-start}--\ref{algoline:init-end}]
        We initialize an empty prefix node set $ S= \varnothing $,
        an empty list $ \mathcal{S} $, and a complete undirected
        graph $ \mathcal{E} $.
        \begin{figure}[h!]
            \centering
            \begin{tikzpicture}[node distance = \Ysep and \Xsep]
                \node[base] (0) {0};
                \node[base, above = of 0] (1) {1};
                \node[base, right = of 0] (2) {2};
                \node[base, right = of 2] (3) {3};
                \node[base, above = of 3] (4) {4};

                \path[uedge]
                (0) edge (1)
                (0) edge (2)
                (0) edge [bend right] (3)
                (0) edge (4)
                (1) edge (2)
                (1) edge (3)
                (1) edge (4)
                (2) edge (3)
                (2) edge (4)
                (3) edge (4);
            \end{tikzpicture}
        \end{figure}
    \end{step}

    \begin{step}[Prefix node set expansion, Lines
        \ref{algoline:prefix-set-start}--\ref{algoline:prefix-set-end}]
        We now learn the partial ordering of the essential graph
        through $ \vstructset^m $ and $ \meekoneset^m $, while
        establishing adjacencies.

        The expansion step  is repeated as long as the prefix node
        set does not contain all nodes ($S \neq V$).
        For each expansion, we work on a set of working nodes, denoted $ V' $.
        This set excludes the prefix set and any identified downstream nodes.
        In each expansion, we indicate the prefix node set $S$. The
        set of working nodes, $V'$, is specified only when it differs
        from $\bar{S}$ (the set of nodes not in $S$).

        For each $ \ell $, we remove edges between nodes if
        there exists a set $ W \subseteq V' $ with $ |W| = \ell $
        that renders them independent.
        Nodes identified as being in $\vstructset^\ell$ are colored
        \colorbox{\dcolor}{blue}, while nodes in $\meekoneset^\ell$
        are colored \colorbox{\fcolor}{green}.

        \begin{itemize}
            \item
                First prefix node set expansion ($ S = \varnothing,
                \mathcal{S} = \left[\right] $).

                Nodes $0$ and $1$ are found to be conditionally
                independent given the empty set, so the edge between
                them is removed.
                Furthermore, we find that conditioning on node $2$
                makes $0$ and $1$ dependent (unblocking the
                v-structure $0 \to 2 \leftarrow 1$).
                Similarly, conditioning on $3$ or $4$ (descendants of
                the collider) also makes $0$ and $1$ dependent.
                Therefore, we identify $\vstructset^0 = \{2,3,4\}$.
                The set of working nodes is then updated for the next
                iteration: $ V' = V' \setminus \vstructset^0 =
                \{0,1,2,3,4\} \setminus \{2,3,4\} = \{0,1\} $.
                For $ \ell = 1 $ no additional adjacencies are removed.
                The expansion stops here since no clique of size $ 2
                $ exists in the current working set $V' = \{0,1\}$.

                \begin{figure}[H]
                    \centering
                    \begin{subfigure}{0.48\textwidth}
                        \centering
                        \begin{tikzpicture}[node distance = \Ysep and \Xsep]
                            \node[base] (0) {0};
                            \node[base, above = of 0] (1) {1};
                            \node[d-node, right = of 0] (2) {2};
                            \node[d-node, right = of 2] (3) {3};
                            \node[d-node, above = of 3] (4) {4};

                            \path[uedge]
                            (0) edge (2)
                            (0) edge [bend right] (3)
                            (0) edge (4)
                            (1) edge (2)
                            (1) edge (3)
                            (1) edge (4)
                            (2) edge (3)
                            (2) edge (4)
                            (3) edge (4);
                        \end{tikzpicture}
                        \caption{$ \ell=0 $.}
                    \end{subfigure}
                    \hfill
                    \begin{subfigure}{0.48\textwidth}
                        \centering
                        \begin{tikzpicture}[node distance = \Ysep and \Xsep]
                            \node[base] (0) {0};
                            \node[base, above = of 0] (1) {1};
                            \node[base, right = of 0] (2) {2};
                            \node[base, right = of 2] (3) {3};
                            \node[base, above = of 3] (4) {4};

                            \path[] (0) -- (1) node[midway]  (V') {\textbf{V'}};
                            \node[fitbox, fit=(0) (1) (V')] (box) {};

                            \path[uedge]
                            (0) edge (2)
                            (0) edge [bend right] (3)
                            (0) edge (4)
                            (1) edge (2)
                            (1) edge (3)
                            (1) edge (4)
                            (2) edge (3)
                            (2) edge (4)
                            (3) edge (4);
                        \end{tikzpicture}
                        \caption{$ \ell=1 $.}
                    \end{subfigure}
                \end{figure}
            \item
                Second prefix node set expansion ($ S = \lbrace 0,
                1 \rbrace, \mathcal{S} = \left[\lbrace 0, 1 \rbrace\right] $).

                For $ \ell = 0 $, no new independencies are found.
                For $\ell = 1$, we remove several edges based on conditional
                independencies found.
                For instance, the edge $(0,3)$ is removed because $0
                \CI 3 \mid \left(S \setminus \{0\}\right) \cup \{2\}$.
                All other non-adjacencies in the true graph are also
                discovered and their corresponding edges removed
                during this phase.
                Additionally, we find $0 \not\CI 3 \mid S \setminus
                \{0\}$ and $0 \CI 3 \mid \left(S \setminus
                \{0\}\right) \cup \{2\}$, which identifies $3 \in
                \meekoneset^1$.
                Similarly, $1 \CI 3 \mid \left(S \setminus
                \{1\}\right) \cup \{2\}$ and $1 \not\CI 3
                \mid \left(S \setminus \{1\}\right) \cup \{2,4\}$
                identifies $4 \in \vstructset^1$.
                The working set is updated to $ V' = V' \setminus
                (\meekoneset^1 \cup \vstructset^1) = \{2,3,4\}
                \setminus (\{3\} \cup \{4\}) = \{2\} $.
                Finally, since $ V' $ contains no clique of size $ 2
                $, this expansion stops here.

                \begin{figure}[H]
                    \centering
                    \begin{subfigure}{0.48\textwidth}
                        \centering
                        \begin{tikzpicture}[node distance = \Ysep and \Xsep]
                            \node[base] (0) {0};
                            \node[base, above = of 0] (1) {1};
                            \node[base, right = of 0] (2) {2};
                            \node[base, right = of 2] (3) {3};
                            \node[base, above = of 3] (4) {4};

                            \path[] (0) -- (1) node[midway]  (S) {\textbf{S}};
                            \node[fitbox, fit=(0) (1) (S)] (box) {};

                            \path[uedge]
                            (0) edge (2)
                            (0) edge [bend right] (3)
                            (0) edge (4)
                            (1) edge (2)
                            (1) edge (3)
                            (1) edge (4)
                            (2) edge (3)
                            (2) edge (4)
                            (3) edge (4);
                        \end{tikzpicture}
                        \caption{$ \ell=0 $.}
                    \end{subfigure}
                    \hfill
                    \begin{subfigure}{0.48\textwidth}
                        \centering
                        \begin{tikzpicture}[node distance = \Ysep and \Xsep]
                            \node[base] (0) {0};
                            \node[base, above = of 0] (1) {1};
                            \node[base, right = of 0] (2) {2};
                            \node[f-node, right = of 2] (3) {3};
                            \node[d-node, above = of 3] (4) {4};

                            \path[] (0) -- (1) node[midway]  (S) {\textbf{S}};
                            \node[fitbox, fit=(0) (1) (S)] (box) {};

                            \path[uedge]
                            (0) edge (2)
                            (1) edge (2)
                            (1) edge (4)
                            (2) edge (3)
                            (3) edge (4);

                            \path[uedge, transparent] (0) edge [bend right] (3);
                        \end{tikzpicture}
                        \caption{$ \ell=1 $.}
                    \end{subfigure}
                \end{figure}
            \item
                Third prefix node set expansion ($ S = \lbrace 0,
                    1, 2 \rbrace, \mathcal{S} = \left[\lbrace 0, 1
                \rbrace, \lbrace 2 \rbrace\right] $).

                We find $2 \not\CI 4 \mid S \setminus \{2\}$ and $2 \CI 4 \mid
                \left(S \setminus \{2\}\right) \cup \{3\}$, so $4 \in
                \meekoneset^1$.

                \begin{figure}[H]
                    \centering
                    \begin{subfigure}{0.48\textwidth}
                        \centering
                        \begin{tikzpicture}[node distance = \Ysep and \Xsep]
                            \node[base] (0) {0};
                            \node[base, above = of 0] (1) {1};
                            \node[base, right = of 0] (2) {2};
                            \node[base, right = of 2] (3) {3};
                            \node[base, above = of 3] (4) {4};

                            \path[] (0) -- (1) node[midway]  (S) {\textbf{S}};
                            \node[fitbox, fit=(0) (1) (2) (S)] (box) {};

                            \path[uedge]
                            (0) edge (2)
                            (1) edge (2)
                            (1) edge (4)
                            (2) edge (3)
                            (3) edge (4);

                            \path[uedge, transparent] (0) edge [bend right] (3);
                        \end{tikzpicture}
                        \caption{$ \ell=0 $.}
                    \end{subfigure}
                    \hfill
                    \begin{subfigure}{0.48\textwidth}
                        \centering
                        \begin{tikzpicture}[node distance = \Ysep and \Xsep]
                            \node[base] (0) {0};
                            \node[base, above = of 0] (1) {1};
                            \node[base, right = of 0] (2) {2};
                            \node[base, right = of 2] (3) {3};
                            \node[f-node, above = of 3] (4) {4};

                            \path[] (0) -- (1) node[midway]  (S) {\textbf{S}};
                            \node[fitbox, fit=(0) (1) (2) (S)] (box) {};

                            \path[uedge]
                            (0) edge (2)
                            (1) edge (2)
                            (1) edge (4)
                            (2) edge (3)
                            (3) edge (4);

                            \path[uedge, transparent] (0) edge [bend right] (3);
                        \end{tikzpicture}
                        \caption{$ \ell=1 $.}
                    \end{subfigure}
                \end{figure}
            \item
                Fourth prefix node set expansion ($ S = \lbrace 0,
                    1, 2, 3 \rbrace, \mathcal{S} = \left[\lbrace 0, 1
                \rbrace, \lbrace 2 \rbrace, \lbrace 3 \rbrace\right] $).

                Only one node remains outside of $S$.
                No adjacencies are removed.

                \begin{figure}[H]
                    \centering
                    \begin{subfigure}{0.48\textwidth}
                        \centering
                        \begin{tikzpicture}[node distance = \Ysep and \Xsep]
                            \node[base] (0) {0};
                            \node[base, above = of 0] (1) {1};
                            \node[base, right = of 0] (2) {2};
                            \node[base, right = of 2] (3) {3};
                            \node[base, above = of 3] (4) {4};

                            \path[] (0) -- (1) node[midway]  (S) {\textbf{S}};
                            \def\mypadding{0.3cm}
                            \draw[fitbox] ($ (1.north west) +
                            (-\mypadding, \mypadding) $)
                            -- ($ (1.north -| 2)!.5!(1.north -| 3) +
                            (0, \mypadding / 2) $)
                            -- ($ (4.south -| 2)!.5!(4.south -| 3) +
                            (0, -\mypadding) $)
                            -- ($ (4.south -| 3.east) + (\mypadding /
                            2, -\mypadding) $)
                            -- ($ (3.south east) + (\mypadding, -\mypadding) $)
                            -- ($ (0.south west) + (-\mypadding, -\mypadding) $)
                            -- cycle;

                            \path[uedge]
                            (0) edge (2)
                            (1) edge (2)
                            (1) edge (4)
                            (2) edge (3)
                            (3) edge (4);

                            \path[uedge, transparent] (0) edge [bend right] (3);
                        \end{tikzpicture}
                        \caption{$ \ell=0 $.}
                    \end{subfigure}
                    \hfill
                    \begin{subfigure}{0.48\textwidth}
                        \centering
                        \begin{tikzpicture}[node distance = \Ysep and \Xsep]
                            \node[base] (0) {0};
                            \node[base, above = of 0] (1) {1};
                            \node[base, right = of 0] (2) {2};
                            \node[base, right = of 2] (3) {3};
                            \node[base, above = of 3] (4) {4};

                            \path[] (0) -- (1) node[midway]  (S) {\textbf{S}};
                            \def\mypadding{0.3cm}
                            \draw[fitbox] ($ (1.north west) +
                            (-\mypadding, \mypadding) $)
                            -- ($ (1.north -| 2)!.5!(1.north -| 3) +
                            (0, \mypadding / 2) $)
                            -- ($ (4.south -| 2)!.5!(4.south -| 3) +
                            (0, -\mypadding) $)
                            -- ($ (4.south -| 3.east) + (\mypadding /
                            2, -\mypadding) $)
                            -- ($ (3.south east) + (\mypadding, -\mypadding) $)
                            -- ($ (0.south west) + (-\mypadding, -\mypadding) $)
                            -- cycle;

                            \path[uedge]
                            (0) edge (2)
                            (1) edge (2)
                            (1) edge (4)
                            (2) edge (3)
                            (3) edge (4);

                            \path[uedge, transparent] (0) edge [bend right] (3);
                        \end{tikzpicture}
                        \caption{$ \ell=1 $.}
                    \end{subfigure}
                \end{figure}

            \item
                End of prefix node set expansion ($ S = \lbrace 0,
                    1, 2, 3, 4 \rbrace, \mathcal{S} = \left[\lbrace 0, 1
                        \rbrace, \lbrace 2 \rbrace, \lbrace 3 \rbrace,
                \lbrace 4 \rbrace\right] $).
                \begin{figure}[H]
                    \centering
                    \begin{subfigure}{0.48\textwidth}
                        \centering
                        \begin{tikzpicture}[node distance = \Ysep and \Xsep]
                            \node[base] (0) {0};
                            \node[base, above = of 0] (1) {1};
                            \node[base, right = of 0] (2) {2};
                            \node[base, right = of 2] (3) {3};
                            \node[base, above = of 3] (4) {4};

                            \path[] (0) -- (1) node[midway]  (S) {\textbf{S}};
                            \node[fitbox, fit=(0) (1) (2) (3) (4) (S)] (box) {};

                            \path[uedge]
                            (0) edge (2)
                            (1) edge (2)
                            (1) edge (4)
                            (2) edge (3)
                            (3) edge (4);

                            \path[uedge, transparent] (0) edge [bend right] (3);
                        \end{tikzpicture}
                    \end{subfigure}
                \end{figure}
        \end{itemize}
    \end{step}

    \begin{step}[Edge replacement, Lines
        \ref{algoline:final-loop-start}--\ref{algoline:final-loop-end}]
        Since all remaining edges connect nodes in different
        components of the partition $\mathcal{S}$, we orient them
        from components with smaller indices to those with larger indices.
        \begin{figure}[H]
            \centering
            \begin{tikzpicture}[node distance = \Ysep and \Xsep]
                \node[base] (0) {0};
                \node[base, above = of 0] (1) {1};
                \node[base, right = of 0] (2) {2};
                \node[base, right = of 2] (3) {3};
                \node[base, above = of 3] (4) {4};

                \path[] (0) -- (1) node[midway]  (S1) {$ S_1 $};
                \node[fitbox, fit=(0) (1) (S1)] (box) {};

                \node[above = 0.1cm of 2] (S2) {$ S_2 $};
                \node[fitbox, fit=(2) (S2)] (box) {};

                \node[above left = -0.15cm of 3] (S3) {$ S_3 $};
                \node[fitbox, fit=(3) (S3)] (box) {};

                \node[below left = -0.15cm of 4] (S4) {$ S_4 $};
                \node[fitbox, fit=(4) (S4)] (box) {};

                \path[dedge]
                (0) edge (2)
                (1) edge (2)
                (1) edge (4)
                (2) edge (3)
                (3) edge (4);

            \end{tikzpicture}
        \end{figure}
    \end{step}

    \begin{step}[End, Line~\ref{algoline:end}]
        The essential graph is now fully recovered.
        \begin{figure}[h!]
            \centering
            \begin{tikzpicture}[node distance = \Ysep and \Xsep]
                \node[base] (0) {0};
                \node[base, above = of 0] (1) {1};
                \node[base, right = of 0] (2) {2};
                \node[base, right = of 2] (3) {3};
                \node[base, above = of 3] (4) {4};

                \path[dedge]
                (0) edge (2)
                (1) edge (2)
                (1) edge (4)
                (2) edge (3)
                (3) edge (4);
            \end{tikzpicture}
        \end{figure}
    \end{step}
\end{example}

 \section{Example of correctness without faithfulness}\label{app:consistency}

The following example serves to prove Proposition~\ref{prop:algo-assumptions}.

\begin{example}
    Suppose we have a probability distribution $ \mathbb{P} $
    modeled by the graph $ \mathcal{G} $ in
    Figure~\ref{fig:example-algo-learning} and such that the random
    variables $ X_i $ are related to each other by the following
    structural equations
    \begin{equation}
        X_j = \sum_{i < j} a_{ij} X_i + \varepsilon_j
    \end{equation}
    where $ \varepsilon_j \sim \mathcal{N}(0,1)$ and $ a_{ij} \in [-1,1] $ are
    parameters with $ a_{ij} \neq 0 $ if and only if $ X_i$ causes $ X_j $.

    \begin{figure}[htpb]
        \centering
        \newcommand{\labeledge}[3]{
    \draw[dedge] (#1) -- node[midway, auto, font=\small] {$ a_{#1#2} $} (#2)
}

\begin{tikzpicture}
    \node[base] (0) at (0,0) {$ X_0 $};
    \node[base] (1) at (0,2) {$ X_1 $};
    \node[base] (2) at (6,1) {$ X_2 $};
    \node[base] (3) at (3,1) {$ X_3 $};

    \labeledge{0}{1}{0.5};
    \labeledge{0}{3}{0.5};

    \labeledge{1}{3}{0.5};

    \labeledge{2}{3}{0.5};
\end{tikzpicture}
         \caption{
            An example demonstrating how Algorithm~\ref{alg:learning}
            can recover the correct causal graph when the generating
            probability distribution is Markov but not faithful to that graph.
        }
        \label{fig:example-algo-learning}
    \end{figure}

    According to Proposition~4.1.\ in \cite{uhler2013geometry}, the condition:
    \begin{equation}\label{eq:faithfulness-violation}
        a_{03}a_{13} - a_{01} = 0
    \end{equation}
    holds if and only if the conditional independence $ X_0 \CI X_1
    \mid \{X_2, X_3 \} $ is induced by $ \mathbb{P} $.
    Therefore, if Equation~\eqref{eq:faithfulness-violation} is
    satisfied, this constitutes a faithfulness violation.

    By selecting $ a_{ij} $ such that this is the only faithfulness
    violation present, Algorithm~\ref{alg:learning} successfully
    recovers the true essential graph $ \mathcal{E}(\mathcal{G}) $.
    This is due to the algorithm identifying $X_3$ as a downstream
    node very early.
    Specifically, for $S = \varnothing$, the algorithm finds that
    $X_1$ and $X_2$ are independent, but become dependent when
    conditioning on $X_3$.
    This reveals the v-structure $X_1 \rightarrow X_3 \leftarrow X_2$
    and places $X_3$ in $\vstructset^0$.
    Because $X_3$ is identified as downstream, it is excluded from
    certain future conditioning sets, meaning the test ${X_0 \CI X_1
    \mid \{X_2, X_3 \}}$---which represents the only faithfulness
    violation---is never performed by Algorithm~\ref{alg:learning}.
\end{example}
 \section{Implementation details}\label{app:code}

All graph manipulations required by Algorithm~\ref{alg:learning}
are implemented using the NetworkX {P}ython library
(\citealp{networkx2008}), available under the 3-clause BSD license.

We now provide the specific implementations of the steps in Algorithm
\ref{alg:learning}.

\begin{algorithm}
    \caption{Algorithm used to remove edges from $ \mathcal{E} $
    in Algorithm \ref{alg:learning}}
    \label{alg:remove-edges}
    \begin{algorithmic}[1]
        \Require CI queries from $ \mathbb{P} $, graph
        $ \mathcal{E} $, prefix set $ S $, nodes $ V' $, condition set
        size $ m $.
        \Ensure $ \mathcal{E} $, sepset.

        \For{$ u\in V, v \in V' $ such that $ u - v $ in $ \mathcal{E} $}
        \Repeat
        \State Choose a new set $ W \subseteq V' \setminus \{u,v\} $
        with $ |W| = m $;
        \If{$ u \CI v \mid S \cup W $}
        \State Remove $ u - v $ from $ \mathcal{E} $;
        \State $ \text{sepset}({u,v}) \gets W $;
        \EndIf
        \Until{$ u $ and $ v $ are no longer adjacent or all sets $ W
        $ have been considered}
        \EndFor
        \State \Return $ \mathcal{E} $, sepset
    \end{algorithmic}
\end{algorithm}

\begin{algorithm}
    \caption{Algorithm used to calculate $ \vstructset^m $}
    \label{alg:vstructset}
    \begin{algorithmic}[1]
        \Require CI queries from $ \mathbb{P} $, graph
        $ \mathcal{E} $, prefix set $ S $, nodes $ V' $, sepset from
        Algorithm \ref{alg:remove-edges}.
        \Ensure $\vstructset^m$.

        \State $ \vstructset^m \gets \varnothing $;
        \State $ P \gets \varnothing $;

        \For{$ u \in V $, $ v \in V' $ such that $ |\text{sepset}(u,v)
        \setminus S| = m $} \Comment{Identify v-structures}
        \For{$ w $ in $ \text{adj}(\mathcal{E}, u) \cap
        \text{adj}(\mathcal{E}, v) \cap V' $}
        \If{$ w $ not in $ \text{sepset(u, v)} $}
        \State $ \vstructset^m \gets \vstructset^m \cup \{w\} $;
        \State $ P \gets P \cup \{\{u, v\}\} $;
        \EndIf
        \EndFor
        \EndFor

        \For{$ \{u, v\} $ in $ P $} \Comment{Identify descendants of
        the v-structures}
        \For{$ w $ in $ V' \setminus \vstructset^m $}
        \If{$ u \not\CI v \mid S \cup \text{sepset}(u, v) \cup \{w\} $}
        \State $ \vstructset^m \gets \vstructset^m \cup \{w\} $;
        \EndIf
        \EndFor
        \EndFor

        \State \Return $ \vstructset^m $
    \end{algorithmic}
\end{algorithm}

\begin{algorithm}
    \caption{Algorithm used to calculate $ \meekoneset^m $}
    \label{alg:meekoneset}
    \begin{algorithmic}[1]
        \Require CI queries from $ \mathbb{P} $, prefix set $S$,
        nodes $V'$, sepset from Algorithm \ref{alg:remove-edges}.
        \Ensure $\meekoneset^m$.

        \State $ \meekoneset^m \gets \varnothing $;

        \For{$ u \in S $, $ v \in V' $ such that $\text{sepset}(u,v) > 0 $}\label{fset:imp}
        \Repeat
        \State Choose a new set $ W \subseteq V' \setminus \{u,v\} $
        with $ |W| = m $;
        \If{$ u \CI v \mid S \cup W $}
        \State $ \meekoneset^m \gets \meekoneset^m \cup \{v\} $;
        \EndIf
        \Until{$ v \in \meekoneset^m $or all sets $ W $ have been considered}
        \EndFor

        \State \Return $ \meekoneset^m $
    \end{algorithmic}
\end{algorithm}

To solve the k-clique problem, we adapt the Bron-Kerbosch algorithm
\citep{bronkerbosch}, originally used for enumerating maximal cliques,
by terminating the search as soon as a clique of size $ k $ is found.

To efficiently compute the sets $ \vstructset^m $ and $ \meekoneset^m
$, we optimize the process by removing adjacencies once a separating
set is found (see Algorithm~\ref{alg:remove-edges}) and storing these sets.

The calculation of $ \vstructset^m $ (detailed in
Algorithm~\ref{alg:vstructset}) is a two-stage process.
First, it identifies initial v-structures by iterating through non-adjacent
node pairs $(u,v)$ and checking if a common neighbor $w$ is absent from
their separating set, reusing the results from Algorithm~\ref{alg:remove-edges}.
Second, it finds all descendants of these v-structures by performing
targeted additional CI tests to see which nodes $w$, when added to a
conditioning set, break a known independence.
Lemma~\ref{lem:first-in-vstructset} guarantees that that the set
$\vstructset^m$ consists exclusively of colliders within v-structures
and their descendants.

The calculation of $ \meekoneset^m $ (Algorithm \ref{alg:meekoneset})
identifies the desired nodes by leveraging the previously identified
separating sets.
The definition for $w \in \meekoneset^m$ requires two conditions: (1)
$u \not\CI w \mid S$ for some $u \in S$, and (2) $u \CI w \mid S \cup
W$ for some $W \subseteq \bar{S}$ of size $m$.
The first condition is implicitly satisfied by Line~\ref{fset:imp}.
Therefore, simply finding a separating set $W$ of size $m$ for a pair
$(u,w)$ with $u \in S$ is sufficient to place $w$ in $\meekoneset^m$.

\section{Algorithm variant with additional tests}\label{app:variant}

We introduce a variant of Algorithm \ref{alg:learning}, which we call
\texttt{\algoshortname+}, that replaces the final orientation loop
(Lines~\ref{algoline:final-loop-start}--\ref{algoline:final-loop-end})
with additional CI tests (see Algorithm~\ref{alg:additional-tests}).
This variant returns a different graph only if faithfulness
is violated.
In our experiments (Figure \ref{fig:neighborhood}), this variant acts
as a normalized baseline for comparison.

\begin{algorithm}
    \caption{Modification of the final loop in Algorithm
        \ref{alg:learning}
    (Lines~\ref{algoline:final-loop-start}--\ref{algoline:final-loop-end})}
    \label{alg:additional-tests}
    \begin{algorithmic}[1]
        \State Create empty graph $\mathcal{D}$ on the node set $V$.
        \For{$S_i$ in $\mathcal{S}=[S_1,\dots,S_m]$}
        \State Add $v-w$ to $\mathcal{D}$ iff $v\not\CI w\mid
        S_1 \cup \dots
        \cup S_i \setminus \{v,w\}$.
        \EndFor

        \For{$v \in S_i, w \in S_j$ with $i<j$}
        \State Add $v\to w$ to $\mathcal{D}$ iff $v\not\CI w\mid
        S_{1}\cup\dots\cup S_{j} \setminus \{v,w\}$.
        \EndFor

        \State \Return $ \mathcal{D} $
    \end{algorithmic}
\end{algorithm}

Next, we will show that under faithfulness, $ \mathcal{D} $ is
precisely $ \mathcal{E}(\mathcal{G}) $.
We will start by proving that if we have $ v - w $ in $ \mathcal{D} $
then $ v $ and $ w $ will be adjacent in $ \mathcal{G} $.
If we have $ v - w $ in $ \mathcal{D} $, we must have $ v, w \in
S_i $ for some $ S_i \in \mathcal{S} $.
Denote $ S = (S_1 \cup \dots \cup S_i) \setminus \{v,w\}$.
Since $ v \not\CI w \mid S$ there exists an active
path between $ v $ and $ w $ given $ S$.
Assume this path contains more than two nodes.
We have that $ S \cup \{v,w\} $ is prefix, therefore, there
must exist a collider on the path.
Any collider on the active path must also be in $ S $, however, in this
case, the adjacent nodes will also be in $ S $ and the path would
be inactive.
The only remaining case is if $ v \sim c \sim w $ is a v-structure
for some $ c \in S $ but since $ v,w \in S_i $ then $ c \in S_i $ and
by Lemma~\ref{lem:vstruct-sets} we have a contradiction.
Therefore, we have $ v \sim w $.

If we have $ v \rightarrow w $ in $ \mathcal{D} $ we must have $
v \in S_i $, $ w \in S_j $ for some $ S_i, S_j \in \mathcal{S} $
and $ i < j $.
Denote $ S = (S_1 \cup \dots \cup S_j) \setminus \{v,w\} $.
Since $ v \not\CI w \mid S$ there exists an active
path between $ v $ and $ w $ given $ S $.
We have that $ S \cup \{v,w\} $ is prefix, therefore, there must
exist a collider on the path.
Any collider in the active path must be in $ S $, however, in this
case, the adjacent nodes will also be in $ S $ and the path would
be inactive.
The only remaining case is if $ v \rightarrow c \leftarrow w $, but
that means $ c \in S_j $.
If not $ v \sim w $ then $ v \sim c \sim w $ is a v-structure, by
Lemma \ref{lem:vstruct-sets} we have $ c \in S_k $ with $ k > j $, a
contradiction.
Since $ S_1 \cup \dots \cup S_i $ is prefix we must have $ v
\rightarrow w $.

By Lemma \ref{lem:directed-edges} there are no directed edges in
the essential graph of $ \mathcal{G} $ other than those in $
\mathcal{D} $.

Finally, we will prove that if $ v \sim w $ in $ \mathcal{G} $ then $
v \sim w $ in $ \mathcal{D} $ and therefore we have all the
adjacencies.
If $ v \sim w $ in $ \mathcal{G} $ we will have $ v \not\CI w \mid W
$ for any $ W \subset V $.
With the intra and inter component tests we will also have $ v \sim w
$ in $ \mathcal{D} $.
 \section{Experimental setup}\label{app:experiments}

\textbf{Hardware.} All experiments were conducted on a single device
with 1TB of RAM.

The conditional independence test employed across all compared
algorithms was the partial correlation test (using Fisher's
z-transform) at a significance level of $ \alpha = 0.05 $, unless
otherwise specified.
However, to ensure fair comparisons, we utilized different
implementations of this test: results presented in
Figures~\ref{fig:neighborhood} are using the
implementation from the \texttt{causaldag} package
(\citealp{squires2018causaldag}), while results in Figures
\ref{fig:sergio},~\ref{fig:airfoil}~and~\ref{fig:cl-neighborhood} are
using the implementation provided by the \texttt{causal-learn}
package (\citealp{zheng2024causal}).
In both cases, the default implementation of each tester and algorithm is used.

\subsection{Linear Gaussian synthetic data}\label{app:scm-details}

In these experiments, we simulate an observational setting.
For each run, 10,000  data samples were generated from a linear
Structural Causal Model (\citealp{pearl2009causality}) with additive
Gaussian noise, defined by a randomly generated underlying causal DAG
$ \mathcal{G} $.
Edge weights for the SCM are drawn from $ [-1, -0.25] \cup [0.25, 1]
$, ensuring they are bounded away from $ 0 $.

\subsection{SERGIO}\label{app:experiments-sergio}

The SERGIO simulation framework, developed by
\citet{dibaeinia2020sergio}, enables the generation of single-cell
transcriptomics data that realistically reflects gene regulatory
networks defined by the user.

For this, we use the DS1 gene regulatory network also introduced in
\citet{dibaeinia2020sergio}, which was designed based on known
regulatory pathways in E.~coli.
This network comprises 100 genes and 258 edges.
Key structural characteristics include 10 designated regulator genes,
which are the only nodes with outgoing edges, and a maximum node
out-degree of 76.
Such a high maximum degree significantly impacts the runtime of
constraint-based algorithms like \texttt{PC} and \texttt{FCI}, which
become computationally prohibitive due to their search procedures.
Using this DS1 GRN as the ground truth, we simulated gene expression
datasets consisting of 2,700 cells.
Finally, for the conditional independence tests used by
\texttt{\algoname} and \texttt{\algoname+}, we set the significance
level to $ \alpha = 10^{-4} $.

\subsection{Real-world example}

For our real-world application, we utilize the Airfoil Self-Noise
NASA dataset, which was introduced by \citet{brooks1989airfoil}
and is available under the CC BY 4.0 license.
For this experiment, we set the significance level to $ \alpha = 10^{-4} $.

\section{Additional experiments}\label{app:additional-experiments}

This section extends our empirical evaluation of \texttt{\algoname}
and \texttt{\algoname+}.
We analyze their performance across diverse graph structures, network
sizes, and sample sizes to assess their robustness and scalability.

\subsection{Increasing neighborhood size in Erd\H{o}s-R\'enyi Graphs}

Figure~\ref{fig:cl-neighborhood} shows the execution time and
Structural {H}amming Distance.
As shown in Figure~\ref{fig:number-ci-tests}, \texttt{\algoname}
and \texttt{\algoname+} significantly reduce the number of
conditional independence tests compared to \texttt{PC}, often by more
than an order of magnitude.
Regarding accuracy, Figure~\ref{fig:edges} breaks down errors into
extra and missing edges, while Table~\ref{tab:performance-metrics} provides
complementary skeleton-level metrics, including precision, recall, and F1-score.

\begin{figure}[h!]
    \centering
    \begin{subfigure}[b]{0.48\textwidth}
        \centering
        \resizebox{\linewidth}{!}{\includegraphics{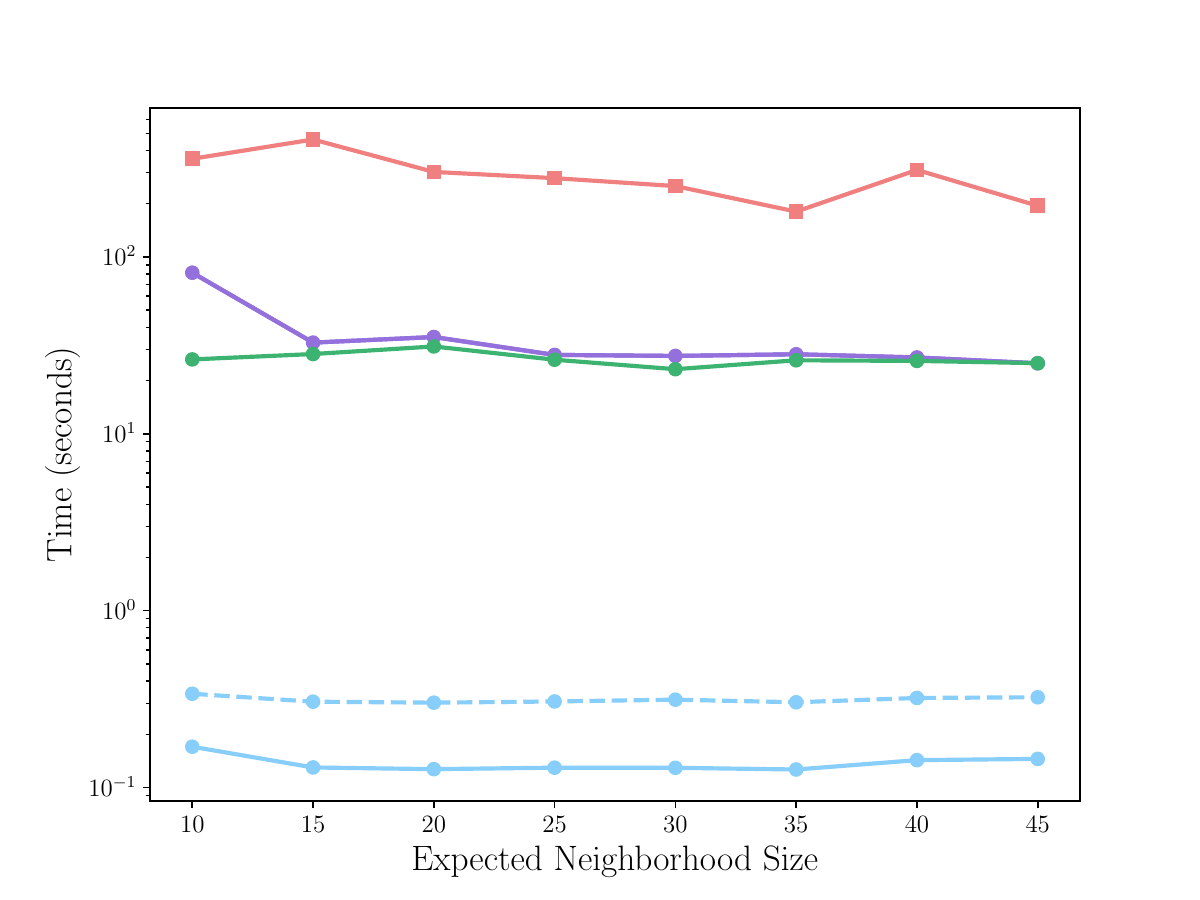}
        }
        \caption{}
    \end{subfigure}
    \hfill
    \begin{subfigure}[b]{0.48\textwidth}
        \centering
        \resizebox{\linewidth}{!}{\includegraphics{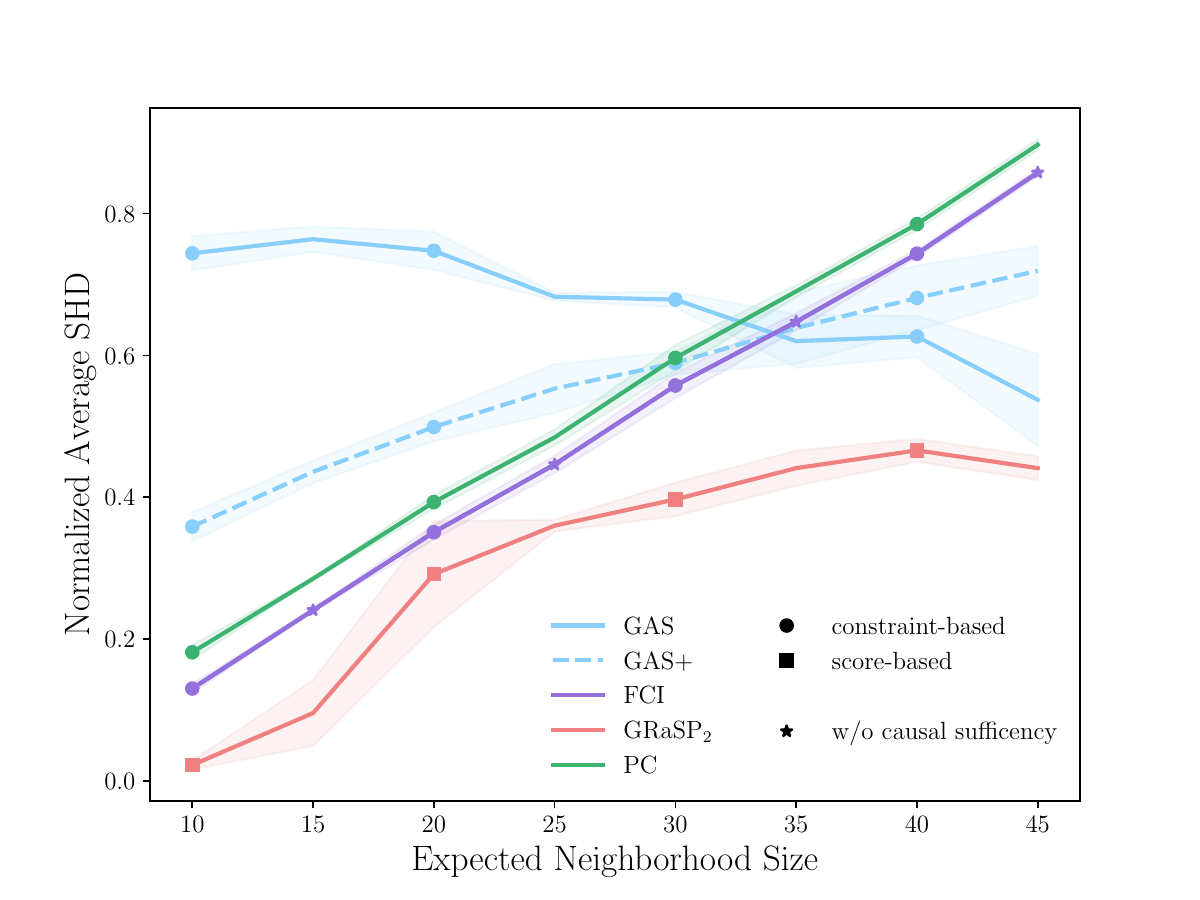}
        }
        \caption{}
        \label{fig:cl-neighborhood-accuracy}
    \end{subfigure}
    \caption{\neighborhoodcaption}
    \label{fig:cl-neighborhood}
\end{figure}

\begin{figure}
    \centering
    \resizebox{0.5\linewidth}{!}{\includegraphics{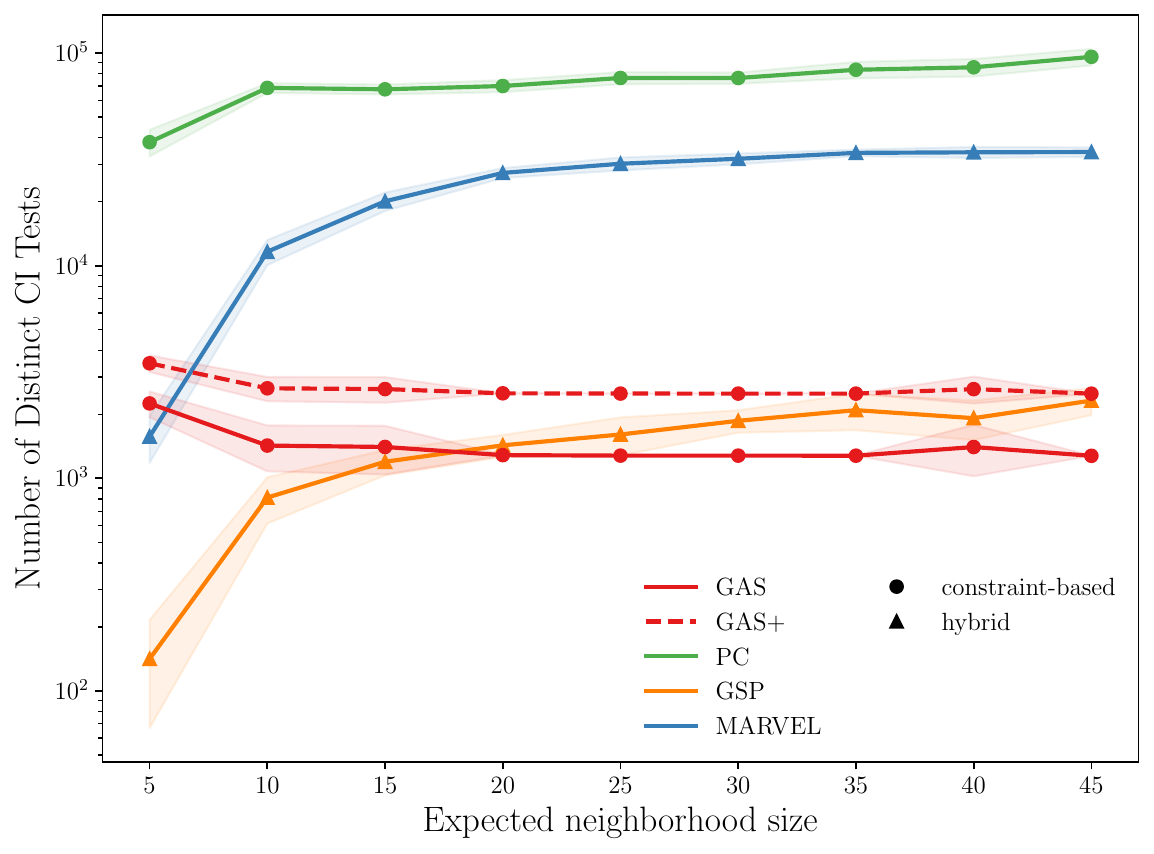}
    }
    \caption{
        Number of distinct CI tests performed across
        Erd\H{o}s-R\'enyi graphs of 50 nodes and increasing expected
        neighborhood size.
        Results are averaged over 10 runs.
    }
    \label{fig:number-ci-tests}
\end{figure}

\begin{figure}[h!]
    \centering
    \begin{subfigure}[b]{0.48\textwidth}
        \centering
        \resizebox{\linewidth}{!}{\includegraphics{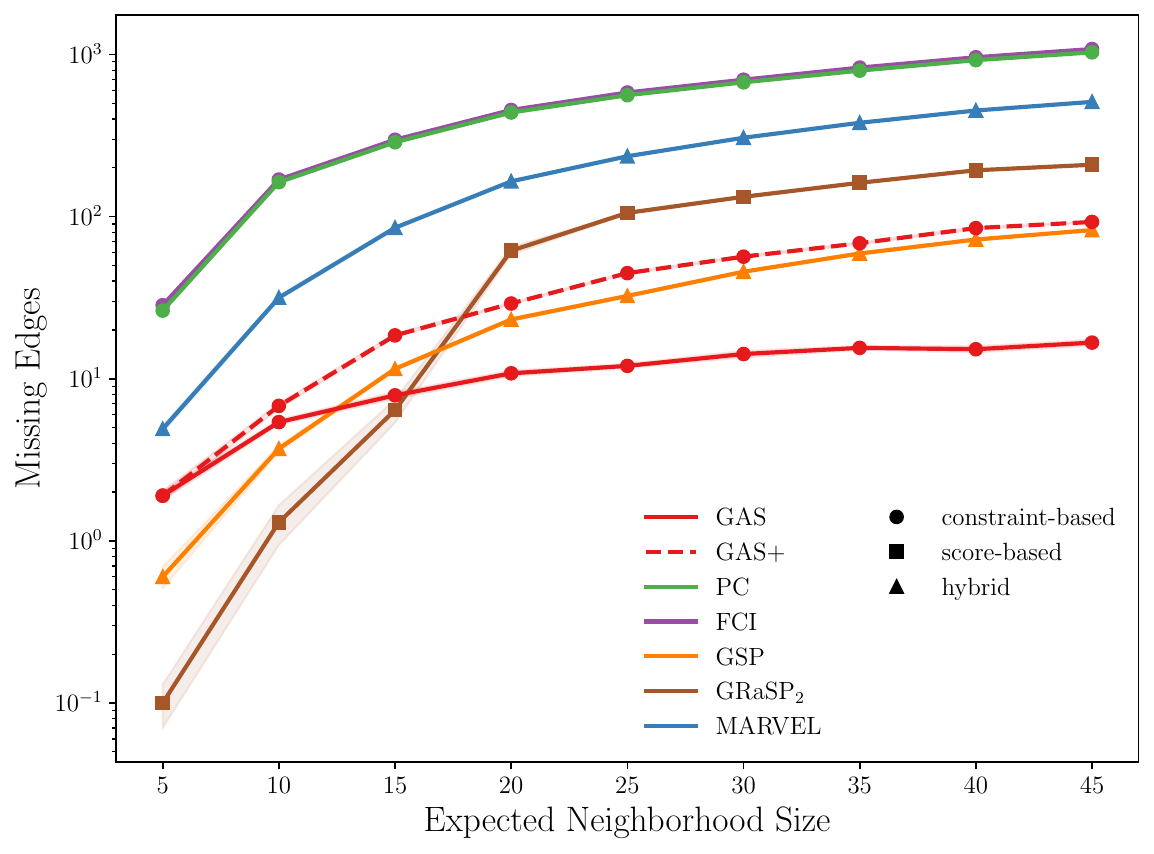}
        }
        \caption{}
    \end{subfigure}
    \hfill
    \begin{subfigure}[b]{0.48\textwidth}
        \centering
        \resizebox{\linewidth}{!}{\includegraphics{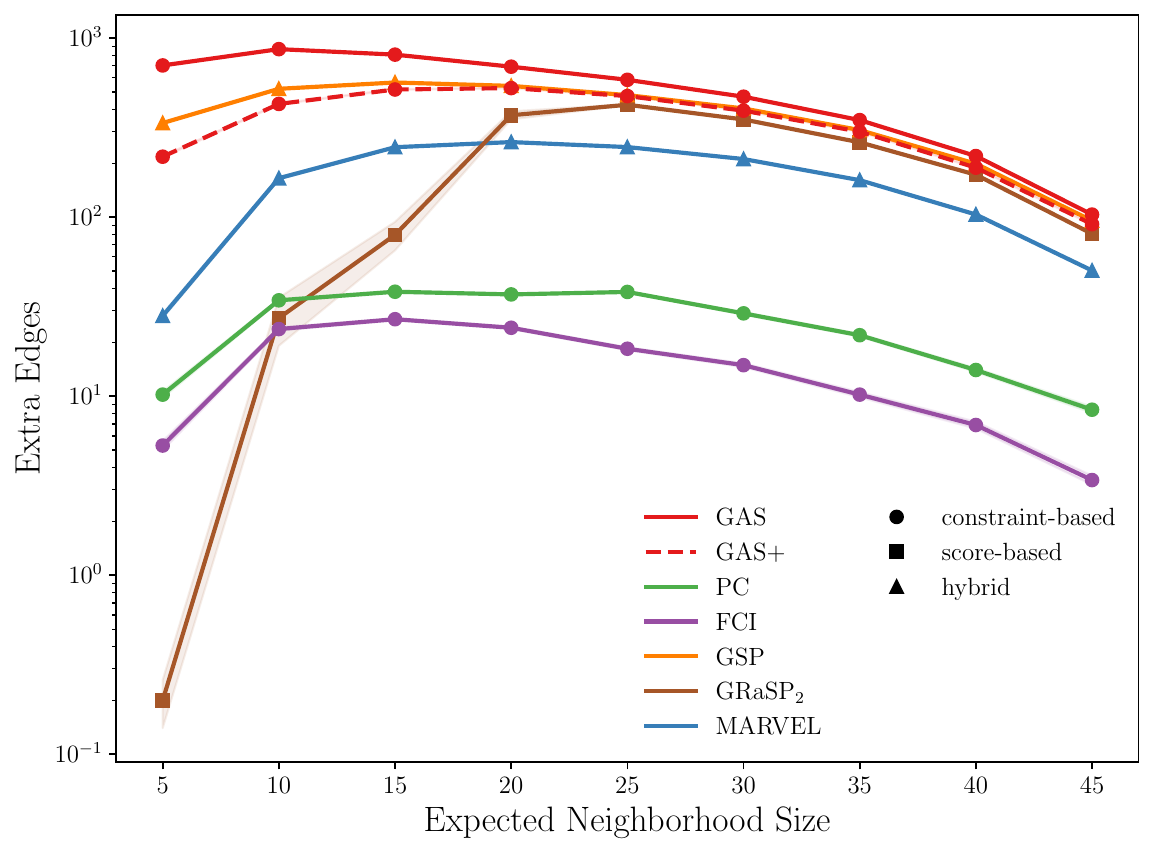}
        }
        \caption{}
    \end{subfigure}
    \caption{
        Comparison across Erd\H{o}s-R\'enyi graphs on 50 nodes
        and increasing expected neighborhood size.
        Results are averaged over 10 runs.
        (a) Number of missing edges between predicted and
        ground-truth skeletons.
        (b) Number of extra edges between predicted and
        ground-truth skeletons.
        Shaded regions show the standard deviation.
    }
    \label{fig:edges}
\end{figure}

\begin{table}[ht]
    \centering
    \caption{
        Additional skeleton metrics for \texttt{\algoname+} and
        \texttt{PC} across Erd\H{o}s-R\'enyi graphs of 50 nodes and
        increasing expected neighborhood size.
    }
    \label{tab:performance-metrics}
    \begin{tabular}{@{}ccccccc@{}}
        \toprule
        \multirow{2}{*}{\makecell{\textbf{Exp. Neigh.}\\\textbf{Size}}} &
        \multicolumn{2}{c}{\textbf{Precision}} &
        \multicolumn{2}{c}{\textbf{Recall}} &
        \multicolumn{2}{c}{\textbf{F1-Score}} \\
        \cmidrule(lr){2-3} \cmidrule(lr){4-5} \cmidrule(lr){6-7}
        & \texttt{\algoname+} & \texttt{PC} & \texttt{\algoname+} &
        \texttt{PC} & \texttt{\algoname+} & \texttt{PC} \\
        \midrule
        15 & 0.42 & \textbf{0.83} & \textbf{0.92} & 0.29 &
        \textbf{0.58} & 0.43 \\
        25 & 0.57 & \textbf{0.81} & \textbf{0.81} & 0.16 &
        \textbf{0.67} & 0.27 \\
        35 & 0.73 & \textbf{0.87} & \textbf{0.77} & 0.12 &
        \textbf{0.75} & 0.21 \\
        45 & 0.93 & \textbf{0.96} & \textbf{0.75} & 0.09 &
        \textbf{0.83} & 0.17 \\
        \bottomrule
    \end{tabular}
\end{table}

\clearpage

\subsection{Increasing number of nodes in Erd\H{o}s-R\'enyi graphs}

To evaluate scalability with respect to graph size, we conducted experiments on
Erd\H{o}s-R\'enyi graphs with an increasing number of nodes.
Figure~\ref{fig:increasing-nodes} reports the execution time and
Structural Hamming Distance (SHD), while
Figure~\ref{fig:increasing-nodes-tests} details the number of
distinct conditional independence (CI) tests. It is important to note that for the experiments with a fixed
    expected neighborhood size
    (Figure~\ref{fig:increasing-nodes-tests-fixed-neighborhood}), the
    graphs become effectively sparser as the number of nodes increases.
    \texttt{GAS} and \texttt{GAS+} demonstrate superior scaling compared
    to the standard \texttt{PC} algorithm and \texttt{MARVEL}
    \citep{mokhtarian2025recursive} when the edge probability is held
    constant (Figure~\ref{fig:increasing-nodes-tests-fixed-sparsity}).
    While the hybrid algorithm \texttt{GSP} performs the fewest distinct
    CI tests, this lower test count does not strictly correspond to lower
    computational cost, as evidenced by the superior runtime scaling of
    \texttt{GAS} and \texttt{GAS+} in Figure~\ref{fig:increasing-nodes}.

\begin{figure}[htpb]
    \centering
    \begin{subfigure}[b]{0.48\textwidth}
        \centering
        \resizebox{\linewidth}{!}{\includegraphics{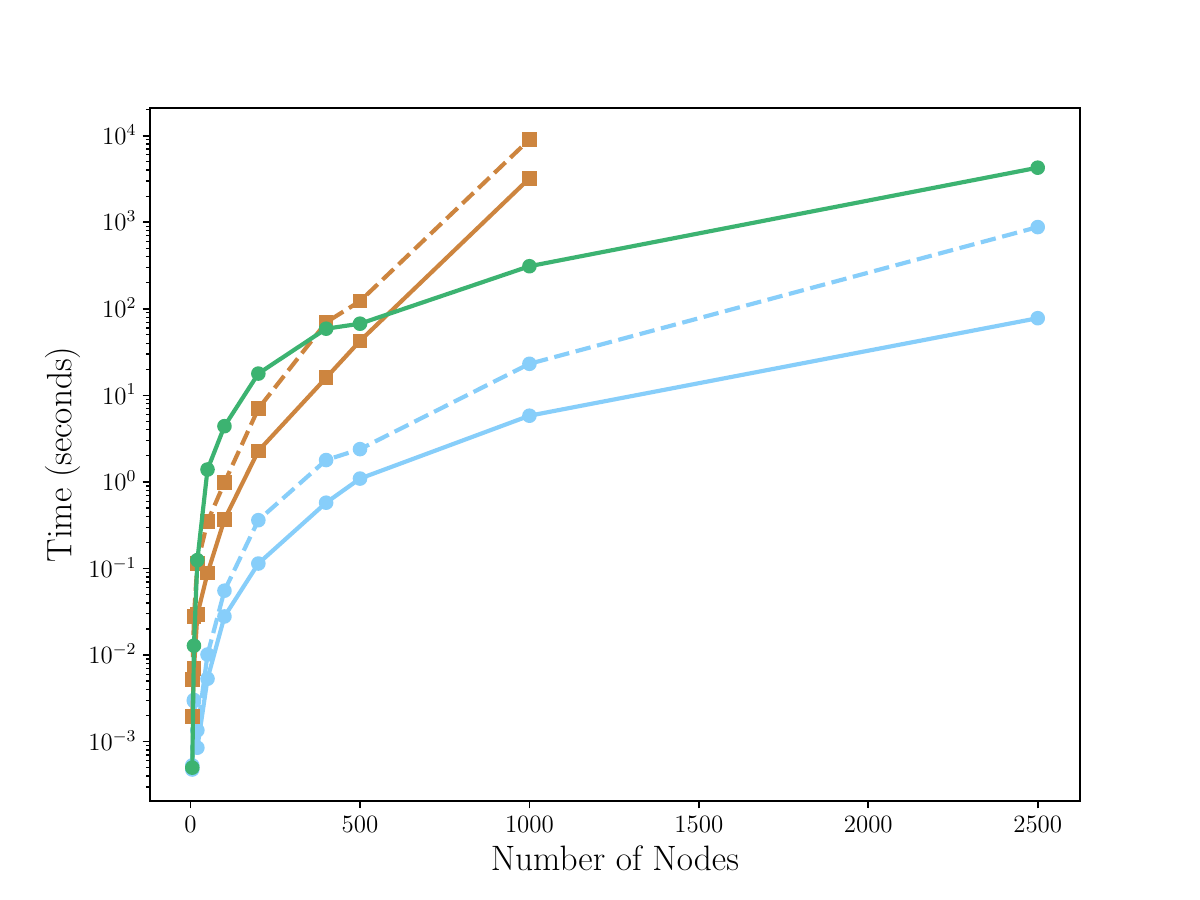}
        }
        \caption{}
    \end{subfigure}
    \hfill
    \begin{subfigure}[b]{0.48\textwidth}
        \centering
        \resizebox{\linewidth}{!}{\includegraphics{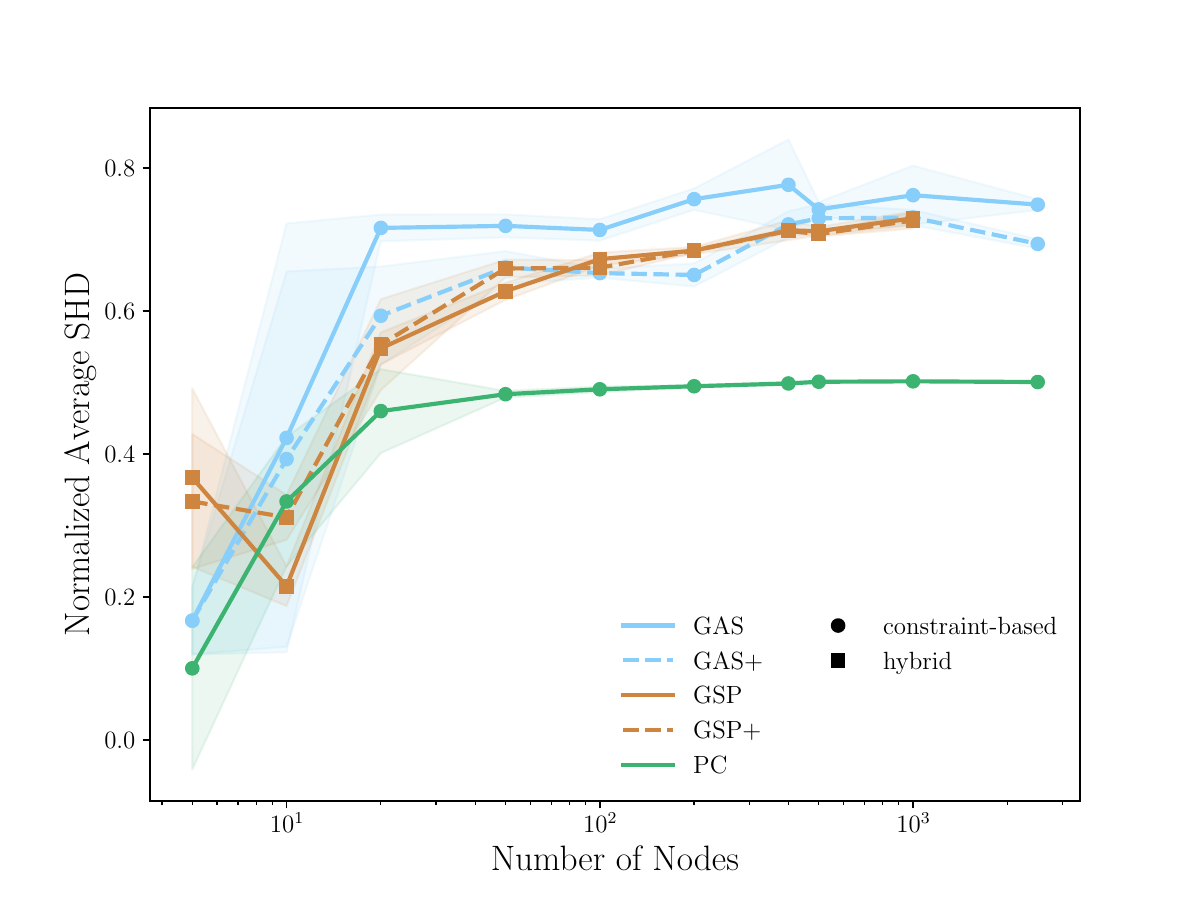}
        }
        \caption{}
    \end{subfigure}
    \hfill
    \begin{subfigure}[b]{0.48\textwidth}
        \centering
        \resizebox{\linewidth}{!}{\includegraphics{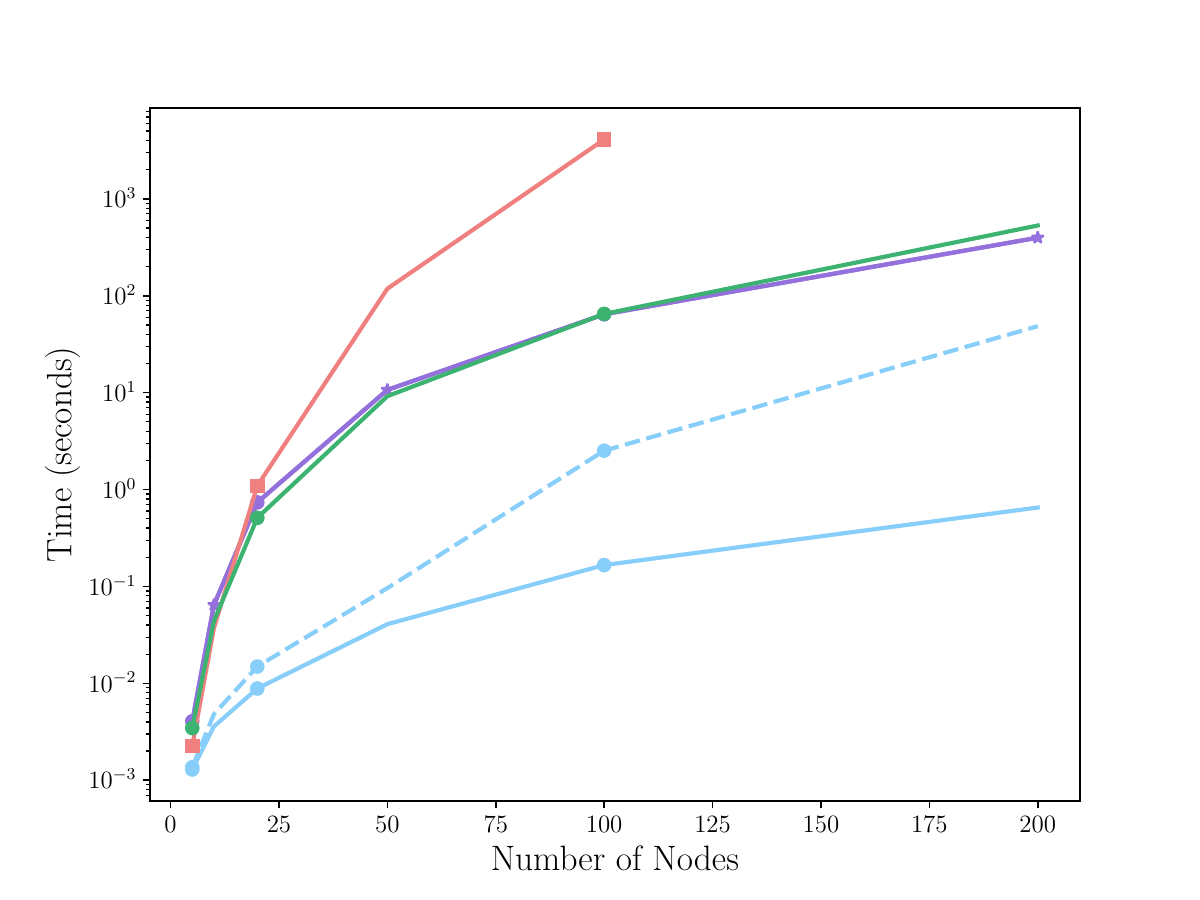}
        }
        \caption{}
    \end{subfigure}
    \hfill
    \begin{subfigure}[b]{0.48\textwidth}
        \centering
        \resizebox{\linewidth}{!}{\includegraphics{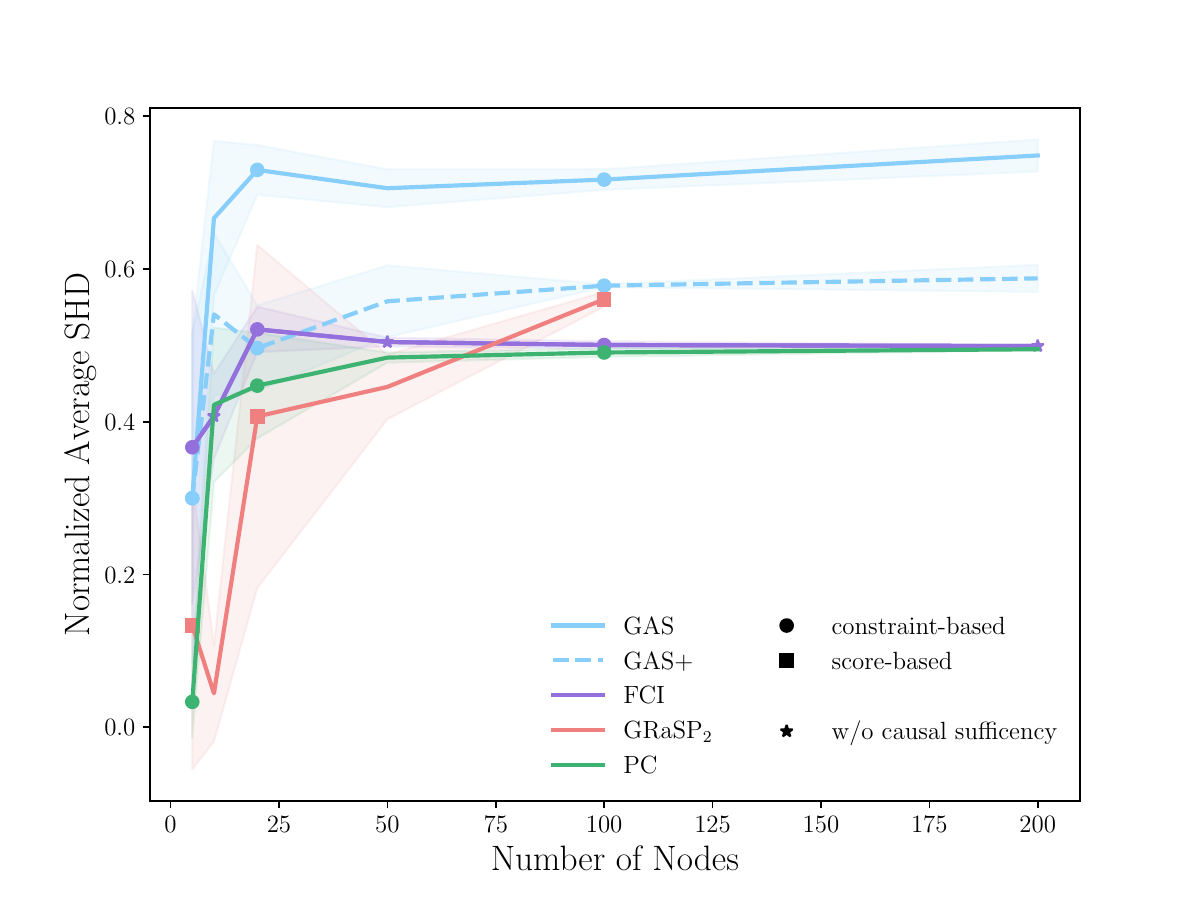}
        }
        \caption{}
    \end{subfigure}
    \caption{
        Comparison across Erd\H{o}s-R\'enyi graphs of increasing number of
        nodes and edge probability $ p=0.5 $.
        Results are averaged over 3 runs.
        (a), (c) Execution time (seconds) presented on a logarithmic scale.
        (b), (d) Accuracy comparison measured by the Structural {H}amming
        Distance between predicted
        and ground-truth graphs normalized by the number of
        possible edges.
        Shaded regions show the standard deviation.
    }
    \label{fig:increasing-nodes}
\end{figure}

\begin{figure}[htpb]
    \centering
    \begin{subfigure}[b]{0.48\textwidth}
        \centering
        \resizebox{\linewidth}{!}{\includegraphics{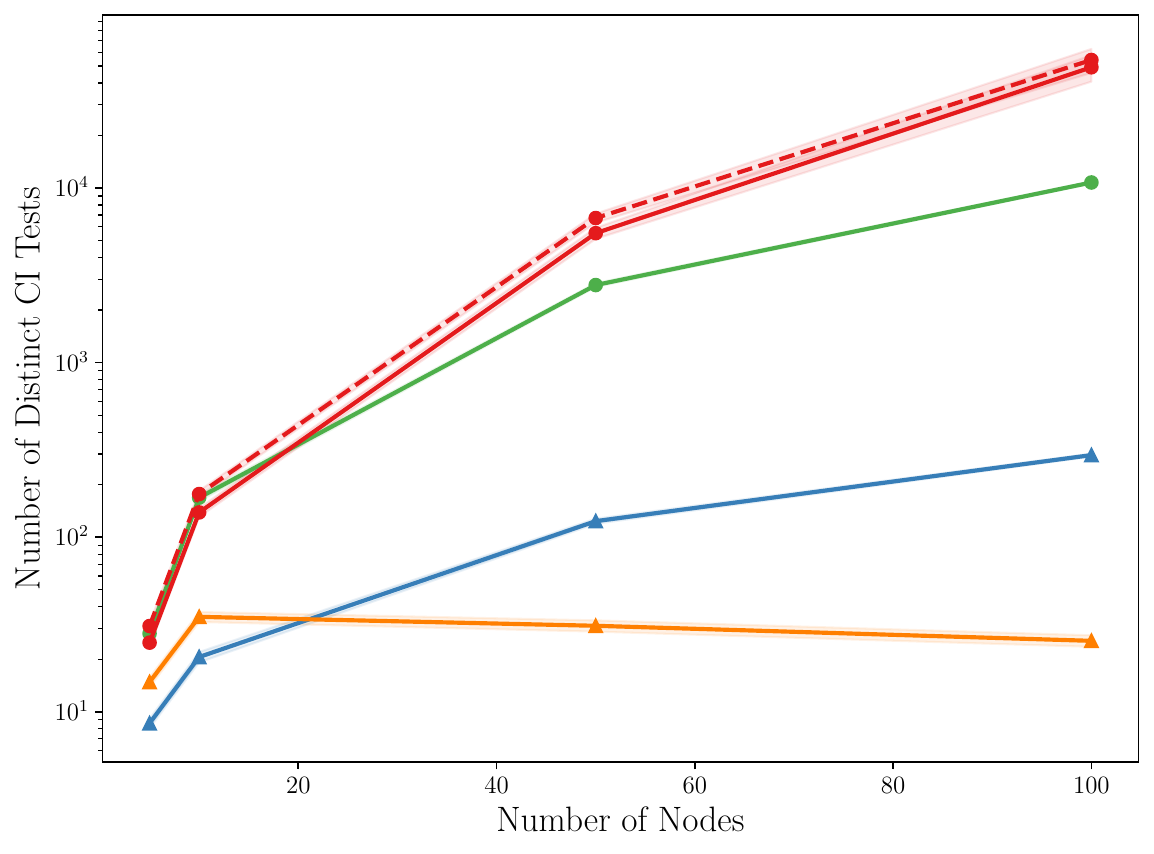}
        }
        \caption{Expected neighborhood size $2$}
        \label{fig:increasing-nodes-tests-fixed-neighborhood}
    \end{subfigure}
    \hfill
    \begin{subfigure}[b]{0.48\textwidth}
        \centering
        \resizebox{\linewidth}{!}{\includegraphics{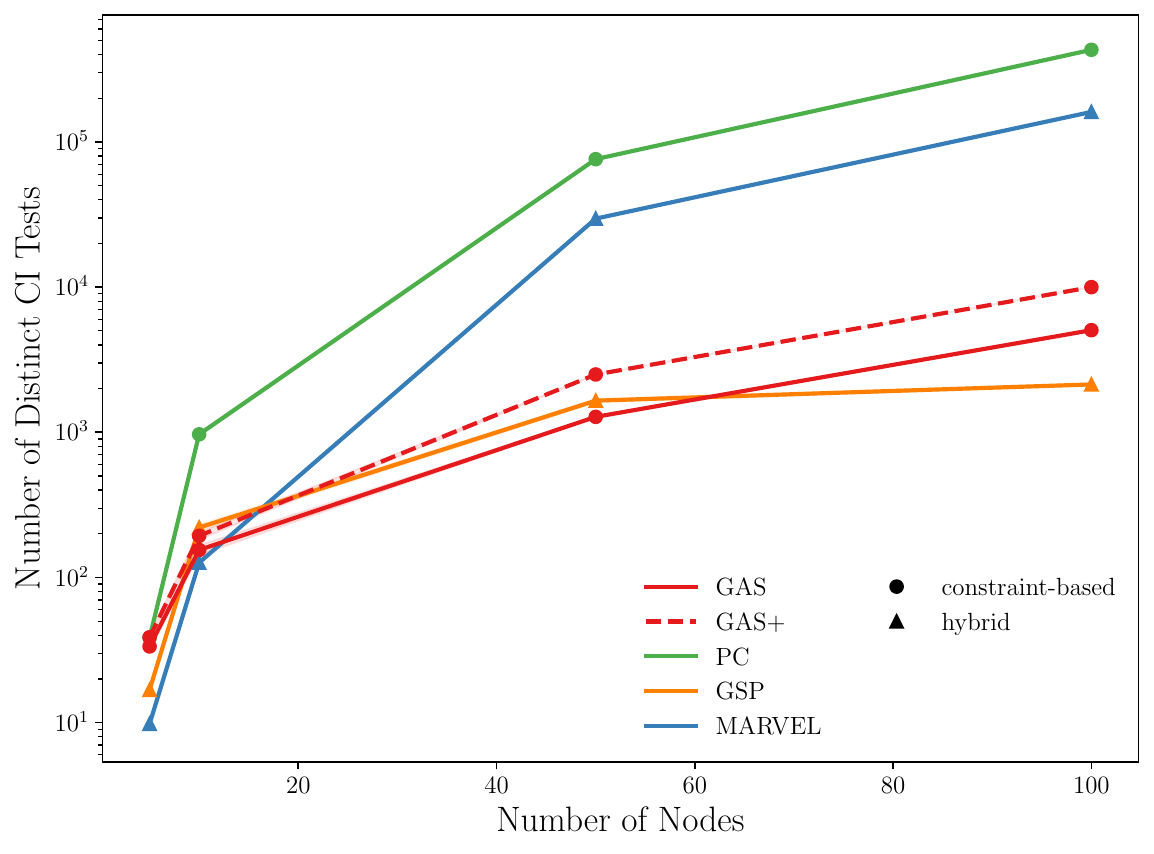}
        }
        \caption{Edge probability $p=0.5$}
        \label{fig:increasing-nodes-tests-fixed-sparsity}
    \end{subfigure}
    \caption{
        Comparison of the number of distinct conditional independence
        tests across Erd\H{o}s-R\'enyi graphs of increasing size.
        Results are averaged over 10 runs; shaded regions indicate
        the standard deviation.
    }
    \label{fig:increasing-nodes-tests}
\end{figure}

\subsection{Scale-free graphs}

Given that many real-world networks exhibit scale-free properties, we
evaluate our algorithms on graphs generated using the Barabási–Albert
(BA) model.
Table~\ref{tab:sf-graphs} reports the Structural {H}amming Distance and
execution time for graphs with increasing connectivity, controlled by
the BA model's parameter~$m$.

\begin{table}[h!]
    \centering
    \caption{
        Comparison across Barabási–Albert graphs of 40 nodes and
        increasing parameter $ m $.
        Results are averaged over 3 runs.
    }
    \label{tab:sf-graphs}
    \begin{tabular}{@{}cc ccc ccc@{}}
        \toprule
        \multirow{2}{*}{\textbf{m}} &
        \multirow{2}{*}{\makecell{\textbf{Exp.
        Neigh.}\\\textbf{Size}}} &
        \multicolumn{3}{c}{\textbf{Structural {H}amming Distance}} &
        \multicolumn{3}{c}{\textbf{Execution Time (s)}} \\
        \cmidrule(lr){3-5} \cmidrule(lr){6-8}
        & & \texttt{PC} & \texttt{\algoname} & \texttt{\algoname+}
        & \texttt{PC} & \texttt{\algoname} & \texttt{\algoname+} \\ \midrule
        5 & 8.57 & \textbf{105.00} & 380.67 & 283.33 & 1.7576 &
        \textbf{0.0377} & 0.0432 \\
        10 & 14.29 & \textbf{225.00} & 426.00 & 289.67 & 1.4815 &
        \textbf{0.0085} & 0.0138 \\
        15 & 17.14 & 297.00 & 477.33 & \textbf{296.33} & 1.7844 &
        \textbf{0.0074} & 0.0192 \\
        20 & 17.14 & 305.00 & 450.00 & \textbf{294.67} & 3.3219 &
        \textbf{0.0071} & 0.0150 \\
        \bottomrule
    \end{tabular}
\end{table}

\subsection{Varying sample size}

Finally, we investigate the impact of sample size on the accuracy of
the learned graph structures.
We generated datasets of varying sizes from a 50-node
Erd\H{o}s-R\'enyi graph with an edge probability of $p=0.5$.
Table~\ref{tab:sample-size} shows the skeleton Structural {H}amming
Distance for each method.

\begin{table}[h!]
    \centering
    \caption{
        Skeleton Structural {H}amming Distance between the
        predicted and ground-truth graphs for increasing sample
        sizes.
        Data was generated from $50$-node Erd\H{o}s-R\'enyi graphs
        with an edge probability of $p=0.5$.
        Results are averaged over 3 runs.
    }
    \label{tab:sample-size}
    \begin{tabular}{lccc}
        \toprule
        \textbf{Number of Samples} & \texttt{PC} &
        \texttt{\algoname} & \texttt{\algoname+} \\
        \midrule
        50      & \textbf{568.33} & 585.00 & 598.00 \\
        100     & \textbf{567.00} & 579.67 & 573.33 \\
        500     & 540.00 & 597.67 & \textbf{514.33} \\
        1000    & 554.33 & 596.33 & \textbf{500.33} \\
        5000    & 523.00 & 607.00 & \textbf{492.33} \\
        10000   & 538.00 & 604.00 & \textbf{476.33} \\
        100000  & 534.67 & 605.67 & \textbf{508.33} \\
        \bottomrule
    \end{tabular}
\end{table}

\end{document}